\PassOptionsToPackage{table}{xcolor}
\documentclass{article}
\usepackage{iclr2027_conference,times}

\usepackage{amsmath,amsfonts,bm}

\def\eqref#1{equation~\ref{#1}}
\def\1{\bm{1}}

\DeclareMathAlphabet{\mathsfit}{\encodingdefault}{\sfdefault}{m}{sl}
\SetMathAlphabet{\mathsfit}{bold}{\encodingdefault}{\sfdefault}{bx}{n}

\usepackage{hyperref}
\usepackage{graphicx} % DO NOT CHANGE THIS
\usepackage{natbib}  % DO NOT CHANGE THIS AND DO NOT ADD ANY OPTIONS TO IT
\usepackage{caption} % DO NOT CHANGE THIS AND DO NOT ADD ANY OPTIONS TO IT
\usepackage{wrapfig}
\usepackage{microtype}
\usepackage{graphicx}
\usepackage{subcaption}
\usepackage[table]{xcolor}
\usepackage{booktabs}
\usepackage{multirow}
\usepackage{tabularx}
\usepackage{adjustbox}
\usepackage{makecell}
\usepackage{array}

\usepackage{amsmath}
\usepackage{amssymb}

\usepackage{comment}

\usepackage{mathtools}
\usepackage{amsthm}
\usepackage[textsize=tiny]{todonotes}
\usepackage{algorithm}
\usepackage[ruled,vlined,linesnumbered,algo2e]{algorithm2e}
\usepackage{tcolorbox}
\usepackage{marvosym}
\usepackage{pifont}
\usepackage{tikz}
\usetikzlibrary{calc}
\usepackage{xurl}
\usepackage[capitalize,noabbrev]{cleveref}
\usepackage{algorithm}
\usepackage{algorithmic}
\usepackage{arydshln}
\definecolor{LightRed}{RGB}{255,230,230}
\definecolor{LightBlue}{RGB}{225,240,255}
\theoremstyle{plain}
\newtheorem{assumption}{Assumption}
\newtheorem{proposition}{Proposition}
\newtheorem{lemma}{Lemma}
\newtheorem{theorem}{Theorem}

\newtheorem{remark}{Remark}
\newtheorem{corollary}{Corollary}
\newtheorem*{metatheorem*}{Meta Theorem}
\theoremstyle{remark}

\definecolor{step1bg}{HTML}{FFF8DC}
\definecolor{step2bg}{HTML}{D4EAE0}
\definecolor{step3bg}{HTML}{D9ECFC}
\usetikzlibrary{calc}

\iclrfinalcopy  % 取消匿名，显示作者与单位

\definecolor{LightRed}{RGB}{255,182,193} 
\definecolor{LightBlue}{RGB}{173, 216, 230}
\definecolor{LightGreen}{RGB}{144, 238, 144}

\colorlet{shadecolor}{LightGreen}
\colorlet{shadecolor}{LightBlue}
\colorlet{shadecolor}{LightRed}
\usepackage{framed}
\newtcolorbox{graybox}[1][]{
	colback=gray!2,
	colframe=gray!70,
	boxrule=0.7pt,
	arc=2mm,
	left=0mm, right=0mm, top=0mm, bottom=0mm,
	halign=center,  
	#1
}

\definecolor{LightRed}{RGB}{255,182,193}
\definecolor{LightBlue}{RGB}{173,216,230}

\newtcolorbox{redbox}[1][]{
	colback=LightRed!30!white,
	colframe=LightRed,
	boxrule=0.7pt,
	arc=2mm,
	left=0mm, right=0mm, top=0mm, bottom=0mm,
	halign=center,
	#1
}

\newtcolorbox{bluebox}[1][]{
	colback=LightBlue!30!white,
	colframe=LightBlue,
	boxrule=0.7pt,
	arc=2mm,
	left=0mm, right=0mm, top=0mm, bottom=0mm,
	halign=center,
	#1
}

\usepackage{newfloat}
\usepackage{listings}
\DeclareCaptionStyle{ruled}{labelfont=normalfont,labelsep=colon,strut=off} % DO NOT CHANGE THIS
\floatstyle{ruled}
\newfloat{listing}{tb}{lst}{}
\floatname{listing}{Listing}
\usepackage{array}
\usepackage{booktabs}

\title{
	FedLore: Communication and Memory Efficient  Federated Learning  via Shared Gradient Low-Rank  Projection
}

\author{ Junkang Liu\textsuperscript{1}\\
	Tianjin University\\
	\texttt{\{junkangliukk\}@gmail.com}
}

\begin{document}

\maketitle
	\pagestyle{plain}
\thispagestyle{plain}

\begin{abstract}
Federated training of foundation models is constrained by
client memory and communication costs.
LoRA-based methods reduce these costs through low-rank
adapters, but their fixed rank budget can limit adaptation.
Gradient low-rank optimization offers greater flexibility,
yet independently chosen client subspaces create a problem
we term \emph{subspace fragmentation}: local projections
interact with data heterogeneity to bias aggregated
directions, while aggregation can increase update rank
and communication cost.
Thus, accurate local gradient compression need not
preserve global descent.
We propose \texttt{FedLore}, which shares a low-rank
optimization basis within each round and refreshes it
across rounds.
The shared basis enables exact aggregation in low-rank
coordinates and eliminates the identified projection bias.
Subspace refresh allows the accumulated model update
to exceed the per-round rank budget.
We characterize the aggregation bias and establish an
$O(T^{-1/2})$ stationarity bound for the projected-SGD
variant under a global-gradient coverage condition
and standard smoothness and variance assumptions,
with bounded gradient heterogeneity.
Experiments on vision and language tasks, including
federated pre-training, show that \texttt{FedLore}
outperforms the evaluated low-rank adapter baselines
and matches or exceeds full-parameter training,
while reducing communication and optimizer-state memory.
\end{abstract}

% Uncomment the following to link to your code, datasets, an extended version or similar.
% You must keep this block between (not within) the abstract and the main body of the paper.
% Make sure that you do not de-anonymize yourself with these links.
% \begin{links}
%     \link{Code}{https://aaai.org/example/code}
%     \link{Datasets}{https://aaai.org/example/datasets}
%     \link{Extended version}{https://aaai.org/example/extended-version}
% \end{links}

\section{Introduction}
\vspace{-2mm}

Foundation models have become central to modern vision and
language systems, yet much of the data needed for further
training resides on private devices or across institutions
with restricted data access
\cite{bommasani2021opportunities}.
Federated learning (FL) enables collaborative training
without centralizing these data
\cite{mcmahan2017fedavg,kairouz2021advances}.
Scaling this paradigm to foundation models, however,
requires addressing two coupled resource bottlenecks:
maintaining optimizer states on memory-constrained clients
and repeatedly exchanging high-dimensional model updates.
Full-parameter federated optimization (\texttt{FedFull})
incurs substantial costs on both fronts, particularly with
adaptive optimizers such as Adam/AdamW
\cite{kingma2014adam}.

A common approach combines FL with parameter-efficient
fine-tuning, particularly LoRA \citep{hu2021lora}.
Methods such as FedIT \cite{zhang2023towards} reduce
optimizer-state memory and communication by training and
exchanging low-rank adapters
\citep{wu2024fedlora,wang2024flora,yi2023pfedlora}.
For a standard rank-$r$ adapter, however, the model
correction remains rank-constrained, even though its
row and column spaces can evolve during training.
This constraint can limit adaptation when heterogeneous
data or training from scratch requires updates spanning
many directions.
The key question is therefore:
\emph{Can federated training retain the resource efficiency
	of low-rank updates without imposing a fixed rank budget
	on the accumulated model change?}

\begin{comment}

\begin{figure}[tb]
	\centering
	\vspace{-4mm}
	\includegraphics[width=1\textwidth]{figure/Fedlore.png}
	\vspace{-2mm}
	\caption{Fixed-rank adaptation versus accumulated low-rank updates.
		A standard LoRA adapter constrains the rank of the model
		correction, although its row and column spaces can evolve.
		\texttt{FedLore} shares a low-rank update basis within each
		round and refreshes it across rounds, allowing the
		accumulated model change to exceed the per-round rank.}
	\label{fig:fedit_vs_fedlore}
	\vspace{-3mm}
\end{figure}
\end{comment}

\begin{figure*}[tb]
	\centering
	
	\begin{subfigure}[b]{0.24\textwidth}
		\includegraphics[
		width=\linewidth,
		trim=6 5 6 5,
		clip
		]{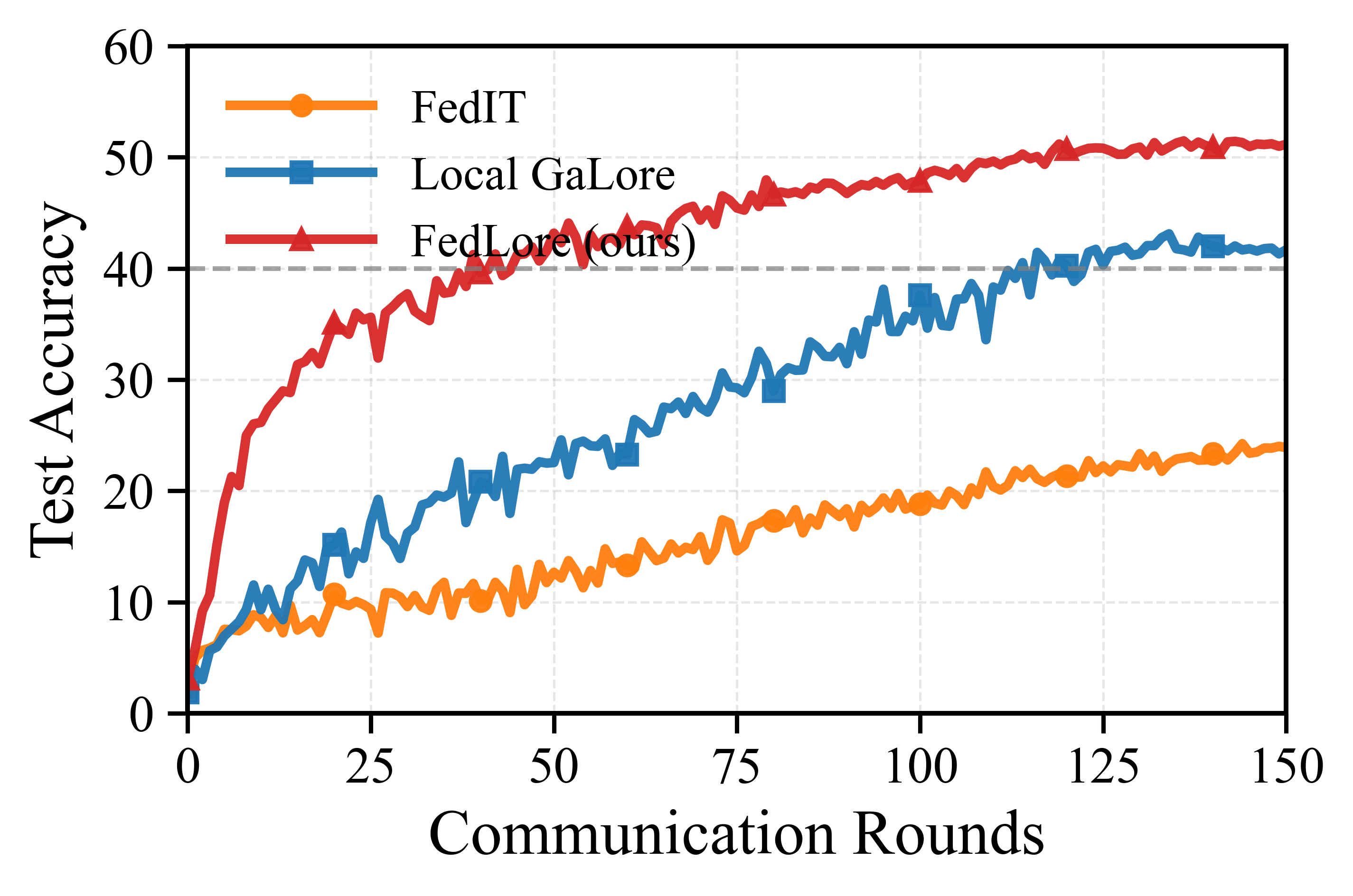}
		\caption{Test accuracy (\%)}
		\label{fig:fedlore_acc}
	\end{subfigure}
	\hspace{-1mm}
	\begin{subfigure}[b]{0.24\textwidth}
		\includegraphics[
		width=\linewidth,
		trim=6 5 6 5,
		clip
		]{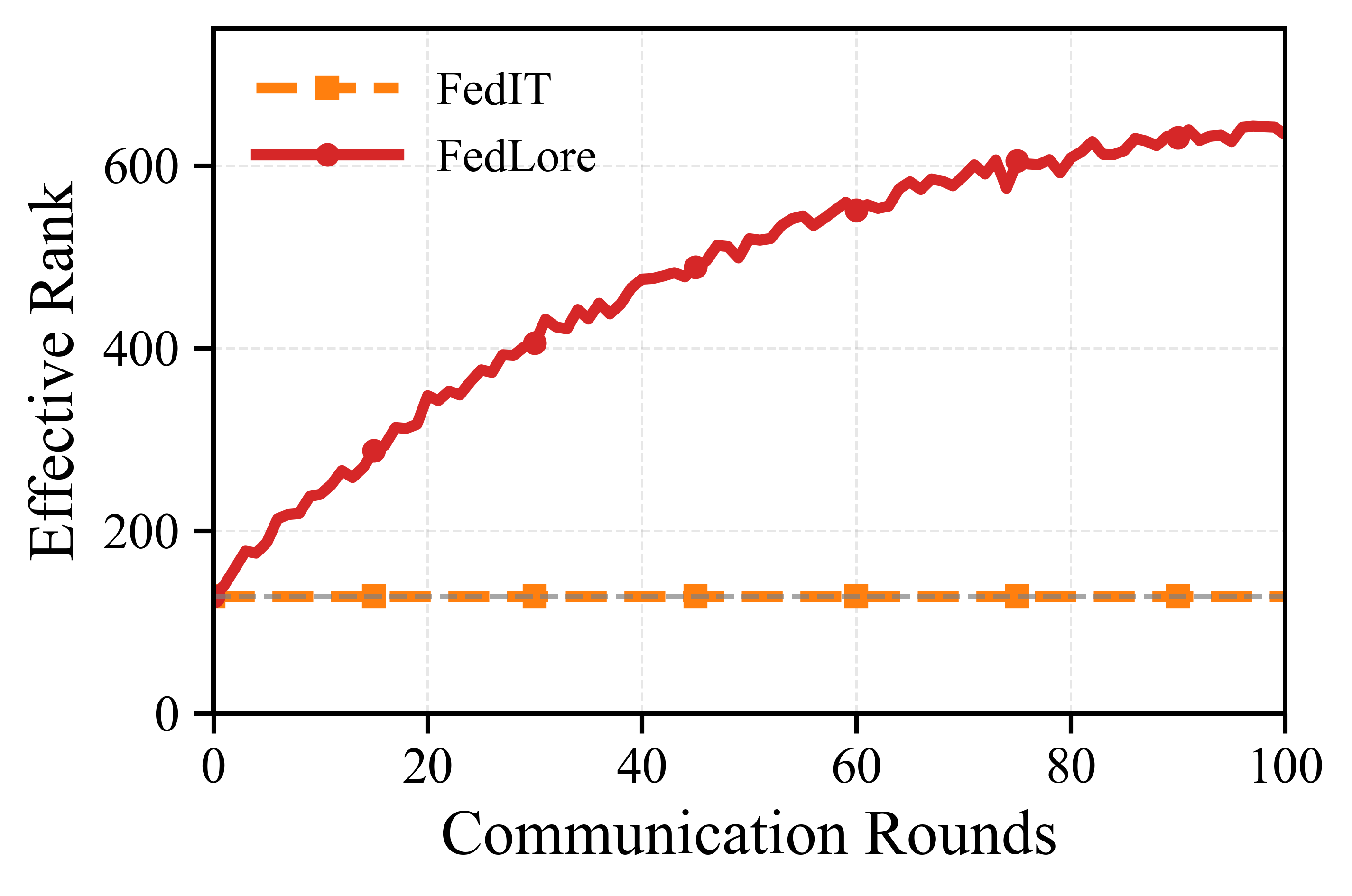}
		\caption{Effective Rank}
		\label{fig:fedlore_rank}
	\end{subfigure}
	\hspace{-1mm}
	\begin{subfigure}[b]{0.24\textwidth}
		\includegraphics[
		width=\linewidth,
		trim=6 5 6 5,
		clip
		]{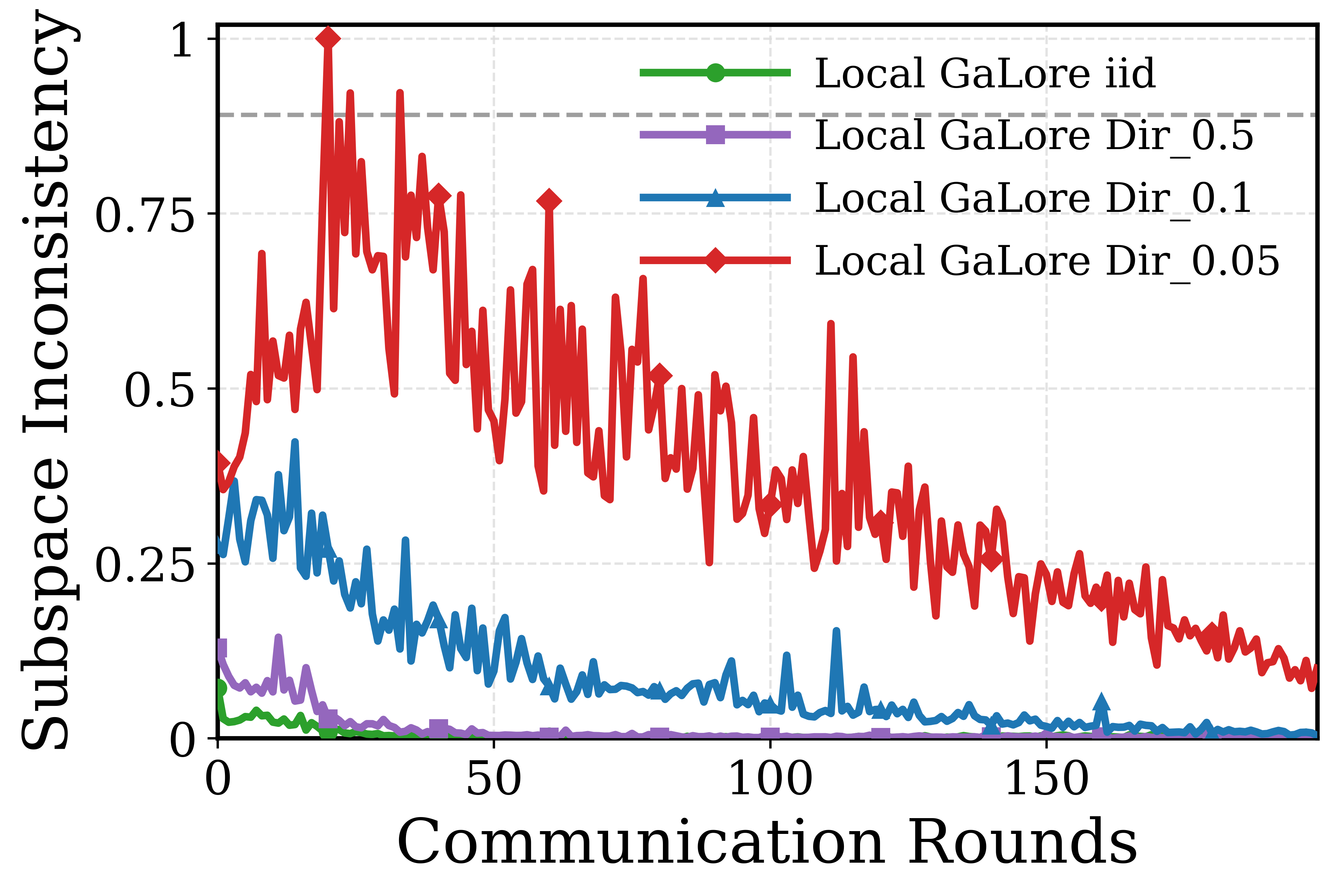}
		\caption{Space fragmentation}
		\label{fig:galore_drift}
	\end{subfigure}
	\hspace{-1mm}
	\begin{subfigure}[b]{0.245\textwidth}
		\includegraphics[
		width=\linewidth,
		%trim=6 5 6 5,
		%clip
		]{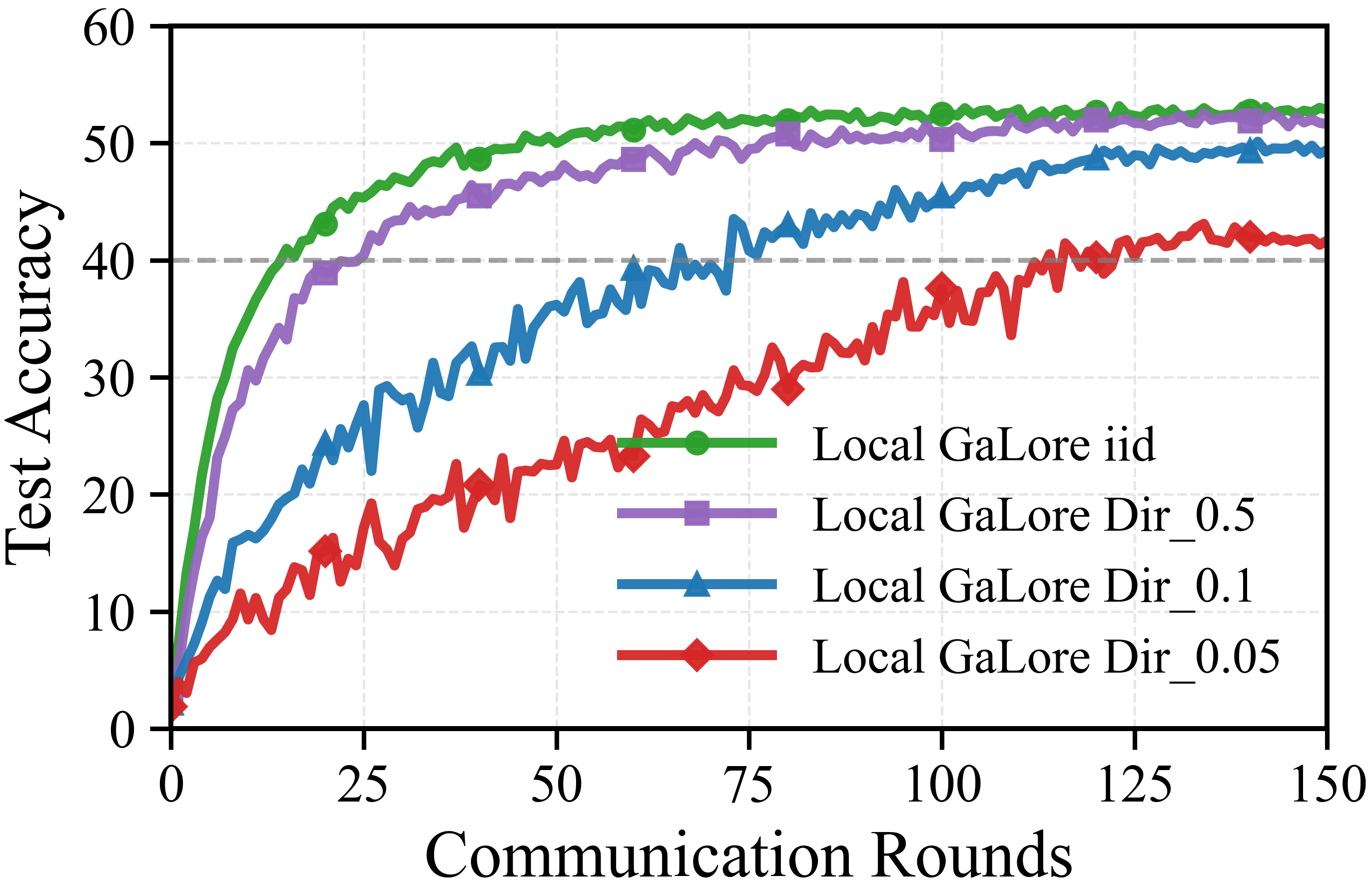}
		\caption{Test accuracy (\%)}
		\label{fig:galore_acc}
	\end{subfigure}
	
	\vspace{-1mm}
	\caption{
		Comparison between \texttt{FedLore} and Local GaLore and FedIT on ViT-Tiny over CIFAR-100 with non-iid data. (a) Test accuracy: \texttt{FedLore} vs FedIT vs Local GaLore. (b) Effective Rank: Dynamic expansion of optimization space of \texttt{FedLore}. (c) Subspace fragmentation of Local GaLore. We measure subspace fragmentation by the  difference
		between projection matrices: $\Delta_{\mathcal D}
		=\frac{1}{Nr}\sum_{i\in\mathcal N}
		\|\Pi_i-\bar{\Pi}\|_F^2,
		\Pi_i=P_iP_i^\top,
		\bar{\Pi}=\frac{1}{N}\sum_{i\in\mathcal N}\Pi_i$. Client-specific projections introduce severe subspace fragmentation under heterogeneous data. (d) Local GaLore degrades under heterogeneity due to misaligned client-wise optimization subspaces.
	}
	\label{fig:insight}
	\vspace{-2mm}
\end{figure*}

Gradient low-rank optimization offers a promising starting
point.
GaLore \citep{zhao2024galore} maintains optimizer states
in projected gradient coordinates while updating model
weights in the original parameter space.
This separates the dimension of the optimization state
from the rank of the accumulated model change:
successive low-rank increments can span an increasingly
rich set of directions as the projection changes.
In FL, however, local compression and global aggregation
must be designed together.
A subspace that captures a client's dominant gradient
directions need not preserve the directions required
for progress on the global objective.

\begin{wrapfigure}{r}{0.49\textwidth}
	\centering
	\vspace{-1mm}
	\includegraphics[width=\linewidth]{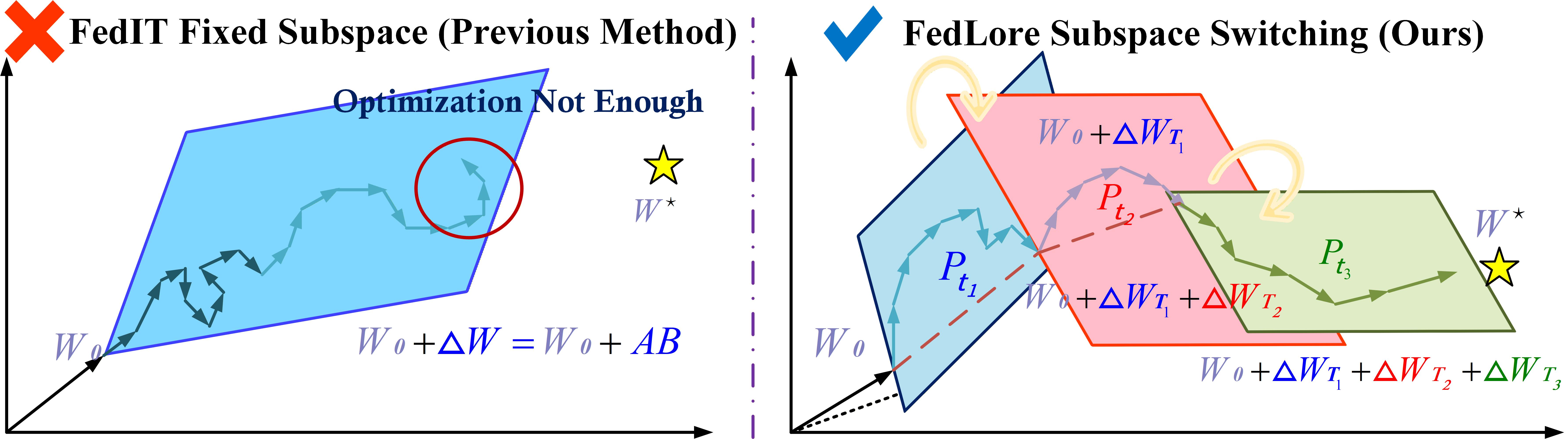}
	\vspace{-2mm}
	\caption{Fixed-rank LoRA vs ours low-rank updates.
		\texttt{FedLore} shares a low-rank update basis within each
		round and refreshes it across rounds, allowing the
		accumulated model change to exceed the per-round rank.}
	\label{fig:fedit_vs_fedlore}
	\vspace{-3mm}
\end{wrapfigure}
We study this mismatch through
\emph{subspace fragmentation}: heterogeneous clients
select different gradient subspaces, and these
client-dependent projections interact with gradient
heterogeneity during aggregation
(Figure~\ref{fig:insight}).
The central issue is not that updates expressed in
different coordinates cannot be averaged, they can
be reconstructed and aggregated in the original
parameter space.
Rather, independently chosen projections alter which
components of each client gradient survive aggregation,
potentially suppressing useful global directions.
This interaction admits an exact decomposition.
At a common model $W$, let
$G_i=\nabla f_i(W)$,
$G=N^{-1}\sum_i G_i$,
$\Pi_i=P_iP_i^\top$, and
$\bar{\Pi}=N^{-1}\sum_i\Pi_i$.
Then
$
\frac{1}{N}\sum_{i=1}^{N}\Pi_iG_i
=
\bar{\Pi}G+b,
\:
b:=
\frac{1}{N}\sum_{i=1}^{N}
(\Pi_i-\bar{\Pi})(G_i-G).
$
Here, $b$ captures the coupling between projection
disagreement and gradient heterogeneity.
The second term captures the interaction between
projection disagreement and data heterogeneity.
It vanishes when all clients share a projector,
but need not vanish when each local projector
captures most of its own gradient energy.
Thus, \emph{accurate local gradient compression does
	not by itself ensure an informative global update}.
Our analysis makes this distinction explicit through
a sharp bias bound and a quadratic example in which
exact local SVD projections stall away from stationarity.

Subspace fragmentation also affects communication.
With a basis fixed within each round and zero weight
decay, each client's accumulated update remains
rank at most $r$ and admits an exact low-rank
factorization.
Nevertheless, aggregating updates from $N$ distinct
client subspaces can produce a matrix of rank up to
$\min\{m,n,Nr\}$.
Low-rank client updates therefore do not automatically
yield a rank-$r$ global update.
This exposes two connected challenges:

\begin{redbox}
	\textit{\textbf{Challenge 1:}
		Client-specific subspaces increase the cost of
		exact  communication.}
\end{redbox}

\begin{bluebox}
	\textit{\textbf{Challenge 2:} Federated subspace fragmentation under
		heterogeneous clients.}
\end{bluebox}

\begin{comment}

\begin{wrapfigure}{r}{0.49\textwidth}
	\centering
	\vspace{-4mm}
	\includegraphics[width=\linewidth]{figure/FedLore.jpg}
	\vspace{-4mm}
	\caption{Comparison of federated training paradigms. Unlike FedIT, which restricts optimization to low-rank trainable adapters, \texttt{FedLore} enables low-rank gradient optimization while preserving full-rank weight updates.}
	\label{fig:fedlore_vs3}
	\vspace{-3mm}
\end{wrapfigure}
	内容...
\end{comment}
We propose \texttt{FedLore}, a federated optimization
framework that shares optimization coordinates
within each round and refreshes them across rounds.
Clients optimize in a common orthonormal basis,
upload their accumulated low-rank coordinates,
and provide candidate subspaces for the next round.
The server constructs the next shared basis from
the leading eigenspace of the averaged candidate
projectors, making subspace aggregation invariant
to the signs and rotations of the candidate bases.

This design separates two roles of low-rank structure.
\emph{Within a round}, sharing the basis makes
coordinate aggregation exactly equivalent to
averaging the local endpoint models, without
additional rank truncation.
Under full participation with synchronized model
caches, the aggregated coordinates and the next
basis suffice for low-rank model synchronization.
\emph{Across rounds}, refreshing the basis allows
the accumulated model change to extend beyond
a single rank-$r$ correction
(Figures~\ref{fig:fedit_vs_fedlore}).
Sharing removes the projection--heterogeneity
coupling term, whereas subspace selection determines
how much of the global gradient is retained.
We distinguish these properties explicitly:
our convergence guarantee requires global-gradient
coverage, which agreement alone does not ensure.

\textbf{Contributions.}
Our main contributions are as follows:\\
\textbf{$\bullet$}
We identify \emph{subspace fragmentation}: client-specific projections couple with data heterogeneity, allowing locally accurate compression to suppress global descent and cost of
exact  communication.\\
\textbf{$\bullet$}
We propose \texttt{FedLore}, which shares a low-rank basis within each round and refreshes it across rounds. This enables exact coordinate aggregation, compact optimizer states, and accumulated updates beyond rank $r$. With synchronized caches and full participation, communication remains low-rank.\\
\textbf{$\bullet$}
We show that shared projections eliminate projection--heterogeneity coupling in projected SGD and establish an $O(T^{-1/2})$ stationarity bound under standard smoothness, variance, and heterogeneity assumptions with global-gradient coverage. Across vision and language tasks, including federated pre-training, FedLore combines competitive model quality with reduced communication and optimizer-state memory, while allowing accumulated updates beyond the per-round rank budget.

%\begin{comment}

\section{Related Work}
\textbf{Parameter-Efficient Fine-Tuning for Large Models.}
As pretrained models scale, parameter-efficient fine-tuning (PEFT) enables adaptation under limited computational and memory budgets \citep{Towards_Federated_PEFT}. LoRA parameterizes updates to frozen pretrained weights using trainable low-rank factors, substantially reducing gradient and optimizer-state memory \citep{hu2021lora}. Subsequent methods extend this approach through adaptive rank allocation and alternative parameterizations \citep{zhang2023adalora,liu2024dora}. However, although standard LoRA's update subspaces evolve during optimization, its accumulated weight updates remain rank-constrained, potentially limiting expressivity under substantial distribution shift or when training from scratch. Low-rank factorization can also introduce optimization challenges \citep{cosson2023lowrankgd}, motivating memory-efficient methods with more flexible weight updates \citep{zhao2024galore}.

\label{sec:related}

%\vspace{-2mm}
\begin{comment}

LoRA is one of the most widely used PEFT in FL due to its low communication and memory cost.
FedIT~\citep{zhang2023towards} averages client LoRA matrices, while later variants improve aggregation, initialization, or factor sharing.
For example, FLoRA~\citep{wang2024flora} and LoRA-FAIR~\citep{bian2025lora} study more robust aggregation, FR-LoRA~\citep{yan2025federated} improves adapter initialization, and FedSA-LoRA~\citep{guo2025selective} and FFA-LoRA~\citep{sun2024improving} modify how the $A$ and $B$ matrices are shared or optimized.
However, these methods still aggregate fixed rank adapters, which restricts global updates to LoRA-induced subspaces.
\end{comment}
\textbf{Extending LoRA to Federated Learning.}
LoRA is widely used for parameter-efficient federated fine-tuning because it reduces communication and optimizer-state memory costs.
FedIT~\citep{zhang2023towards} aggregates client LoRA factors, while subsequent methods refine aggregation, initialization, or factor sharing.
FLoRA~\citep{wang2024flora} and LoRA-FAIR~\citep{bian2025lora} improve aggregation, whereas FedSA-LoRA~\citep{guo2025selective} and FFA-LoRA~\citep{sun2024improving} modify how the $A$ and $B$ factors are shared or optimized.
FR-LoRA~\citep{yan2025federated} combines factor reinitialization with residual low-rank updates, allowing the accumulated model change to exceed the per-round rank budget.
In contrast, FedLore optimizes projected gradients in a shared, evolving basis and enables exact aggregation of accumulated updates in low-rank coordinates.

\section{Why Local Gradient Projection Fails in Federated Learning}

\textbf{FL Problem Setup.}
FL aims to optimize model parameters, i.e., minimizing the following problem:
\vspace{-2mm}
\begin{equation*}
	f_i(\boldsymbol{W}) 
	:= \mathbb{E}_{\xi_i \sim \mathcal{D}_i}
	\bigl[F_i(\boldsymbol{W}; \xi_i)\bigr],  f(\boldsymbol{W}) := \frac{1}{N} \sum_{i=1}^N f_i(\boldsymbol{W}).
	\label{eq:objective}
\end{equation*}
Here, $f_i$ denotes client $i$'s loss, with expectation over $\xi_i \sim \mathcal{D}_i$; $N$ is the client count, and $\boldsymbol{W}$ is the global model.

\subsection{FL with  Gradient Low-Rank Projection in Local}

\label{sec:subspace}
As a motivating example, we describe low-rank optimizers
using a single linear layer $W$. In round $t$, client $i$ receives the global weights $W^t$ and computes the gradient of its local objective $f_i$:
$
G_i^t = -\nabla_W f_i(W_i^{t,0}; \xi_i^{t,k}) \in \mathbb{R}^{m\times n}.
$
To construct a local low-rank subspace tailored to its data distribution, the client performs a truncated singular value decomposition (SVD) of $G_i^t$:
$
	G_i^t \approx U_i^t S_i^t (V_i^t)^\top,
$
where 
$U_i^t \in \mathbb{R}^{m \times r}$,
$S_i^t \in \mathbb{R}^{r \times r}$,
and 
$V_i^t \in \mathbb{R}^{n \times r}$ 
contain the top-$r$ singular components.
We define the left singular vectors as the projection basis for local optimization: $P_i^t = U_i^t.$

Given the subspace basis $P_i^t$, the client projects gradients onto this subspace. 
The projected gradient at step $k$ is
$
	R_{i}^{t,k} = -(P_i^t)^\top \nabla_W f_i(W_{i}^{t,k}; \xi_i^{t,k}) 
	\in \mathbb{R}^{r \times n}.
$
To retain adaptive optimization, we maintain the optimizer states 
in the projected coordinates:
\vspace{-1mm}
\begin{equation}
	\begin{aligned}
		M_i^{t,k+1}
		&= \beta_1 M_i^{t,k} + (1-\beta_1)R_i^{t,k},
		&\quad
		V_i^{t,k+1}
		&= \beta_2 V_i^{t,k} + (1-\beta_2)(R_i^{t,k}\odot R_i^{t,k}),\\
		\widehat{M}_i^{t,k+1}
		&= M_i^{t,k+1}/(1-\beta_1^{k+1}),
		&
		\widehat{V}_i^{t,k+1}
		&= V_i^{t,k+1}/(1-\beta_2^{k+1}),\\
		N_i^{t,k+1}
		&= \widehat{M}_i^{t,k+1}/
		(\sqrt{\widehat{V}_i^{t,k+1}}+\varepsilon),
		&
		W_i^{t,k+1}
		&= W_i^{t,k}+\eta\alpha\,P_i^tN_i^{t,k+1}.
	\end{aligned}
\end{equation}

After $K$ local steps, the client  uploads full-parameter update 
$\Delta W_i^t = W_i^{t,K}-W^t$ to the server.
\subsection{Challenges of Gradient Low-Rank Projection in FL}
Despite the widespread use of GaLore in centralized deep learning, its adaptation to federated settings remains largely unexplored. In this subsection, we analyze two fundamental challenges in FL.
\vspace{-1mm}
\begin{redbox}
	\textit{\textbf{Challenge 1:}
		Client-specific subspaces increase the cost of
		exact  communication.}
\end{redbox}

Low-rank local updates do not necessarily admit a
rank-$r$ representation after aggregation.
Consider zero weight decay and a client-specific basis
$P_i^t\in\mathbb{R}^{m\times r}$ fixed within round $t$.
After $K$ local steps, client $i$ produces
$
\Delta W_i^t
=
\eta\alpha P_i^t\Delta Z_i^t,
\Delta Z_i^t
=
\sum_{k=0}^{K-1}N_i^{t,k+1}.
$
Each client can therefore upload its update exactly
using the factors $(P_i^t,\Delta Z_i^t)$, requiring
$r(m+n)$ scalars.
However, with $N$ clients, the aggregated
update is
\vspace{-1mm}
\begin{equation*}
	\begin{aligned}
		\Delta W^t
		=
		\frac{\eta\alpha}{N}
		\sum_{i\in\mathcal S_t}P_i^t\Delta Z_i^t,\quad
		\operatorname{rank}(\Delta W^t)
		\leq \min\{m,n,Nr\}.
	\end{aligned}
\end{equation*}
Client-specific subspaces can produce an aggregate of rank exceeding $r$, precluding an exact rank-$r$ representation.
Direct transmission requires either $mn$ scalars for the dense aggregate or $Nr(m+n)$ for concatenated client factors; rank-$r$ truncation introduces error when the aggregate rank exceeds $r$.
With a shared basis,
$\Delta W^t=\eta\alpha P^t\bigl(N^{-1}\sum_{i\in\mathcal N}\Delta Z_i^t\bigr)$
has rank at most $r$.
Clients caching $(W^t,P^t)$ reconstruct the next model from $rn$ aggregated coordinates.
Including the next basis costs $r(m+n)$ scalars per client, independently of $N$.
\begin{bluebox}
		\textit{\textbf{Challenge 2:} Federated subspace fragmentation under
			heterogeneous clients.}
\end{bluebox}

In Local GaLore, each client constructs an orthonormal basis
$P_i$ from its local gradient via truncated SVD.
Heterogeneous data can produce different projection subspaces,
a phenomenon we term \emph{subspace fragmentation}.
Although local updates remain exactly aggregable in the ambient
space, projection disagreement can couple with gradient
heterogeneity to impair global descent.
At a common model $W$, let $G_i=\nabla f_i(W)$,
$G=N^{-1}\sum_i G_i$, $\Pi_i=P_iP_i^\top$, and
$\bar{\Pi}=N^{-1}\sum_i\Pi_i$. Then
\[
\frac{1}{N}\sum_{i=1}^{N}\Pi_iG_i
=
\bar{\Pi}G+b,
\quad
b:=
\frac{1}{N}\sum_{i=1}^{N}
(\Pi_i-\bar{\Pi})(G_i-G).
\]
The coupling term $b$ can weaken or cancel global-gradient
alignment despite accurate local compression; disagreement
alone does not imply harmful interference.
Figure~\ref{fig:insight}(c,d) illustrates subspace disagreement
and performance degradation under stronger heterogeneity.
Shared projections eliminate $b$ exactly, while global
descent additionally requires sufficient gradient coverage.

\begin{algorithm}[t]
	\small
	\caption{\texttt{FedLore} Training Procedure}
	\label{alg:fedlore}
	\begin{algorithmic}[1]
		\REQUIRE Global weights $W^0$; learning rate $\eta$;
		scale $\alpha$; rank $r$; Adam coefficients
		$\beta_1,\beta_2$; $\epsilon>0$;
		communication rounds $T$; local steps $K$;
		client set $\mathcal{N}$ with $|\mathcal{N}|=N$;
		weight decay $\lambda=0$.
		\ENSURE Trained global model $W^T$.
		
		\STATE Initialize a shared orthonormal basis
		$P^0\in\mathbb{R}^{m\times r}$
		\STATE Server broadcasts $(W^0,P^0)$ to all clients
		\STATE Each client initializes 
		global-model cache: $\widetilde W_i\gets W^0$
		
		\FOR{$t=0$ to $T-1$}
		\FOR{each client $i\in\mathcal{N}$
			\textbf{in parallel}}
		
		\STATE \texttt{//Low-Rank Model Synchronization}
		\IF{$t>0$}
		\STATE Receive $(P^t,\Delta^{t-1})$
		from the server
		\STATE $\widetilde W_i
		\gets \widetilde W_i
		+\eta\alpha P^{t-1}\Delta^{t-1}$
		\ENDIF
		\STATE $W_i^{t,0}\gets\widetilde W_i$, $M_i^{t,0}\gets 0$,
		$V_i^{t,0}\gets 0$, $\Delta Z_i^t\gets 0$
		
		\STATE \texttt{//Local Projected Adam Updates}
		\FOR{$k=0$ to $K-1$}
		%\STATE Sample a fresh mini-batch
	%	$\xi_i^{t,k}$
		\STATE $G_i^{t,k}\gets
		-\nabla_W f_i(W_i^{t,k};\xi_i^{t,k})
		\in\mathbb{R}^{m\times n}$;\quad $R_i^{t,k}\gets
		(P^t)^\top G_i^{t,k}
		\in\mathbb{R}^{r\times n}$
		
		\STATE $M_i^{t,k+1}\gets
		\beta_1M_i^{t,k}
		+(1-\beta_1)R_i^{t,k}$;\quad $V_i^{t,k+1}\gets
		\beta_2V_i^{t,k}
		+(1-\beta_2)
		(R_i^{t,k}\odot R_i^{t,k})$
		
		\STATE $\widehat M_i^{t,k+1}\gets
		M_i^{t,k+1}/(1-\beta_1^{k+1})$;\quad $\widehat V_i^{t,k+1}\gets
		V_i^{t,k+1}/(1-\beta_2^{k+1})$
		\STATE $N_i^{t,k+1}\gets
		\widehat M_i^{t,k+1}/
		(\sqrt{\widehat V_i^{t,k+1}}+\epsilon)$;\quad $W_i^{t,k+1}\gets
		W_i^{t,k}+\eta\alpha P^tN_i^{t,k+1}$
		\STATE $\Delta Z_i^t\gets
		\Delta Z_i^t+N_i^{t,k+1}$
		\ENDFOR
		
		\STATE \texttt{//Local Candidate Subspace}
		\STATE Reuse the last local gradient
		$G_i^{t,K-1}$;\quad $P_i^t\gets
		\operatorname{TopLeftSVD}_r(G_i^{t,K-1})
		\in\mathbb{R}^{m\times r}$
		\STATE Upload $(\Delta Z_i^t,P_i^t)$
		to the server;\quad Cache the shared basis $P^t$ for reconstructing
		the aggregated update.
		\ENDFOR
		
		\STATE \texttt{//Server-Side Update Aggregation}
		\STATE $\Delta^t\gets
		\frac{1}{N}\sum_{i\in\mathcal{N}}\Delta Z_i^t$;\quad $W^{t+1}\gets
		W^t+\eta\alpha P^t\Delta^t$
		
		\STATE \texttt{//Shared-Subspace Construction}
		\STATE $B^t\gets
		\frac{1}{\sqrt N}
		[P_1^t,\ldots,P_N^t]
		\in\mathbb{R}^{m\times Nr}$;\quad $P^{t+1}\gets
		\operatorname{TopLeftSVD}_r(B^t)
		\in\mathbb{R}^{m\times r}$
		\ENDFOR
	\end{algorithmic}
\end{algorithm}

\section{Federated Training via Shared Gradient Low-Rank Projection}
\label{sec:method}
\label{sec:fedlore}

We propose \texttt{FedLore}, a federated optimization framework using a shared, evolving gradient subspace. It maintains full model weights while storing local optimizer states and communicating updates in low-rank coordinates. The shared basis enables exact aggregation, while refreshing it allows accumulated updates to exceed rank $r$. Algorithm~\ref{alg:fedlore} presents the procedure for a weight matrix $W\in\mathbb{R}^{m\times n}$ under full participation and a constant learning rate.

\textbf{Shared-subspace construction.}
Let $P^t\in\mathbb{R}^{m\times r}$ denote the shared
orthonormal basis used throughout round $t$, with
$(P^t)^\top P^t=I_r$.
The corresponding orthogonal projector is
$\Pi^t=P^t(P^t)^\top$.
At the end of the round, client $i$ extracts a candidate
basis $P_i^t$ from the leading $r$ left singular vectors
of its last local gradient $G_i^{t,K-1}$.
Directly averaging these bases is sensitive to arbitrary
signs and within-subspace rotations.
We instead aggregate their orthogonal projectors:
\begin{equation*}
	A_{\mathrm{proj}}^t
	=
	\frac{1}{N}\sum_{i\in\mathcal N}
	P_i^t(P_i^t)^\top,
	P^{t+1}
	\in
	\underset{P^\top P=I_r}{\arg\max}\;
	\operatorname{tr}\!\left(P^\top A_{\mathrm{proj}}^tP\right).
	\label{eq:fedlore_shared_basis}
\end{equation*}
Thus, $P^{t+1}$ spans a leading rank-$r$ eigenspace of
$A_{\mathrm{proj}}^t$ and maximizes the average overlap
with the client candidate subspaces.
The aggregated operator is invariant to replacing any
$P_i^t$ with $P_i^tQ_i$ for an orthogonal matrix $Q_i$.
To avoid explicitly forming the dense projector average,
we use
$
	B^t
	=
	\frac{1}{\sqrt N}[P_1^t,\ldots,P_N^t],
	A_{\mathrm{proj}}^t=B^t(B^t)^\top,
	\label{eq:fedlore_stacked_basis}
$
and obtain $P^{t+1}$ from the leading left singular vectors
of $B^t$.
Randomized truncated SVD can approximate these subspaces
without forming $A_{\mathrm{proj}}^t$.
The server broadcasts the resulting basis for round $t+1$.

\textbf{Local optimization in shared coordinates.}
Each client starts from the synchronized global model,
$W_i^{t,0}=W^t$, and initializes
$M_i^{t,0}=V_i^{t,0}=0$.
For local step $k$, it computes a stochastic negative
gradient and its low-rank coordinates:
\vspace{-1mm}
\begin{equation*}
	G_i^{t,k}
	=
	-\nabla_W f_i(W_i^{t,k};\xi_i^{t,k}),
	R_i^{t,k}
	=
	(P^t)^\top G_i^{t,k}
	\in\mathbb{R}^{r\times n}.
	\label{eq:fedlore_projected_gradient}
\end{equation*}
The first- and second-moment states are maintained
entirely in this coordinate system:
\vspace{-1mm}
\begin{equation}
	\begin{aligned}
		M_i^{t,k+1}
		&= \beta_1 M_i^{t,k}+(1-\beta_1)R_i^{t,k},
		&\quad
		V_i^{t,k+1}
		&= \beta_2 V_i^{t,k}+(1-\beta_2)
		\bigl(R_i^{t,k}\odot R_i^{t,k}\bigr),\\
		\widehat M_i^{t,k+1}
		&= M_i^{t,k+1}/(1-\beta_1^{k+1}),
		&
		\widehat V_i^{t,k+1}
		&= V_i^{t,k+1}/(1-\beta_2^{k+1}).
	\end{aligned}
	\label{eq:fedlore_moments}
\end{equation}
The normalized direction and weight update are
$
	N_i^{t,k+1}
		=
		\frac{\widehat M_i^{t,k+1}}
		{\sqrt{\widehat V_i^{t,k+1}}+\epsilon},
		W_i^{t,k+1}
		=
		W_i^{t,k}
		+\eta\alpha P^tN_i^{t,k+1}.
	\label{eq:fedlore_local_update}
$
All operations in the normalization are elementwise.
This is AdamW with weight decay set to zero.
Resetting the moments each round avoids transferring
optimizer states between changing coordinate systems.
Each weight increment has rank at most $r$.

% Body: place immediately before the paragraph that should wrap around it
\begin{comment}

\begin{wrapfigure}{r}{0.45\textwidth}
	\centering
	\vspace{-0.7\baselineskip}
	\includegraphics[width=0.49\linewidth]{figure/fedsoap_drift_normalized.png}\hfill
	\includegraphics[width=0.49\linewidth]{figure/local_galore_drift_v2.png}
	\caption{Subspace inconsistency. FedLore has lower subspace inconsistency}
	\label{fig:subspace_drift}
	\vspace{-0.8\baselineskip}
\end{wrapfigure}
\end{comment}
\textbf{Exact low-rank aggregation and synchronization.}
With $P^t$ fixed within each round, local updates factorize exactly as
$\Delta Z_i^t=\sum_{k=0}^{K-1}N_i^{t,k+1}$ and
$W_i^{t,K}-W^t=\eta\alpha P^t\Delta Z_i^t$.
Clients upload $(\Delta Z_i^t,P_i^t)$ for aggregation and basis construction.
The server computes
$\Delta^t=N^{-1}\sum_{i\in\mathcal N}\Delta Z_i^t$ and
$W^{t+1}=W^t+\eta\alpha P^t\Delta^t$,
exactly matching endpoint averaging without additional compression.
Under full participation, clients cache $(W^t,P^t)$ and reconstruct
$W^{t+1}$ upon receiving $(\Delta^t,P^{t+1})$.
Excluding initialization, communication costs $r(m+n)$ scalars per matrix,
per client, per round in each direction.
The two optimizer moments occupy $2rn$ scalars, excluding weights,
gradients, bases, activations.
%\textbf{Dynamic capacity and theoretical scope.}
%Our projected-SGD analysis shows that sharing cancels the aggregation bias arising from projection--heterogeneity coupling.
%Convergence additionally requires global-gradient coverage, which projector averaging alone does not guarantee. The adaptive implementation is evaluated empirically.
\section{Theoretical Analysis}

%\subsection{Assumptions}

Let
$
f(W):=\frac1N\sum_{i=1}^N f_i(W),
f^\star:=\inf_W f(W).
$
For each client $i$ and round $t$, let
$P_i^t\in\mathbb R^{m\times r}$ have orthonormal columns,
and define
$
\Pi_i^t:=P_i^t(P_i^t)^\top,
A_t:=\frac1N\sum_{i=1}^N\Pi_i^t.
$
In the shared-projection setting, $\Pi_i^t=\Pi^t$ for all $i$,
so $A_t=\Pi^t$.
%\textbf{Sampling and conditioning.} Let $\mathcal F_t$ contain the round-start history, $W^t$, and projectors $\{\Pi_i^t\}_{i=1}^N$, which remain fixed for $K$ local steps. Given $\mathcal F_t$, sample $\mathcal S_t$ uniformly without replacement with $|\mathcal S_t|=S$, independently of fresh oracle randomness. Define virtual oracle streams and trajectories for all clients, executing only selected ones. Let $\mathcal H_{t,k}$ contain $\mathcal F_t$, $\mathcal S_t$, and all oracle draws before step $k$; then $W_i^{t,k}$ is $\mathcal H_{t,k}$-measurable.
\begin{assumption}[Smoothness and lower boundedness]
	\label{ass:final_smooth_rev}
	Each local objective
	$f_i:\mathbb R^{m\times n}\to\mathbb R$
	is differentiable and $L$-smooth: for all $X,Y$,
	$
	\|\nabla f_i(X)-\nabla f_i(Y)\|_F
	\le L\|X-Y\|_F.
	$
	The global objective satisfies $f^\star>-\infty$.
\end{assumption}

\begin{assumption}[Stochastic gradient oracle]
	\label{ass:final_oracle_rev_corr}
	Define
	$
	\varepsilon_i^{t,k}
	:=g_i^{t,k}-\nabla f_i(W_i^{t,k}).
	$
	For every client $i$, round $t$, and  step $k$,
	$
	\mathbb E[
	\varepsilon_i^{t,k}
	\mid\mathcal H_{t,k}
	]=0,
	\mathbb E[
	\|\varepsilon_i^{t,k}\|_F^2
	\mid\mathcal H_{t,k}
	]\le\sigma^2.
	$
	For distinct clients $i\ne j$,  projected noises
	at  same local step are conditionally uncorrelated:
	$
	\mathbb E\!\left[
	\left\langle
	\Pi_i^t\varepsilon_i^{t,k},
	\Pi_j^t\varepsilon_j^{t,k}
	\right\rangle_F
	\,\middle|\,
	\mathcal H_{t,k}
	\right]=0.
	$
\end{assumption}

%Conditional independence of the client oracle draws is sufficient for the last condition. The variance bound controls stochastic fluctuations around each client's gradient; differences between client gradients are controlled separately below.

\begin{assumption}[Bounded heterogeneity]
	\label{ass:final_hetero_rev}
	There exists $\zeta\ge0$ such that, for all $W$,
	$
	\frac1N\sum_{i=1}^N
	\|\nabla f_i(W)-\nabla f(W)\|_F^2
	\le\zeta^2.
	$
\end{assumption}

\begin{assumption}[Aggregate global-gradient coverage]
	\label{ass:partial_cover_client_cover_1}
	There exists $\mu_P\in(0,1]$ such that, almost surely,
	for every round $t$,
$
	\left\langle
	\nabla f(W^t),A_t\nabla f(W^t)
	\right\rangle_F
	=
	\frac1N\sum_{i=1}^N
	\|\Pi_i^t\nabla f(W^t)\|_F^2
	\ge
	\mu_P\|\nabla f(W^t)\|_F^2.
	$
\end{assumption}

Assumption~\ref{ass:partial_cover_client_cover_1} requires coverage only along the current global gradient. For shared projections, it becomes
$\|\Pi^t\nabla f(W^t)\|_F^2\ge\mu_P\|\nabla f(W^t)\|_F^2$;
for client-specific projections, coverage is through $A_t$.
Both methods are compared using a common valid $\mu_P$.
Orthogonality directly implies
$\|\Pi_i^tX\|_F\le\|X\|_F$ and $0\preceq A_t\preceq I$,
without additional assumptions.

%\subsection{Main theorem}

We analyze the projected-SGD variants of
\texttt{Local GaLore} and \texttt{FedLore}
under the sampling and conditioning convention stated above.
Let $S$ denote the number of participating clients per round,
and define
$
\Delta:=f(W^0)-f^\star,
\omega_S:=\frac{N-S}{S(N-1)}.
$
Full participation corresponds to $S=N$ and $\omega_S=0$.
For each round $t$, write
$
G_t:=\nabla f(W^t),
d_i^t:=\nabla f_i(W^t)-G_t,
A_t:=\frac1N\sum_{i=1}^N\Pi_i^t,
$
where $\Pi_i^t=P_i^t(P_i^t)^\top$.
Define the projection-induced aggregation bias
$
	b_t:=
	\frac1N\sum_{i=1}^N
	(\Pi_i^t-A_t)d_i^t.
	\label{eq:main_projection_bias}
$
Since $\sum_i d_i^t=0$, the round-start projected direction
admits the exact decomposition
$
	\frac1N\sum_{i=1}^N
	\Pi_i^t\nabla f_i(W^t)
	=
	A_tG_t+b_t.
	\label{eq:main_projection_decomposition}
$
For a horizon $T$, choose
$
\eta=\frac{c}{\alpha L K\sqrt T},
0<c\le\frac{\mu_P}{16},
$
and define the common remainder
$
		\mathcal R_T
		:=\;
		\frac{4L\Delta}{\mu_P c\sqrt T}
		+\frac{40c^2\zeta^2}{3\mu_P^2T}
		+\frac{12c^2\sigma^2}{\mu_P^2KT}
		+
		\frac{4c}{\mu_P\sqrt T}
		\left(
		\omega_S\zeta^2+\frac{\sigma^2}{SK}
		\right).
$

\begin{theorem}[Nonconvex convergence of \texttt{Local GaLore-SGD}]
	\label{thm:main_final_rev}
	Suppose
	Assumptions~\ref{ass:final_smooth_rev}--\ref{ass:partial_cover_client_cover_1}.
	With the stepsize specified above,
	\texttt{Local GaLore-SGD} satisfies
	\vspace{-2mm}
	\begin{equation}
		\begin{aligned}
			\frac1T\sum_{t=0}^{T-1}
			\mathbb E\|\nabla f(W^t)\|_F^2
			\le\;
			\mathcal R_T
			+
			\frac{8}{\mu_P^2T}
			\sum_{t=0}^{T-1}
			\mathbb E\|b_t\|_F^2.
		\end{aligned}
		\label{eq:main_bound_final_rev_1}
	\end{equation}
	Moreover, let
	$
	\bar v_T:=
	\frac1T\sum_{t=0}^{T-1}
	\mathbb E\|A_t-A_t^2\|_2.
	$
	Then
	$
		\frac1T\sum_{t=0}^{T-1}
		\mathbb E\|\nabla f(W^t)\|_F^2
		\le
		\mathcal R_T
		+\frac{8\zeta^2}{\mu_P^2}\bar v_T.
		\label{eq:main_local_dispersion_bound}
		$
\end{theorem}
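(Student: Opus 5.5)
The plan is a one-round descent inequality for the projected-SGD variant, then telescoping. In Local GaLore-SGD, client $i$ runs $W_i^{t,k+1}=W_i^{t,k}-\eta\alpha\Pi_i^t g_i^{t,k}$ for $K$ steps from $W^t$. The server averages the endpoints over the sampled set $\mathcal S_t$. Hence
\[
W^{t+1}-W^t=-\frac{\eta\alpha}{S}\sum_{i\in\mathcal S_t}\sum_{k=0}^{K-1}\Pi_i^t g_i^{t,k}.
\]
I would apply $L$-smoothness of $f$ (Assumption~\ref{ass:final_smooth_rev}) at $W^t$. In the inner-product term I would replace each stochastic gradient $g_i^{t,k}$ by $\nabla f_i(W^t)$, plus a drift error $\nabla f_i(W_i^{t,k})-\nabla f_i(W^t)$, plus noise $\varepsilon_i^{t,k}$. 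The noise vanishes in conditional expectation. Sampling without replacement is unbiased given $\mathcal F_t$. By the exact decomposition stated before the theorem, the main term is
\[
-\eta\alpha K\langle G_t, A_tG_t+b_t\rangle.
\]
Assumption~\ref{ass:partial_cover_client_cover_1} gives $\langle G_t,A_tG_t\rangle\ge\mu_P\|G_t\|_F^2$. The cross term $\langle G_t,b_t\rangle$ is handled by Young's inequality with weight $\mu_P/4$, which yields $\frac{\mu_P}{4}\|G_t\|^2+\frac{1}{\mu_P}\|b_t\|^2$. The drift cross term is treated the same way, using $\|\Pi_i^t\|_2\le1$ and $L$-smoothness, so it becomes $\frac{L^2}{\mu_P}\cdot\frac1{NK}\sum_{i,k}\mathbb E\|W_i^{t,k}-W^t\|^2$.

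Next I would bound the second-order term $\frac{L}{2}\mathbb E\|W^{t+1}-W^t\|^2$ by splitting the update into mean and fluctuation parts. The noise part gives $\eta^2\alpha^2 K\sigma^2/S$. This uses the uncorrelated projected-noise condition of Assumption~\ref{ass:final_oracle_rev_corr} across clients and martingale orthogonality across steps. The sampling part gives $\omega_S$ times the client dispersion of $\frac1K\sum_k\Pi_i^t\nabla f_i(W_i^{t,k})$. That dispersion is at most $\zeta^2$ plus drift and gradient terms, by Assumption~\ref{ass:final_hetero_rev} and the contraction $\|\Pi_i^t X\|\le\|X\|$. The deterministic mean part is bounded by $2K^2(\|G_t\|^2+\text{drift})$.

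For the drift I would use the standard local-SGD recursion. Since $\|\Pi_i^t\|_2\le1$,
\[
\mathbb E\|W_i^{t,k}-W^t\|^2\lesssim \eta^2\alpha^2 K\bigl(K(\|G_t\|^2+\zeta_i^2)+\sigma^2\bigr),
\]
valid whenever $\eta\alpha LK\le 1/16$. This holds because $\eta\alpha LK=c/\sqrt T\le\mu_P/16$.

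Combining these pieces, the $\|G_t\|^2$ coefficients from drift and second-order terms are absorbed using $c\le\mu_P/16$. This leaves a net descent of roughly $-\frac{\mu_P}{4}\eta\alpha K\|G_t\|^2$ plus the error terms. Telescoping over $t$, dividing by $\eta\alpha K T\mu_P/4$, and substituting $\eta\alpha LK=c/\sqrt T$ produces exactly the terms of $\mathcal R_T$. The $\Delta$ term carries $1/\sqrt T$. The drift terms scale as $c^2/T$ with $\zeta^2$ and $\sigma^2/K$. The sampling and noise terms scale as $c/\sqrt T$. The bias contributes $\frac{8}{\mu_P^2 T}\sum_t\mathbb E\|b_t\|^2$ once the Young weights are chosen. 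I expect the main obstacle to be this constant bookkeeping (the $40/3$, $12$, $4$ and $8$), rather than any conceptual step. I would fix the Young weights first and then tune them to match.

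For the second bound I need $\|b_t\|_F^2\le\zeta^2\|A_t-A_t^2\|_2$. Write $b_t=\frac1N\sum_i(\Pi_i^t-A_t)d_i^t$ and apply Cauchy--Schwarz across clients in operator form:
\[
\|b_t\|^2\le\Bigl\|\frac1N\sum_i(\Pi_i^t-A_t)^2\Bigr\|_2\cdot\frac1N\sum_i\|d_i^t\|^2 .
\]
To justify this, stack $D=[d_1;\dots;d_N]/\sqrt N$ and $M=[\Pi_1-A,\dots,\Pi_N-A]/\sqrt N$, so that $b=MD$ and $\|MD\|_F\le\|M\|_2\|D\|_F$ with $\|M\|_2^2=\|MM^\top\|_2$. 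Because each $\Pi_i^t$ is idempotent,
\[
\frac1N\sum_i(\Pi_i^t-A_t)^2=A_t-A_t^2 .
\]
Finally, Assumption~\ref{ass:final_hetero_rev} bounds $\frac1N\sum_i\|d_i^t\|^2\le\zeta^2$. Taking expectations and averaging over $t$ turns the bias term into $\frac{8\zeta^2}{\mu_P^2}\bar v_T$.
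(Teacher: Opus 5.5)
Your architecture is the paper's (Theorem~\ref{thm:flsp_convergence} in the appendix): smoothness at $W^t$, the identity $\mathbb E_tU_t=A_tG_t+b_t+\mathbb E_tC_t$, coverage plus two Young steps with weight $\mu_P/4$, the projected drift recursion, absorption via $c\le\mu_P/16$, and telescoping. Your stacked-operator Cauchy--Schwarz with $\frac1N\sum_i(\Pi_i^t-A_t)^2=A_t-A_t^2$ is equivalent to the test-matrix argument of Theorem~\ref{thm:flsp_bias}. The gap is in the second-moment step, which is exactly where the constants of $\mathcal R_T$ and the factor $8/\mu_P^2$ are fixed.

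\textbf{The mean term omits $b_t$.} The round-start mean of the aggregated direction is $q_t=A_tG_t+b_t$. Its norm is not controlled by $\|G_t\|_F$: at $G_t=0$ you can have $q_t=b_t\neq0$. So your bound $2K^2(\|G_t\|_F^2+\text{drift})$ for the mean part is false. You must carry $\|q_t\|_F^2\le2\|G_t\|_F^2+2\|b_t\|_F^2$. This yields a second-order bias contribution $2hx\|b_t\|_F^2$, with $h=\eta\alpha K$ and $x=Lh$. The paper merges it with the first-order $h\|b_t\|_F^2/\mu_P$ via $1/\mu_P+2x\le2/\mu_P$, and that is where $8/\mu_P^2$ comes from; your Young weights alone account only for $4/\mu_P^2$. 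Jensen, $\|q_t\|_F^2\le\|G_t\|_F^2+\zeta^2$, is not an alternative. It leaves a $\frac{4c}{\mu_P\sqrt T}\zeta^2$ term without the factor $\omega_S$, which does not appear in $\mathcal R_T$.

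\textbf{Drift is correlated with the noise.} $W_i^{t,k}$ depends on $\varepsilon_i^{t,j}$ for $j<k$, so the drift is correlated with the accumulated noise $B_t$. The drift-laden directions $\frac1K\sum_k\Pi_i^t\nabla f_i(W_i^{t,k})$, to which you apply the sampling identity, are therefore not orthogonal to $B_t$. Your additive split ``mean $+$ sampling $+$ noise ($\eta^2\alpha^2K\sigma^2/S$)'' implicitly assumes this orthogonality. If you repair it with generic $\|X+Y\|_F^2\le(1+\epsilon)\|X\|_F^2+(1+\epsilon^{-1})\|Y\|_F^2$ splits around your grouping, the coefficient of $\omega_S\zeta^2$ or of $\sigma^2/(SK)$ exceeds the one in $\mathcal R_T$.

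\textbf{The paper's fix} is the decomposition $U_t=q_{\mathcal S_t}+B_t+C_t$, where $q_{\mathcal S_t}=\frac1S\sum_{i\in\mathcal S_t}\Pi_i^t\nabla f_i(W^t)$.
\begin{itemize}
\item The finite-population identity is applied only to the $\mathcal F_t$-measurable vectors $\Pi_i^t\nabla f_i(W^t)$, giving $\mathbb E_t\|q_{\mathcal S_t}\|_F^2\le3\|G_t\|_F^2+2\|b_t\|_F^2+\omega_S\zeta^2$.
\item Orthogonality is used only between $B_t$ and the $\sigma(\mathcal F_t,\mathcal S_t)$-measurable $q_{\mathcal S_t}$.
\item $C_t$ is peeled off with a single factor-$2$ split, so $\mathbb E_t\|U_t\|_F^2\le6\|G_t\|_F^2+4\|b_t\|_F^2+2\omega_S\zeta^2+2\sigma^2/(SK)+2\mathbb E_t\|C_t\|_F^2$.
\end{itemize}
Combine this with $\mathbb E_t\|C_t\|_F^2\le a_K(\|G_t\|_F^2+\zeta^2)+e_K\sigma^2$ and the absorption check $3x+(1/\mu_P+x)a_K\le\frac{77}{384}\mu_P<\frac{\mu_P}4$. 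Then $a_K\le5x^2/3$ and $e_K\le3x^2/(2K)$ reproduce $\mathcal R_T$ exactly.
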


\begin{theorem}[Nonconvex convergence of \texttt{FedLore-SGD}]
	\label{cor:shared_partial_explicit_clean_1}
	Suppose
	Assumptions~\ref{ass:final_smooth_rev}--
	\ref{ass:partial_cover_client_cover_1}
	and the sampling and conditioning convention hold.
	If all clients share the same round-wise projector,
	$\Pi_i^t=\Pi^t$, then $b_t=0$ for every round.
	With the same stepsize,
	\texttt{FedLore-SGD} satisfies
		\vspace{-2mm}
	\begin{equation}
		\frac1T\sum_{t=0}^{T-1}
		\mathbb E\|\nabla f(W^t)\|_F^2
		\le
		\mathcal R_T.
		\label{eq:main_shared_convergence}
	\end{equation}
	For fixed problem parameters, $K$, $S$,
	and $c$, its averaged squared gradient norm is
	$O(T^{-1/2})$.
\end{theorem}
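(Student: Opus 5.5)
The plan is to reduce Theorem~\ref{cor:shared_partial_explicit_clean_1} to Theorem~\ref{thm:main_final_rev}, which we take as established for arbitrary round-wise orthonormal projectors $\{\Pi_i^t\}$ satisfying Assumptions~\ref{ass:final_smooth_rev}--\ref{ass:partial_cover_client_cover_1}. The shared-projection setting is a special case of that setting, so we only need to show that the extra bias term vanishes.

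\textbf{The bias vanishes.} If $\Pi_i^t=\Pi^t$ for all $i$, then $A_t=\frac1N\sum_i\Pi^t=\Pi^t$. Hence $\Pi_i^t-A_t=0$ for every $i$, and
\[
b_t=\frac1N\sum_{i=1}^N(\Pi_i^t-A_t)d_i^t=0
\]
identically, almost surely, for every round $t$. This gives the first claim. Equivalently, $A_t^2=(\Pi^t)^2=\Pi^t=A_t$, so $\|A_t-A_t^2\|_2=0$ and $\bar v_T=0$. This is consistent with the second bound of Theorem~\ref{thm:main_final_rev}.

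\textbf{Checking the hypotheses.} We need the assumptions of Theorem~\ref{thm:main_final_rev} to hold for FedLore-SGD.
\begin{itemize}
\item Assumptions~\ref{ass:final_smooth_rev}--\ref{ass:final_hetero_rev} concern only the objectives and the oracle, so they carry over unchanged.
\item The uncorrelated-projected-noise condition becomes $\mathbb E\langle\Pi^t\varepsilon_i^{t,k},\Pi^t\varepsilon_j^{t,k}\rangle_F=0$, which is exactly Assumption~\ref{ass:final_oracle_rev_corr} with $\Pi_i^t=\Pi_j^t$.
\item Assumption~\ref{ass:partial_cover_client_cover_1} specializes to $\|\Pi^tG_t\|_F^2\ge\mu_P\|G_t\|_F^2$.
\end{itemize}
We must also confirm that the shared projector $\Pi^t$ is $\mathcal F_t$-measurable, i.e.\ fixed during the $K$ local steps. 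In FedLore, $P^t$ is computed at the end of round $t-1$ from the candidates $P_i^{t-1}$, so it is determined before round $t$ begins and the conditioning convention holds. The algorithm itself, projected SGD with updates $W_i^{t,k+1}=W_i^{t,k}-\eta\alpha\Pi^t g_i^{t,k}$ and averaging of the endpoints, is exactly the Local GaLore-SGD recursion with all projectors equal.

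\textbf{Conclusion.} Substituting $b_t=0$ into \eqref{eq:main_bound_final_rev_1} yields \eqref{eq:main_shared_convergence}. For the rate, fix $L,\Delta,\mu_P,\zeta,\sigma,K,S,c$ and read off the terms of $\mathcal R_T$:
\begin{itemize}
\item the first and last terms are $O(T^{-1/2})$;
\item the middle two terms are $O(T^{-1})$.
\end{itemize}
Hence $\mathcal R_T=O(T^{-1/2})$.

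\textbf{Main obstacle.} The reduction is valid only if the proof of Theorem~\ref{thm:main_final_rev} does not use client-specific structure beyond the stated assumptions. If a direct proof were required instead, I would proceed as follows.
\begin{itemize}
\item Use the descent lemma on $f(W^{t+1})$, with round update $-\eta\alpha\Pi^t\frac1S\sum_{i\in\mathcal S_t}\sum_k g_i^{t,k}$.
\item Split this update into $K\Pi^tG_t$ plus three errors: client drift, bounded by $O(\eta^2\alpha^2K^2L^2(\sigma^2/K+\zeta^2+\|G_t\|^2))$ via the standard local-SGD drift lemma; the sampling error, contributing $\omega_S\zeta^2$; and the noise, contributing $\sigma^2/(SK)$.
\item Use coverage to lower bound $\langle G_t,\Pi^tG_t\rangle\ge\mu_P\|G_t\|^2$, and use $c\le\mu_P/16$ to absorb the $\|G_t\|^2$ drift terms.
\end{itemize}
The drift control is the delicate step, since it produces the constants $40/3$ and $12$ in $\mathcal R_T$. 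Since Theorem~\ref{thm:main_final_rev} is available, however, the substitution $b_t=0$ suffices.
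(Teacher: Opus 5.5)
Your proposal is correct and follows the paper's own argument. The paper obtains this theorem by applying the general projected-SGD bound (Theorem~\ref{thm:flsp_convergence}, stated for arbitrary $\mathcal F_t$-measurable projectors, so your concern about client-specific structure does not arise) together with $b_t=0$ for a shared projector (Theorem~\ref{thm:flsp_bias}, Corollary~\ref{cor:flsp_comparison}), and then substituting $h=c/(L\sqrt T)$, $x=c/\sqrt T$, $a_K\le 5c^2/(3T)$ and $e_K\le 3c^2/(2KT)$ to recover $\mathcal R_T$, exactly as you outlined.
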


\noindent\textbf{Theoretical interpretation.}
Theorems~\ref{thm:main_final_rev}
and~\ref{cor:shared_partial_explicit_clean_1}
isolate the effect of projection disagreement under a common
global-gradient coverage condition.
Client-specific projections introduce the coupling term $b_t$
between projection disagreement and gradient heterogeneity.
Its magnitude satisfies
$
\|b_t\|_F^2
\le
\|A_t-A_t^2\|_2
\frac1N\sum_{i=1}^N\|d_i^t\|_F^2.
$
With a shared projector, the zero-mean identity
$\sum_i d_i^t=0$ is preserved after projection, so this bias
vanishes exactly.
Using common valid constants for both methods,
the shared-projection bound therefore removes the additional
bias term in~\eqref{eq:main_bound_final_rev_1}.
The remaining terms account for initial suboptimality,
local drift, client sampling, and stochastic-gradient noise,
and decay under the stated horizon-dependent stepsize.
The local-projection bound also vanishes if its average
bias energy tends to zero; a nonvanishing term in an upper
bound alone does not establish an unavoidable error floor.
The strict quadratic separation in
Theorem~\ref{thm:flsp_separation}
shows that the obstruction can nevertheless occur:
exact local SVD projections may stall at a nonstationary
point, whereas a globally informed shared projection
converges to the optimum.

\textbf{A sharper guarantee for shared projections.}
The shared-projection structure yields a stronger result than simply
setting $b_t=0$ in the generic bound. Let
$h_t=K\gamma_t$, $\Lambda_T=\sum_{t<T}h_t$, and assume
$Lh_t\leq 1/8$. As shown in Theorem~\ref{thm:flsp_shared_residual},
without a global-gradient coverage assumption \ref{ass:partial_cover_client_cover_1},
$
	\frac{1}{\Lambda_T}\sum_{t<T}h_t
	\mathbb{E}\!\left[\|\Pi^t\nabla f(W^t)\|_F^2\right]
	\leq B_T.
$
Consequently,
$
	\frac{1}{\Lambda_T}\sum_{t<T}h_t
	\mathbb{E}\!\left[\|\nabla f(W^t)\|_F^2\right]
	\leq B_T+
	\frac{1}{\Lambda_T}\sum_{t<T}h_t
	\mathbb{E}\!\left[
	\|(I-\Pi^t)\nabla f(W^t)\|_F^2
	\right].
$
Thus, shared projections control projected stationarity without
coverage, while full-gradient stationarity additionally requires
control of the uncovered component. If
$\mathbb{E}\|\Pi^t\nabla f(W^t)\|_F^2
\geq\mu\mathbb{E}\|\nabla f(W^t)\|_F^2$
for every $t$, the full-gradient bound becomes $B_T/\mu$.
The stepsize condition $Lh_t\leq 1/8$ does not depend on $\mu$.
%\definecolor{LightRed}{RGB}{252,235,235}
%\definecolor{LightBlue}{RGB}{235,245,255}

\begin{table*}[tb]
	\centering
	\caption{
		Test accuracy (\%) under Dir-0.1 over $T=100$.
		Experiments are conducted with 50 clients, batch size 16, local steps $K=50$, and LoRA rank $r=8$, projection rank $r=8$.
		Best results are shown in bold and second-best results are underlined.
	}
	\label{tab:merged_results}
	
	\setlength{\tabcolsep}{4pt}
	\renewcommand{\arraystretch}{1.15}
	\resizebox{\textwidth}{!}{
		\begin{tabular}{lcccccccc}
			\toprule
			\multirow{2}{*}{\textbf{Method}}
			& \multirow{2}{*}{\textbf{Venue}}
			& \multicolumn{2}{c}{\textbf{CIFAR-100}}
			& \multicolumn{2}{c}{\textbf{Tiny-ImageNet}}
			& \multicolumn{2}{c}{\textbf{Food-101}}
			& \multirow{2}{*}{\textbf{Avg.}}
			\\
			\cmidrule(lr){3-4}
			\cmidrule(lr){5-6}
			\cmidrule(lr){7-8}
			
			&
			& \textbf{Swin-Base}
			& \textbf{ViT-Base}
			& \textbf{Swin-Base}
			& \textbf{ViT-Base}
			& \textbf{Swin-Base}
			& \textbf{ViT-Base}
			&
			\\
			\midrule
			FedIT
			& ICASSP'24
			& $90.17\pm0.28$ & $89.57\pm0.28$
			& $86.50\pm0.22$ & $84.88\pm0.12$
			& $91.55\pm0.18$ & $91.02\pm0.20$
			& $88.95$
			\\
			
			FFA-LoRA
			& ICLR'24
			& $90.07\pm0.18$ & $86.26\pm0.14$
			& $85.34\pm0.18$ & $77.89\pm0.11$
			& $91.25\pm0.12$ & $90.68\pm0.22$
			& $86.92$
			\\
			
			FlexLoRA
			& ICLR'25
			& \textbf{$90.91\pm0.19$} & $89.59\pm0.19$
			& $85.94\pm0.19$ & $85.55\pm0.19$
			& $91.82\pm0.16$ & $91.34\pm0.10$
			& $89.19$
			\\
			
			LoRA-FAIR
			& ICCV'25
			& $90.26\pm0.19$ & $89.86\pm0.19$
			& $86.59\pm0.15$ & $85.11\pm0.19$
			& $91.41\pm0.12$ & $91.23\pm0.14$
			& $89.08$
			\\
			
			RoLoRA
			& NeurIPS'25
			& $90.55\pm0.26$ & $88.55\pm0.26$
			& $86.03\pm0.28$ & $82.97\pm0.28$
			& $90.54\pm0.15$ & $90.32\pm0.08$
			& $88.16$
			\\
			
			FLoRA
			& NeurIPS'24
			& $86.32\pm0.19$ & $85.48\pm0.19$
			& $87.20\pm0.12$ & $83.69\pm0.15$
			& $87.12\pm0.21$ & $86.56\pm0.11$
			& $86.06$
			\\
			
			FRLoRA
			& ICLR'25
			& $87.17\pm0.16$ & $66.80\pm0.19$
			& $76.45\pm0.22$ & $42.04\pm0.08$
			& $88.12\pm0.26$ & $87.96\pm0.17$
			& $74.76$
			\\
			
			\hdashline
			
			Local GaLore
			& -
			& $90.22\pm0.12$ & $89.62\pm0.21$
			& $86.47\pm0.16$ & $84.96\pm0.12$
			& $91.61\pm0.11$ & $91.27\pm0.15$
			& $89.03$
			\\
			
			FedFull
			& AISTATS'17
			& \underline{$90.81\pm0.12$}
			& \underline{$90.26\pm0.10$}
			& \underline{$87.50\pm0.08$}
			& \underline{$87.02\pm0.20$}
			& \underline{$92.25\pm0.12$}
			& \underline{$92.12\pm0.10$}
			& \underline{$89.99$}
			\\
			\rowcolor{LightBlue!40}
			\texttt{FedLore}
			& Ours
			& $\mathbf{90.85\pm0.08}$
			& $\mathbf{90.55\pm0.08}$
			& $\mathbf{87.52\pm0.10}$
			& $\mathbf{87.11\pm0.10}$
			& $\mathbf{92.32\pm0.16}$
			& $\mathbf{92.21\pm0.17}$
			& $\mathbf{90.09}$
			\\
			
			\bottomrule
		\end{tabular}
	}
\end{table*}
\begin{table*}[tb]
	\centering
	\caption{
		Test accuracy (\%) of RoBERTa-Base with LoRA on six text classification tasks under FL.
		Experiments are conducted for $T=100$ rounds with 20 clients, Dir-0.5 data partition, batch size 16, local steps $K=50$, and LoRA rank $r=8$.
	}
	\label{tab:roberta_base_glue}
	
	\setlength{\tabcolsep}{4pt}
	\renewcommand{\arraystretch}{1.15}
	\resizebox{\textwidth}{!}{
		\begin{tabular}{lcccccccc}
			\toprule
			\textbf{Method}
			& \textbf{Venue}
			& \textbf{SNLI}
			& \textbf{AG News}
			& \textbf{QQP}
			& \textbf{DBPedia 14}
			& \textbf{QNLI}
			& \textbf{MNLI}
			& \textbf{Avg.}
			\\
			\midrule
			
			FedIT
			& ICASSP'24
			& $86.50\pm0.13$ & $92.68\pm0.09$
			& $85.06\pm0.14$ & $98.94\pm0.07$
			& $89.47\pm0.11$ & $81.77\pm0.15$
			& $89.07$
			\\
			
			FFA-LoRA
			& ICLR'24
			& $80.70\pm0.17$ & $90.14\pm0.12$
			& $81.48\pm0.16$ & $98.22\pm0.10$
			& $85.61\pm0.14$ & $72.44\pm0.18$
			& $84.77$
			\\
			
			FlexLoRA
			& ICLR'25
			& $86.64\pm0.15$ & $93.30\pm0.11$
			& $63.18\pm0.19$ & $98.06\pm0.09$
			& $49.46\pm0.20$ & $35.44\pm0.18$
			& $71.01$
			\\
			
			LoRA-FAIR
			& ICCV'25
			& $86.76\pm0.10$ & $92.76\pm0.08$
			& $85.33\pm0.12$ & $98.88\pm0.07$
			& $89.47\pm0.09$ & $82.04\pm0.11$
			& $89.21$
			\\
			
			RoLoRA
			& NeurIPS'25
			& $85.95\pm0.12$ & $92.06\pm0.10$
			& $84.51\pm0.13$ & $98.78\pm0.08$
			& $88.30\pm0.12$ & $80.58\pm0.14$
			& $88.36$
			\\
			
			FLoRA
			& NeurIPS'24
			& $69.36\pm0.18$ & $89.77\pm0.13$
			& $79.78\pm0.15$ & $98.22\pm0.11$
			& $81.93\pm0.16$ & $78.23\pm0.17$
			& $82.88$
			\\
			
			FRLoRA
			& ICLR'25
			& $86.67\pm0.11$ & $93.00\pm0.09$
			& $85.04\pm0.12$ & $98.90\pm0.08$
			& $89.16\pm0.10$ & $81.81\pm0.13$
			& $89.10$
			\\
			
			\hdashline
			
			Local GaLore
			& -
			& $86.65\pm0.11$ & $92.71\pm0.12$
			& $85.21\pm0.15$ & $98.96\pm0.08$
			& $89.52\pm0.12$ & $82.07\pm0.16$
			& $89.19$
			\\
			
			FedFull
			& AISTATS'17
			& \underline{$88.02\pm0.09$}
			& \underline{$93.72\pm0.08$}
			& \underline{$86.31\pm0.14$}
			& \underline{$99.02\pm0.17$}
			& \underline{$90.16\pm0.10$}
			& \underline{$83.85\pm0.16$}
			& \underline{$90.18$}
			\\
			\rowcolor{LightBlue!40}
			\texttt{FedLore}
			& Ours
			& $\mathbf{88.19\pm0.08}$
			& $\mathbf{93.97\pm0.07}$
			& $\mathbf{86.79\pm0.14}$
			& $\mathbf{99.10\pm0.17}$
			& $\mathbf{90.48\pm0.10}$
			& $\mathbf{84.11\pm0.16}$
			& $\mathbf{90.44}$
			\\
			
			\bottomrule
		\end{tabular}
	}
\end{table*}

\begin{table*}[t]
	\centering
	\caption{
		Comparison with low-rank federated algorithms for pre-training LLaMA models on the C4 dataset.
		We report validation loss (Val Loss), validation perplexity (Val PPL), communication cost (Comm.), and  local training time.
		All methods are trained for 100 communication rounds (20 clients, batch size 16, $K=50$).
		The projection rank of \texttt{FedLore} is identical to  LoRA rank.
	}
	\label{tab:galore_compare_full}
	\setlength{\tabcolsep}{5pt}
	\renewcommand{\arraystretch}{1.15}
		\resizebox{\textwidth}{!}{
	\begin{tabular}{lccccccccccc}
		\toprule
		\multirow{2}{*}{\textbf{Method}}
		& \multicolumn{2}{c}{\textbf{60M}}
		& \multicolumn{2}{c}{\textbf{130M}}
		& \multicolumn{2}{c}{\textbf{350M}}
		& \multicolumn{2}{c}{\textbf{1B}}
		& \multirow{2}{*}{\textbf{Comm. $\downarrow$}}
		& \multirow{2}{*}{\textbf{Time/Round $\downarrow$}}\\
		
		\cmidrule(lr){2-3}
		\cmidrule(lr){4-5}
		\cmidrule(lr){6-7}
		\cmidrule(lr){8-9}
		
		& Loss & PPL
		& Loss & PPL
		& Loss & PPL
		& Loss & PPL
		& &on LLaMA 350M (s)\\
		
		\midrule
		
		FedIT
		& 8.811 & 6707.62
		& 8.544 & 5135.85
		& 8.072 & 3203.50
		& 7.561 & 1921.77
		& $O(rd)$
		& 118.6
		\\
		
		Local GaLore
		& 4.326 & 75.64
		& 4.215 & 67.69
		& 3.962 & 52.56
		& 3.781 & 43.86
		& $O(d^2)$
		& 136.6
		\\
		
		FedFull
		& 4.134 & 62.43
		& 4.056 & 57.74
		& 3.798 & 44.61
		& 3.561 & 35.20
		& $O(d^2)$
		& 132.1
		\\
		
		\midrule
		
		\texttt{FedLore-Random}
		& 4.112 & 61.07
		& 4.036 & 56.60
		& 3.781 & 43.86
		& 3.523 & 33.89
		& $O(rd)$
		& 119.8
		\\
		\rowcolor{LightBlue!40}
		\texttt{FedLore}
		& \textbf{3.833} & \textbf{46.20}
		& \textbf{3.611} & \textbf{37.00}
		& \textbf{3.554} & \textbf{34.95}
		& \textbf{3.345} & \textbf{28.36}
		& $O(rd)$
		& 122.4
		\\
		
		\midrule
		
		$r / d_{\text{model}}$
		& \multicolumn{2}{c}{128 / 512}
		& \multicolumn{2}{c}{256 / 768}
		& \multicolumn{2}{c}{256 / 1024}
		& \multicolumn{2}{c}{512 / 2048}
		& &
		\\
		
		\bottomrule
	\end{tabular}
}
\end{table*}

\begin{figure*}[tb]
	\centering

	\begin{subfigure}[b]{0.245\textwidth}
		\centering
		\includegraphics[width=\linewidth,trim=6 4 6 4,clip]{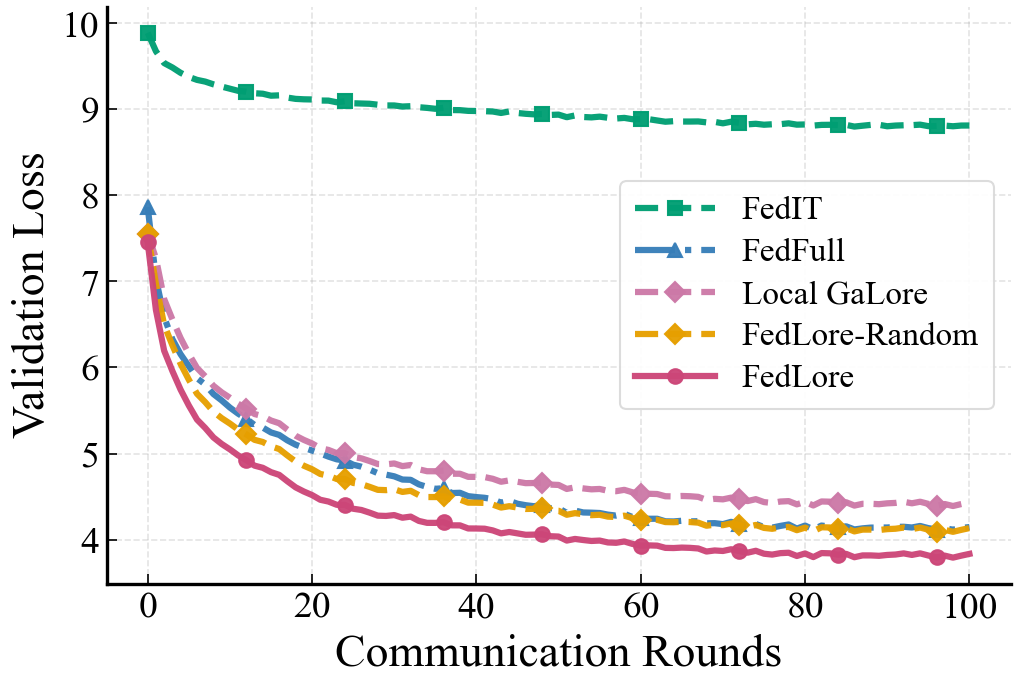}
		\caption{LLaMA 60M, C4}
		\label{fig:vit_cifar100}
	\end{subfigure}
	\hspace{-1mm}
	\begin{subfigure}[b]{0.245\textwidth}
		\centering
		\includegraphics[width=\linewidth,trim=6 4 6 4,clip]{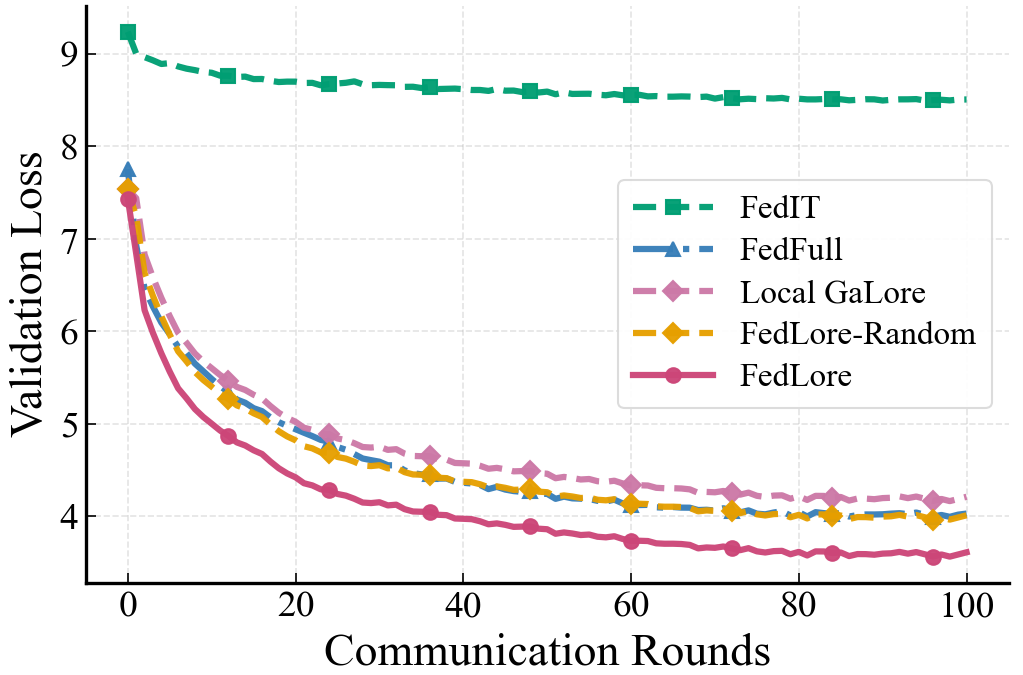}
		\caption{LLaMA 130M, C4}
		\label{fig:llama_130m}
	\end{subfigure}
	\hspace{-1mm}
	\begin{subfigure}[b]{0.245\textwidth}
		\centering
		\includegraphics[width=\linewidth,trim=6 4 6 4,clip]{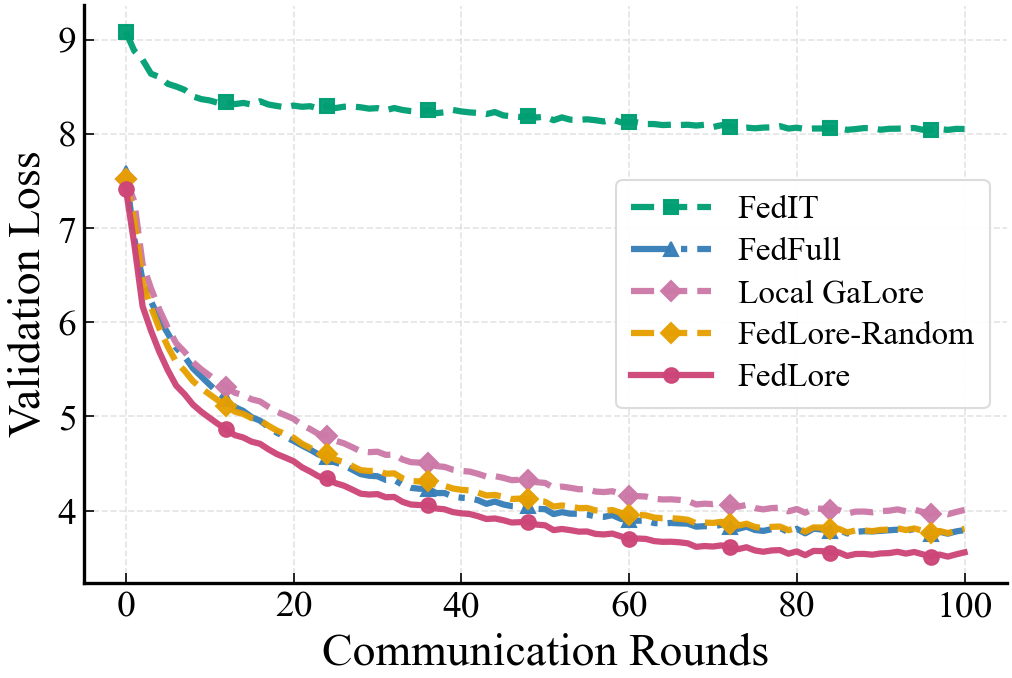}
		\caption{LLaMA 350M, C4}
		\label{fig:llama_350m}
	\end{subfigure}
		\hspace{-1mm}
	\begin{subfigure}[b]{0.245\textwidth}
		\centering
		\includegraphics[width=\linewidth,trim=6 4 6 4,clip]{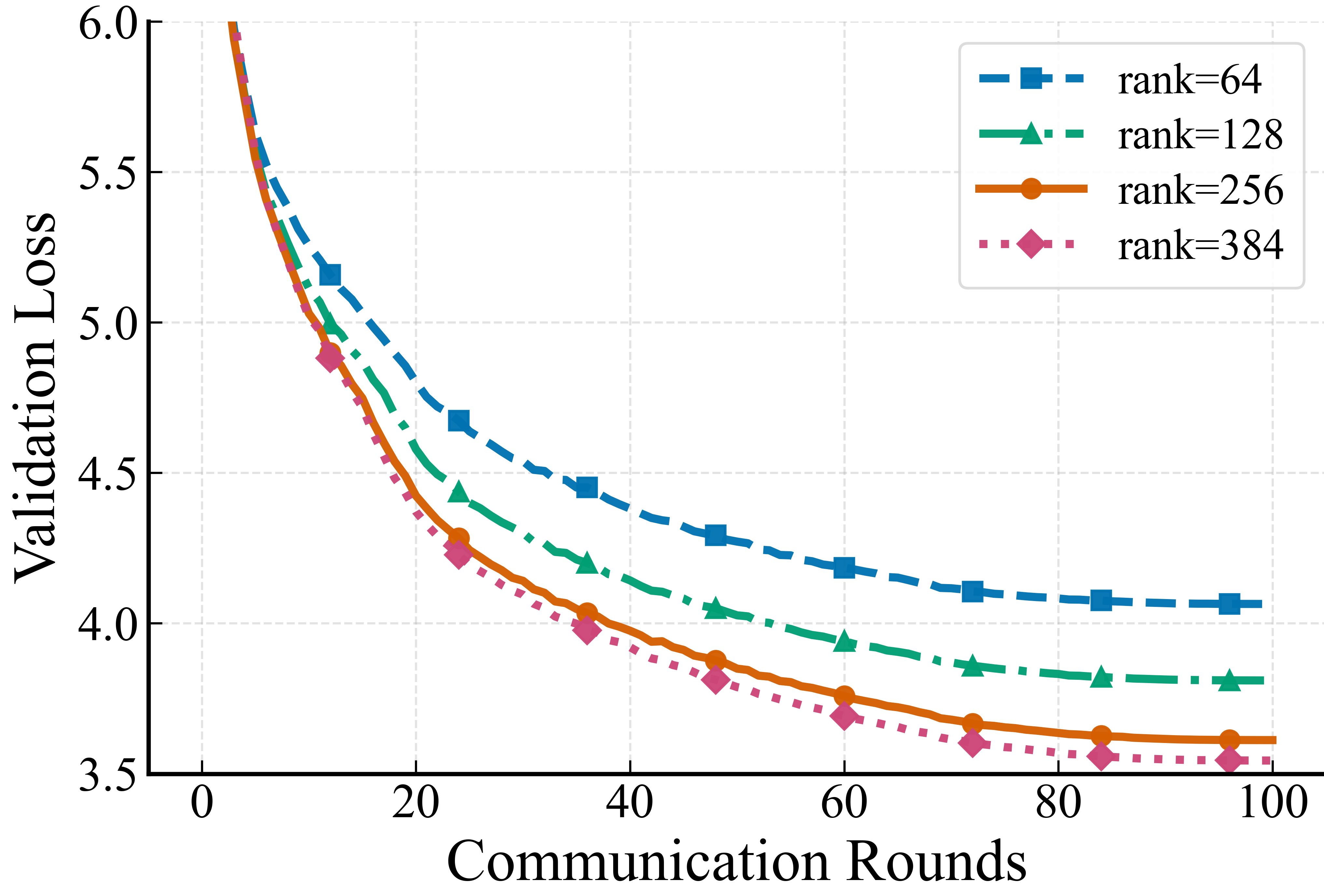}
		\caption{Effect of rank $r$}
		\label{fig:llama_350m}
	\end{subfigure}
	
	\vspace{-1mm}
	\caption{ (a,b,c): Validation loss convergence curves on C4 with LLaMA models of different scales. (d): Effect of the projection rank $r$ on validation loss for LLaMA 350M.}
	\label{fig:llama}
	\vspace{-2mm}
\end{figure*}

\section{Experiments}
\vspace{-2mm}
\subsection{Experiments Settings}
\vspace{-2mm}
\textbf{Datasets.} We evaluate \texttt{FedLore} on both vision and language tasks. (\textit{i}) For image classification, we use CIFAR-100~\cite{krizhevsky2009learning}, Tiny-ImageNet~\cite{le2015tiny}, Food-101 \cite{bossard2014food}. (\textit{ii}) For NLP tasks pretraining, we adopt C4 \cite{raffel2020exploring} dataset. For language understanding, we evaluate SNLI, AG News, QQP, DBPedia-14, QNLI, and MNLI. To simulate data heterogeneity across clients, we follow the Dirichlet partitioning scheme~\cite{hsu2019measuring}.\\% For Dir-$\alpha$ partitioning, smaller $\alpha$ indicates more severe data heterogeneity.\\
\textbf{Model Architectures.} We explore a variety of model types:  (\textit{i}) ViT-Base~\cite{dosovitskiy2020image}, Swin-Base \cite{liu2021swin} and ViT-Tiny~\cite{dosovitskiy2020image}, and (\textit{ii}) LLaMA \cite{touvron2023llama} for large-scale language model and RoBERTa-Base \cite{liu2019roberta}.\\
\textbf{Baselines.} We compare our method against  sota FL algorithms:  FedIT \cite{zhang2023towards}, FFA-LoRA \cite{sun2024improving}, FlexLoRA \cite{bai2024federated}, RoLoRA \cite{chen2024robust}, FLoRA \cite{wang2024flora},  FRLoRA \cite{yan2025federated}, LoRA-FAIR \cite{bian2025lora}, FedFull, and Local Galore.
\\
\textbf{Hyperparameter Settings.} 
The $lr$ is selected from 
$\{10^{-4},\ 3 \times 10^{-4},\ 5 \times 10^{-4},\ 8 \times 10^{-4},\ 10^{-3},\  3 \times10^{-3},\ 5\times10^{-3}\}$. 
Their  weight decay is  selected from 
$\{ 0 ,\ 10^{-3},\  3 \times10^{-3},\ 1\times10^{-2},\ 3\times10^{-2}\}$. For \texttt{FedLore}, $\alpha=1$. We apply cosine learning rate decay. Additional hyperparameter configurations are detailed in the \textbf{Appendix}. We release all code to ensure full reproducibility upon acceptance. 
All results are averaged over 5 runs  with seeds {42, 43, 44, 45, 46} on  NVIDIA RTX 4090 GPU.

%\textbf{Evaluation Metrics.}
%For vision tasks, we report test accuracy and training loss after 100 communication rounds. For language tasks, we report task-specific accuracy metrics used in the GLUE benchmark.
\subsection{Results on Fine-tuning.}
\vspace{-2mm}
\textbf{Vision Benchmark Results.}
As shown in Table~\ref{tab:merged_results}, \texttt{FedLore} consistently matches or exceeds the performance of existing federated fine-tuning approaches under the challenging Dir-0.1 heterogeneous setting. It achieves the highest average accuracy of 90.09\%, compared with 89.99\% for the full-parameter baseline \texttt{FedFull}. Across all six dataset combinations, \texttt{FedLore} either achieves the best result or remains statistically competitive with the strongest baseline. The advantages are particularly noticeable on ViT-Base, where \texttt{FedLore} reaches 90.55\% on CIFAR-100 and 87.11\% on Tiny-ImageNet. Overall, these results indicate that dynamic low-rank subspace optimization provides a more flexible and effective alternative to LoRA training in heterogeneous federated environments.

\textbf{Language Benchmark Results.}
As shown in Table~\ref{tab:roberta_base_glue}, \texttt{FedLore} delivers the strongest overall performance on federated language understanding. It ranks first on all six tasks and achieves the best average accuracy of 90.44\%, surpassing \texttt{FedFull} by 0.26 points and the best LoRA-based baseline by more than 1 point. The improvements are particularly pronounced on QQP, QNLI, and MNLI, suggesting that \texttt{FedLore} is more effective than LoRA training in handling heterogeneous NLP tasks. These results confirm that \texttt{FedLore} is not limited to vision models, but also provides a strong and communication-efficient alternative to  FL for large language transformers.

\subsection{Results on LLaMA pre-training.}

As shown in Table~\ref{tab:galore_compare_full} and Figure~\ref{fig:llama},
\texttt{FedLore} consistently outperforms existing federated baselines across different LLaMA scales.
Compared with \texttt{FedIT}, \texttt{FedLore} significantly reduces validation perplexity, from 6707.62 to 46.20 on LLaMA-60M and from 1921.77 to 28.36 on LLaMA-1B, demonstrating that fixed low-rank adapters severely limit the optimization capacity for federated pre-training.
Meanwhile, \texttt{FedLore} achieves better performance than full-parameter training while retaining low-rank communication and memory efficiency, e.g., obtaining a PPL of 28.36 on LLaMA-1B compared with 35.20 of \texttt{FedFull}.
The inferior performance of \texttt{FedLore-Random}, which uses a shared random projection, further verifies that the gains come from adaptive and data-driven subspace evolution rather than shared projection alone.

\begin{table}[tb]
	\centering
	\caption{Communication and optimizer memory on LLaMA 350M (left), and component ablations on C4 pre-training (right). All costs are in GB.}
	\label{tab:fedlore_system_ablation}
	\vspace{-1mm}
	\setlength{\tabcolsep}{4pt}
	\setlength{\aboverulesep}{0pt}
	\setlength{\belowrulesep}{0pt}
	\renewcommand{\arraystretch}{1.2}
	\footnotesize
	\resizebox{\linewidth}{!}{
		\begin{tabular}{lcc!{\vrule width 1pt}lcc}
			\toprule
			\textbf{Method}
			& \textbf{Comm.}
			& \textbf{Opt. Memory}
			& \textbf{Ablation Setting}
			& \textbf{Val Loss / PPL}
			& \textbf{Comm.} $\downarrow$ \\
			\midrule
			FedIT
			& $0.36$ & $0.72$
			& Local GaLore
			& 3.962 / 52.56 & $0.68$ \\
			
			FedFull
			& $0.68$ & $1.37$
			& \texttt{FedLore}
			& \textbf{3.554 / 34.95} & $0.36$ \\
			
			Local GaLore
			& $0.68$ & $0.54$
			& w/o Shared Projector
			& 3.890 / 48.91 & $0.68$ \\
			
			\texttt{FedLore}
			& $0.36$ & $0.54$
			& w/o Dynamic Subspace
			& 5.952 / 384.7 & $0.18$ \\
			\bottomrule
		\end{tabular}
	}
	\vspace{-1mm}
\end{table}

\subsection{Ablation Study}

\textbf{Effect of Subspace Rank.} Figure~\ref{fig:llama}(d) studies the effect of the projection rank $r$ on LLaMA-350M pre-training.
Larger ranks accelerate convergence and reduce the final validation loss, since the projected subspace captures more informative update directions.
The improvement is significant from $r=64$ to $r=256$, while increasing $r$ to $384$ brings only marginal gains.
This indicates that \texttt{FedLore} achieves a favorable accuracy and communication trade-off with a moderate rank.

\textbf{Ablation study.}
Table~\ref{tab:fedlore_system_ablation} examines the roles of
shared projections and dynamic subspace updates in
LLaMA-350M pre-training on C4.
Removing the shared projector increases validation loss
from 3.554 to 3.890 and communication from 0.36 to 0.68 GB
under the evaluated protocol, supporting the benefits of
shared optimization coordinates for both optimization
and communication efficiency.
Fixing the shared subspace reduces communication to
0.18 GB but increases validation loss to 5.952,
indicating that subspace refresh is important for
maintaining optimization capacity throughout training.
Together, these results support the complementary roles
of subspace sharing and refresh in balancing
model quality and communication cost.

\textbf{Communication and Memory Efficiency Analysis.}
Table~\ref{tab:fedlore_system_ablation} summarizes the communication and memory efficiency on LLaMA 350M.
FedFull incurs the largest cost, requiring $0.68$ GB communication cost and $1.37$ GB optimizer memory.
Local GaLore reduces optimizer memory to $0.54$ GB, but still communicates full updates.
FedIT lowers communication to $0.36$ GB, while requiring $0.72$ GB optimizer memory.
By contrast, \texttt{FedLore} achieves the best trade-off, with $0.36$ GB total communication and $0.54$ GB optimizer memory.
This represents approximately $47\%$ lower communication than FedFull/Local GaLore.

\section{Conclusion}

We introduced \texttt{FedLore}, a communication- and memory-efficient framework for FL foundation-model training. A shared, evolving basis enables exact low-rank aggregation without imposing a fixed rank constraint on accumulated updates. Our analysis characterizes projection--heterogeneity coupling and establishes projected-SGD convergence under global-gradient coverage. Experiments across vision, language, and federated pre-training support the approach. Future work will develop coverage-aware subspace selection and efficient synchronization for intermittent participation.
%\newpage
%\bibliographystyle{plain}
%\bibliography{reference}

\newpage

\bibliography{aaai2027}
\bibliographystyle{iclr2027_conference}
\appendix
%\onecolumn
%\input{sections/proof}
% FedLore appendix: replace the existing appendix block with this file.
% Preamble: amsmath, amssymb, amsthm, booktabs, graphicx, multirow,
% float, algorithm, algorithmic, hyperref. Keep existing theorem definitions.
% Compile the main manuscript at least twice to resolve directory page numbers.
% Original equations, data, statements and existing labels are retained.
\clearpage
\onecolumn
\section*{Ethics Statement}

This work studies communication- and memory-efficient federated
learning using existing vision and language benchmarks:
CIFAR-100, Tiny-ImageNet, Food-101, C4, SNLI, AG News, QQP,
DBPedia-14, QNLI, and MNLI.
Our experiments simulate federated clients using benchmark
partitions rather than collecting data from real users.
Although FedLore keeps training data local within the federated
protocol, communicated updates and shared subspaces may still
expose information about local data. Neither federated training
nor low-rank projection alone provides a formal privacy guarantee;
FedLore does not establish differential privacy or resistance
to inference attacks.
Models trained with this approach may also inherit biases or
harmful content from the underlying datasets and pretrained
models. Deployment with sensitive data therefore requires
additional privacy safeguards and application-specific evaluation.

\section*{Reproducibility Statement}

The paper provides the FedLore algorithm and its theoretical
assumptions, with detailed derivations and proofs in the appendix.
The experimental settings in the main text and the appendices
on federated pre-training and fine-tuning describe the datasets,
model architectures, client partitions, participation settings,
communication rounds, local optimization steps, projection ranks,
optimizer configurations, and hyperparameter selection.
The paper also reports the hardware and the five random seeds
used for repeated experiments.
We will publicly release the implementation and experiment
scripts upon acceptance.

\section*{AI Use Statement}

We used large language models to assist with language editing,
including sentence rephrasing, grammar correction, and readability
improvements. We did not use these tools for research ideation,
methodology or experiment design, theoretical derivations,
method implementation, data generation or processing, or
interpretation of results.
The authors reviewed and revised all AI-assisted text and take
full responsibility for the final manuscript, including its
claims, analyses, and conclusions.
\clearpage
\appendix
\section*{Appendix Contents}
\phantomsection
\label{app:fedlore_contents}
\pdfbookmark[0]{Appendix Contents}{fedlore.contents}
\begingroup
\small
\setlength{\parindent}{0pt}
\setlength{\parskip}{0pt}
\newcommand{\FedLoreTOCEntry}[3]{%
	\par\noindent\hspace*{#1}%
	\hyperref[#2]{\ref*{#2}\quad #3}\nobreak\dotfill
	\hyperref[#2]{\pageref*{#2}}\par
}
\vspace{0.45em}
\FedLoreTOCEntry{0pt}{app:fedlore_outline_1}{\textbf{Details of Pre-Training Experiments}}
\FedLoreTOCEntry{1.5em}{app:fedlore_outline_2}{Architecture and Hyperparameters}
\FedLoreTOCEntry{1.5em}{app:fedlore_outline_3}{Federated Pre-Training Setup}
\FedLoreTOCEntry{1.5em}{app:fedlore_outline_4}{Optimization Details}
\FedLoreTOCEntry{1.5em}{app:fedlore_outline_5}{Communication and Memory Analysis}
\FedLoreTOCEntry{1.5em}{app:fedlore_outline_6}{Hardware and Reproducibility}
\FedLoreTOCEntry{1.5em}{app:fedlore_outline_7}{Learning Rate Selection}
\vspace{0.45em}
\FedLoreTOCEntry{0pt}{app:fedlore_finetuning}{\textbf{Details of Fine-Tuning Experiments}}
\FedLoreTOCEntry{1.5em}{app:fedlore_outline_9}{Vision Benchmark}
\FedLoreTOCEntry{1.5em}{app:fedlore_outline_10}{Language Benchmark}
\FedLoreTOCEntry{1.5em}{app:fedlore_outline_11}{Hyperparameter Selection}
\FedLoreTOCEntry{1.5em}{app:fedlore_outline_12}{Learning Rate Selection}
\vspace{0.45em}
\FedLoreTOCEntry{0pt}{app:fedlore_resources}{\textbf{Communication and Memory Overhead}}
\FedLoreTOCEntry{1.5em}{app:fedlore_outline_14}{Communication Cost}
\FedLoreTOCEntry{1.5em}{app:fedlore_outline_15}{Memory Estimates}
\vspace{0.45em}
\FedLoreTOCEntry{0pt}{app:flsp_analysis}{\textbf{Shared Projections in Federated Optimization}}
\FedLoreTOCEntry{1.5em}{sec:flsp_setup}{Setting and assumptions}
\FedLoreTOCEntry{1.5em}{sec:flsp_bias}{Projection disagreement and aggregation bias}
\FedLoreTOCEntry{1.5em}{sec:flsp_convergence}{Convergence under partial participation}
\FedLoreTOCEntry{1.5em}{sec:flsp_separation}{A strict separation with exact local SVD projections}
\FedLoreTOCEntry{1.5em}{sec:flsp_actual_selection}{Coverage residual of the implemented projector average}
\FedLoreTOCEntry{1.5em}{sec:flsp_selection}{Coverage of an approximate shared subspace}
\FedLoreTOCEntry{1.5em}{sec:flsp_sync}{Exact synchronization and optimizer scope}
\vspace{0.45em}
\FedLoreTOCEntry{0pt}{app:fedlore_baseline}{\textbf{Baseline Algorithm}}
\endgroup
\clearpage
\section{Details of Pre-Training Experiments}
\label{app:fedlore_outline_1}

\subsection{Architecture and Hyperparameters}
\label{app:fedlore_outline_2}

We provide detailed configurations of the LLaMA models and hyperparameters used in
federated pre-training experiments. We adopt the LLaMA architecture and evaluate
four model scales, including LLaMA-60M, LLaMA-130M, LLaMA-350M, and LLaMA-1B.
All models are pretrained on the C4 dataset under the federated setting.
For all experiments, we use a maximum sequence length of 256 and a batch size of
16 for each client. The training process contains 100 communication rounds, and
each round performs 50 local optimization steps.

\begin{table}[!htbp]
	\centering
	\caption{LLaMA architecture configurations used in
		federated pre-training. Model names follow the
		GaLore implementation.}
	\label{tab:llama_architecture}
	\begin{tabular}{lrrrr}
		\toprule
		Model & Hidden & Intermediate & Heads & Layers \\
		\midrule
		LLaMA-60M  & 512  & 1376 & 8  & 8  \\
		LLaMA-130M & 768  & 2048 & 12 & 12 \\
		LLaMA-350M & 1024 & 2736 & 16 & 24 \\
		LLaMA-1B   & 2048 & 5461 & 32 & 24 \\
		\bottomrule
	\end{tabular}
\end{table}

For optimization, we adopt AdamW in the shared low-rank subspace.
The learning rate is selected from
$\{10^{-4},3\times10^{-4},5\times10^{-4},
8\times10^{-4},10^{-3},3\times10^{-3},5\times10^{-3}\}$
according to the validation perplexity.
For FedLore, we set the projection rank according to the model scale and use
$\alpha=1$. The learning rate follows cosine decay during training.

\subsection{Federated Pre-Training Setup}
\label{app:fedlore_outline_3}

We follow the standard federated pre-training protocol. The C4 dataset is
partitioned among clients using a Dirichlet distribution to simulate
heterogeneous data distributions.
The detailed federated training configurations are summarized below:
Different from LoRA-based federated fine-tuning methods that restrict model
updates to fixed adapter spaces, FedLore performs optimization in dynamically
evolving shared low-rank subspaces while reconstructing updates in the original
parameter space. This enables full-parameter adaptation with low communication
and memory overhead.
\begin{table}[!htbp]
	\centering
	\caption{Federated pre-training configurations.}
	\label{tab:pretrain_config}
	\begin{tabular}{lc}
		\toprule
		Configuration & Value \\
		\midrule
		Dataset & C4 \\
		Number of clients & 20 \\
		Client participation & 100\% \\
		Communication rounds & 100 \\
		Local steps per round & 50 \\
		Batch size & 16 \\
		Sequence length & 256 \\
		Optimizer & AdamW \\
		Weight decay & 0 \\
		LR scheduler & Cosine decay \\
		\bottomrule
	\end{tabular}
\end{table}

\subsection{Optimization Details}
\label{app:fedlore_outline_4}

For all LLaMA pre-training experiments, we use AdamW optimizer with
$\beta_1=0.9$ and $\beta_2=0.95$. The weight decay is set to zero since weight
decay introduces a full-rank component into accumulated local updates, which
breaks the low-rank communication property.
Specifically, after $K$ local optimization steps, the local update can be written
as

\begin{equation}
	\Delta W_i^t =
	\sum_{k=0}^{K-1}\eta \alpha P^t
	\Delta Z_i^{t,k},
\end{equation}

where $\Delta Z_i^{t,k}$ is the low-rank update in the shared projection space.
Therefore, only the low-rank representation is communicated between clients and
the server.

\subsection{Communication and Memory Analysis}
\label{app:fedlore_outline_5}

For a weight matrix
$W\in\mathbb{R}^{m\times n}$,
FedLore communicates the projected update
$\Delta Z\in\mathbb{R}^{r\times n}$,
while the shared projection matrix is maintained by the server.
The communication complexity is therefore
\begin{equation}
	O(r(m+n)),
\end{equation}
which is comparable to LoRA-based methods.
The optimizer states are also maintained in the projected space with memory
complexity
\begin{equation}
	O(rn).
\end{equation}
The comparison with existing methods is shown in
Table~\ref{tab:communication_memory}.
\begin{table}[!htbp]
	\centering
	\caption{Communication and optimizer memory comparison.
		$r\ll\min(m,n)$.}
	\label{tab:communication_memory}
	\begin{tabular}{lcc}
		\toprule
		Method & Communication Cost & Optimizer Memory \\
		\midrule
		FedIT & $O(2r(m+n))$ & $O(r(m+n))$ \\
		FedFull & $O(2mn)$ & $O(mn)$ \\
		Local GaLore & $O(2mn)$ & $O(rn)$ \\
		FedLore & $O(2r(m+n))$ & $O(rn)$ \\
		\bottomrule
	\end{tabular}
\end{table}

\subsection{Hardware and Reproducibility}
\label{app:fedlore_outline_6}

All experiments are conducted on NVIDIA RTX 4090 GPUs.
For each experiment, we run five independent trials with random seeds
$\{42,43,44,45,46\}$ and report the average performance.
The implementation will be released to facilitate reproducibility.

\subsection{Learning Rate Selection}
\label{app:fedlore_outline_7}

To ensure a fair comparison, we perform learning rate tuning for all methods
under each model scale. The learning rate is selected from
$\{10^{-4},\ 3 \times 10^{-4},\ 5 \times 10^{-4},\ 8 \times 10^{-4},\ 10^{-3},\  3 \times10^{-3},\ 5\times10^{-3}\}$ according to the validation
perplexity on the C4 validation set.
Table~\ref{tab:lr_selection} reports the optimal learning rates used by
different methods for different LLaMA model sizes. We observe that
full-parameter methods generally require smaller learning rates due to the
large optimization space, while low-rank optimization methods can adopt larger
learning rates owing to the reduced optimization dimension. FedLore remains
stable across different model scales and consistently achieves competitive
performance with a relatively large learning rate.

\begin{table*}[!htbp]
	\centering
	\caption{
		Optimal learning rates for different methods and LLaMA model scales.
		The learning rate is selected according to validation perplexity on C4.
	}
	\label{tab:lr_selection}
	\resizebox{\textwidth}{!}{
		\begin{tabular}{lcccc}
			\toprule
			Method
			& LLaMA-60M
			& LLaMA-130M
			& LLaMA-350M
			& LLaMA-1B \\
			\midrule
			
			FedIT
			& $1\times10^{-3}$
			& $8\times10^{-4}$
			& $5\times10^{-4}$
			& $5\times10^{-4}$ \\
			
			Local GaLore
			& $3\times10^{-3}$
			& $3\times10^{-3}$
			& $1\times10^{-3}$
			& $1\times10^{-3}$ \\
			
			FedFull
			& $1\times10^{-3}$
			& $8\times10^{-4}$
			& $5\times10^{-4}$
			& $5\times10^{-4}$ \\
			
			FedLore-Random
			& $3\times10^{-3}$
			& $3\times10^{-3}$
			& $3\times10^{-3}$
			& $1\times10^{-3}$ \\
			
			FedLore
			& $3\times10^{-3}$
			& $3\times10^{-3}$
			& $3\times10^{-3}$
			& $1\times10^{-3}$ \\
			
			\bottomrule
		\end{tabular}
	}
\end{table*}

\clearpage
\section{Details of Fine-Tuning Experiments}
\label{app:fedlore_finetuning}

We provide detailed experimental configurations for the vision and language
federated fine-tuning experiments. All experiments are conducted under
heterogeneous federated learning settings, where client data are partitioned
using the Dirichlet distribution.

\subsection{Vision Benchmark}
\label{app:fedlore_outline_9}

For vision federated fine-tuning, we evaluate FedLore on three benchmark
datasets, including CIFAR-100, Tiny-ImageNet, and Food-101. We adopt two
representative vision transformer architectures, i.e., Swin-Base and ViT-Base,
as backbone models.
Following previous federated fine-tuning works, we simulate data
heterogeneity using a Dirichlet distribution with concentration parameter
$\alpha=0.1$. The federated training process contains 100 communication rounds
with 50 clients. All clients participate in each communication round.
The detailed experimental configurations are summarized in
Table~\ref{tab:vision_setting}.

\begin{table}[!htbp]
	\centering
	\caption{Experimental configurations for vision federated fine-tuning.}
	\label{tab:vision_setting}
	\begin{tabular}{lc}
		\toprule
		Configuration & Value \\
		\midrule
		Datasets & CIFAR-100, Tiny-ImageNet, Food-101 \\
		Backbone & Swin-Base, ViT-Base \\
		Number of clients & 50 \\
		Data partition & Dirichlet ($\alpha=0.1$) \\
		Communication rounds & 100 \\
		Local steps $K$ & 50 \\
		Batch size & 16 \\
		LoRA rank & 8 \\
		Projection rank & 8 \\
		Optimizer & AdamW \\
		Learning rate & Grid search \\
		LR scheduler & Cosine decay \\
		Weight decay & 0 \\
		\bottomrule
	\end{tabular}
\end{table}

\subsection{Language Benchmark}
\label{app:fedlore_outline_10}

For language federated fine-tuning, we evaluate FedLore with
RoBERTa-Base on six text classification benchmarks:
SNLI, AG News, QQP, DBPedia-14, QNLI, and MNLI.
The data among clients are partitioned using a Dirichlet distribution with
$\alpha=0.5$. We use 20 clients and train the model for 100 communication
rounds. Each client performs 50 local optimization steps in each round.
The detailed configurations are summarized in
Table~\ref{tab:nlp_setting}.

\begin{table}[!htbp]
	\centering
	\caption{Experimental configurations for language federated fine-tuning.}
	\label{tab:nlp_setting}
	\begin{tabular}{lc}
		\toprule
		Configuration & Value \\
		\midrule
		Datasets & SNLI, AG News, QQP, DBPedia-14, QNLI, MNLI \\
		Backbone & RoBERTa-Base \\
		Number of clients & 20 \\
		Data partition & Dirichlet ($\alpha=0.5$) \\
		Communication rounds & 100 \\
		Local steps $K$ & 50 \\
		Batch size & 16 \\
		LoRA rank & 8 \\
		Projection rank & 8 \\
		Optimizer & AdamW \\
		Learning rate & Grid search \\
		LR scheduler & Cosine decay \\
		Weight decay & 0 \\
		\bottomrule
	\end{tabular}
\end{table}

\subsection{Hyperparameter Selection}
\label{app:fedlore_outline_11}

For all federated fine-tuning experiments, the learning rate is selected from
\[
\{10^{-4},\ 3 \times 10^{-4},\ 5 \times 10^{-4},\ 8 \times 10^{-4},\ 10^{-3},\  3 \times10^{-3},\ 5\times10^{-3}\}
\]
according to the validation performance.
For all methods, we use the same search space to ensure a fair comparison.
The LoRA rank and projection rank are both set to 8 following previous
parameter-efficient federated fine-tuning approaches.
FedLore adopts AdamW optimizer with zero weight decay. The learning rate
follows cosine decay during training. All reported results are averaged over
five independent runs with random seeds
$\{42,43,44,45,46\}$.

\subsection{Learning Rate Selection}
\label{app:fedlore_outline_12}

To ensure a fair comparison, we perform learning rate tuning for all methods.
The learning rate is selected from
$\{10^{-4},\ 3 \times 10^{-4},\ 5 \times 10^{-4},\ 8 \times 10^{-4},\ 10^{-3},\  3 \times10^{-3},\ 5\times10^{-3}\}$
according to the validation performance.
Tables~\ref{tab:vision_lr} reports the selected learning
rates for vision and language federated fine-tuning experiments, respectively.
Different methods use their individually optimized learning rates.

\begin{table*}[!htbp]
	\centering
	\caption{
		Selected learning rates for vision federated fine-tuning experiments.
	}
	\label{tab:vision_lr}
	\resizebox{\textwidth}{!}{
		\begin{tabular}{lcccccc}
			\toprule
			Method &
			\multicolumn{2}{c}{CIFAR-100} &
			\multicolumn{2}{c}{Tiny-ImageNet} &
			\multicolumn{2}{c}{Food-101}\\
			\cmidrule(lr){2-3}
			\cmidrule(lr){4-5}
			\cmidrule(lr){6-7}
			&
			Swin-Base & ViT-Base &
			Swin-Base & ViT-Base &
			Swin-Base & ViT-Base\\
			\midrule
			
			FedIT
			& $1e^{-3}$ & $8e^{-4}$
			& $1e^{-3}$ & $8e^{-4}$
			& $1e^{-3}$ & $8e^{-4}$ \\
			
			FFA-LoRA
			& $3e^{-3}$ & $3e^{-3}$
			& $3e^{-3}$ & $3e^{-3}$
			& $3e^{-3}$ & $3e^{-3}$ \\
			
			FlexLoRA
			& $1e^{-3}$ & $1e^{-3}$
			& $1e^{-3}$ & $1e^{-3}$
			& $1e^{-3}$ & $1e^{-3}$ \\
			
			LoRA-FAIR
			& $3e^{-3}$ & $3e^{-3}$
			& $3e^{-3}$ & $3e^{-3}$
			& $3e^{-3}$ & $3e^{-3}$ \\
			
			RoLoRA
			& $1e^{-3}$ & $1e^{-3}$
			& $1e^{-3}$ & $1e^{-3}$
			& $1e^{-3}$ & $1e^{-3}$ \\
			
			FLoRA
			& $3e^{-3}$ & $3e^{-3}$
			& $3e^{-3}$ & $3e^{-3}$
			& $3e^{-3}$ & $3e^{-3}$ \\
			
			FRLoRA
			& $5e^{-4}$ & $5e^{-4}$
			& $5e^{-4}$ & $5e^{-4}$
			& $5e^{-4}$ & $5e^{-4}$ \\
			
			Local GaLore
			& $3e^{-3}$ & $3e^{-3}$
			& $3e^{-3}$ & $3e^{-3}$
			& $3e^{-3}$ & $3e^{-3}$ \\
			
			FedFull
			& $5e^{-4}$ & $5e^{-4}$
			& $5e^{-4}$ & $5e^{-4}$
			& $5e^{-4}$ & $5e^{-4}$ \\
			
			FedLore
			& $3e^{-3}$ & $3e^{-3}$
			& $3e^{-3}$ & $3e^{-3}$
			& $3e^{-3}$ & $3e^{-3}$ \\
			
			\bottomrule
	\end{tabular}}
\end{table*}

\begin{table}[!htbp]
	\centering
	\caption{
		Selected learning rates for RoBERTa-Base federated fine-tuning on different
		NLP datasets.
	}
	\label{tab:nlp_lr_selection}
	\resizebox{\columnwidth}{!}{
		\begin{tabular}{lcccccc}
			\toprule
			Method
			& SNLI
			& AG News
			& QQP
			& DBPedia-14
			& QNLI
			& MNLI \\
			\midrule
			
			FedIT
			& $5e^{-4}$
			& $1e^{-3}$
			& $5e^{-4}$
			& $1e^{-3}$
			& $5e^{-4}$
			& $5e^{-4}$ \\
			
			FFA-LoRA
			& $1e^{-3}$
			& $3e^{-3}$
			& $1e^{-3}$
			& $3e^{-3}$
			& $1e^{-3}$
			& $1e^{-3}$ \\
			
			FlexLoRA
			& $5e^{-4}$
			& $1e^{-3}$
			& $5e^{-4}$
			& $1e^{-3}$
			& $5e^{-4}$
			& $5e^{-4}$ \\
			
			LoRA-FAIR
			& $1e^{-3}$
			& $3e^{-3}$
			& $1e^{-3}$
			& $3e^{-3}$
			& $1e^{-3}$
			& $1e^{-3}$ \\
			
			RoLoRA
			& $1e^{-3}$
			& $3e^{-3}$
			& $1e^{-3}$
			& $3e^{-3}$
			& $1e^{-3}$
			& $1e^{-3}$ \\
			
			FLoRA
			& $1e^{-3}$
			& $3e^{-3}$
			& $1e^{-3}$
			& $3e^{-3}$
			& $1e^{-3}$
			& $1e^{-3}$ \\
			
			FRLoRA
			& $5e^{-4}$
			& $1e^{-3}$
			& $5e^{-4}$
			& $1e^{-3}$
			& $5e^{-4}$
			& $5e^{-4}$ \\
			
			Local GaLore
			& $3e^{-3}$
			& $3e^{-3}$
			& $3e^{-3}$
			& $5e^{-3}$
			& $3e^{-3}$
			& $3e^{-3}$ \\
			
			FedFull
			& $5e^{-4}$
			& $5e^{-4}$
			& $5e^{-4}$
			& $1e^{-3}$
			& $5e^{-4}$
			& $5e^{-4}$ \\
			
			FedLore
			& $3e^{-3}$
			& $3e^{-3}$
			& $3e^{-3}$
			& $5e^{-3}$
			& $3e^{-3}$
			& $3e^{-3}$ \\
			
			\bottomrule
		\end{tabular}
	}
\end{table}

\clearpage
\section{Communication and Memory Overhead}
\label{app:fedlore_resources}

\subsection{Communication Cost}
\label{app:fedlore_outline_14}

We analyze the communication and memory overhead of FedLore compared with
existing federated optimization methods. Consider a weight matrix
$W\in\mathbb{R}^{m\times n}$ and a low-rank projection dimension
$r\ll\min(m,n)$.
For full-parameter federated training (FedFull), each client needs to upload and
download the complete model update, resulting in a communication cost of
$
\mathcal{O}(2mn).
$
For LoRA-based federated fine-tuning methods, only the low-rank adapter
parameters are communicated. The communication cost is
$
\mathcal{O}(2r(m+n)).
$

The communication comparison is summarized in
Table~\ref{tab:comm_complexity}.

\begin{table}[!htbp]
	\centering
	\caption{
		Communication and optimizer memory complexity comparison.
	}
	\label{tab:comm_complexity}
	\begin{tabular}{lcc}
		\toprule
		Method & Communication Cost & Optimizer Memory\\
		\midrule
		FedIT & $\mathcal{O}(2r(m+n))$
		& $\mathcal{O}(r(m+n))$\\
		
		FedFull & $\mathcal{O}(2mn)$
		& $\mathcal{O}(mn)$\\
		
		Local GaLore
		& $\mathcal{O}(2mn)$
		& $\mathcal{O}(rn)$\\
		
		FedLore
		& $\mathcal{O}(2r(m+n))$
		& $\mathcal{O}(rn)$\\
		
		\bottomrule
	\end{tabular}
\end{table}

\subsection{Memory Estimates}
\label{app:fedlore_outline_15}

Since the GPU memory consumption of individual components is difficult to
measure precisely, we estimate the memory usage of model weights and optimizer
states based on the number of trainable parameters.
All memory estimations are calculated using BF16 precision, where each
parameter requires 2 bytes. For AdamW optimization, two additional first- and
second-order moment states are maintained for trainable parameters.
For FedLore, the model weights are stored in the original full-parameter space,
while optimizer states are maintained only in the shared low-rank projection
space. Therefore, FedLore preserves the expressive capability of full-parameter
training while requiring significantly less optimizer memory.
 Table~\ref{tab:optimizer_memory} report the
estimated memory usage for different LLaMA model sizes.

\begin{comment}

\begin{table}[!htbp]
	\centering
	\caption{
		Memory estimation of weight parameters under BF16 precision.
	}
	\label{tab:weight_memory}
	\begin{tabular}{lcccc}
		\toprule
		Method & 60M & 130M & 350M & 1B\\
		\midrule
		
		Full-Rank
		&0.12G&0.25G&0.68G&2.60G\\
		
		GaLore
		&0.12G&0.25G&0.68G&2.60G\\
		
		Low-Rank
		&0.08G&0.18G&0.36G&1.19G\\
		
		LoRA
		&0.20G&0.44G&1.04G&3.79G\\
		
		FedLore
		&0.12G&0.25G&0.68G&2.60G\\
		
		\bottomrule
	\end{tabular}
\end{table}
	内容...
\end{comment}
\begin{table}[!htbp]
	\centering
	\caption{
		Memory estimation of optimizer states under BF16 precision.
	}
	\label{tab:optimizer_memory}
	\begin{tabular}{lcccc}
		\toprule
		Method & 60M & 130M & 350M & 1B\\
		\midrule
		
		Full-Rank
		&0.23G&0.51G&1.37G&5.20G\\
		
		GaLore
		&0.13G&0.28G&0.54G&1.78G\\
		
		Low-Rank
		&0.17G&0.37G&0.72G&2.38G\\
		
		LoRA
		&0.17G&0.37G&0.72G&2.38G\\
		
		FedLore
		&0.13G&0.28G&0.54G&1.78G\\
		
		\bottomrule
	\end{tabular}
\end{table}

Overall, FedLore achieves the best trade-off between memory efficiency and
optimization capability. Unlike LoRA-based methods, FedLore updates the full
model parameters through dynamically evolving low-rank gradient subspaces.
Meanwhile, compared with full-parameter training, FedLore significantly reduces
optimizer memory by maintaining adaptive states only in the projected space.

\providecommand{\flspR}{\mathbb{R}}
\providecommand{\flspE}{\mathbb{E}}
\providecommand{\flspavg}[1]{\frac{1}{N}\sum_{i=1}^{N}#1}
\providecommand{\flspnf}[1]{\left\lVert #1\right\rVert_F}
\providecommand{\flspno}[1]{\left\lVert #1\right\rVert_2}
\providecommand{\flspip}[2]{\left\langle #1,#2\right\rangle_F}
\providecommand{\flspdiag}{\operatorname{diag}}
\providecommand{\flsptr}{\operatorname{tr}}

\clearpage
\section{Shared Projections in Federated Optimization}
\label{app:flsp_analysis}

We analyze projected SGD to isolate the effect of shared optimization
subspaces. We first characterize the exact coupling between client-specific
projectors and gradient heterogeneity. A unified convergence bound separates
this coupling from stochastic noise, client sampling, and local drift.
For shared projectors, a sharper analysis controls projected stationarity
without a coverage assumption and exposes the uncovered gradient as an
explicit residual. We then bound that residual for FedLore's candidate-projector
average and identify sufficient conditions for full-gradient convergence.
Exact coordinate aggregation is also established for adaptive updates;
the stationarity results themselves do not analyze coordinatewise Adam.

\subsection{Setting and assumptions}
\label{sec:flsp_setup}
Let
\[
f(W)=\flspavg{f_i(W)},\qquad W\in\flspR^{m\times n},\qquad N\ge2.
\]
We use the Frobenius inner product $\flspip XY=\operatorname{tr}(X^\top Y)$,
the Frobenius norm $\flspnf X$, and the operator norm $\flspno X$.
For each round $t$, let $\mathcal F_t$ contain the complete history and
the round-start model $W^t$ and projectors
\[
\Pi_i^t=P_i^t(P_i^t)^\top,\qquad
(P_i^t)^\top P_i^t=I_r,\qquad P_i^t\in\flspR^{m\times r}.
\]
The rank satisfies $1\le r\le m$. Here $P_i^t$ is a basis and $\Pi_i^t$
is an orthogonal projection operator.
Projectors may depend on past iterates, past data, or a separate construction
batch already included in $\mathcal F_t$. They remain fixed during the round.
The shared case is $\Pi_i^t=\Pi^t$ for all $i$.

\paragraph{Sampling and the stochastic oracle.}
After $\mathcal F_t$ is fixed, a subset $\mathcal S_t$ of size $S$ is sampled
uniformly without replacement. Conditional on $\mathcal F_t$, sampling is
independent of the fresh within-round oracle streams. Each virtual local
trajectory is a function only of $\mathcal F_t$ and its client oracle stream,
not of $\mathcal S_t$; only selected trajectories are executed. This
conditional independence justifies averaging virtual trajectories over the
uniformly sampled subset. Basis construction uses earlier draws or a
separate construction batch; the local optimization draws below are fresh.
We use positive stochastic gradients throughout. Unless a result explicitly
allows round-dependent stepsizes, let $\gamma=\eta\alpha>0$ be constant.
There is no weight decay in the stationarity analysis. The updates are
\begin{align}
	W_i^{t,0}&=W^t,\nonumber\\
	W_i^{t,k+1}&=W_i^{t,k}-\gamma\Pi_i^t g_i^{t,k},
	\quad k=0,\ldots,K-1,\label{eq:flsp_local}\\
	W^{t+1}&=W^t-\frac{\gamma}{S}
	\sum_{i\in\mathcal S_t}\sum_{k=0}^{K-1}\Pi_i^t g_i^{t,k}.
	\label{eq:flsp_server}
\end{align}
Let $\mathcal H_{t,k}$ include $\mathcal F_t$, the sampled subset, and all
virtual oracle draws at steps strictly less than $k$. In particular,
$W_i^{t,k}$ is $\mathcal H_{t,k}$-measurable. The current noises
$\varepsilon_i^{t,k}=g_i^{t,k}-\nabla f_i(W_i^{t,k})$ satisfy
\begin{equation}
	\flspE[\varepsilon_i^{t,k}\mid\mathcal H_{t,k}]=0,
	\qquad
	\flspE[\flspnf{\varepsilon_i^{t,k}}^2\mid\mathcal H_{t,k}]\le\sigma^2.
	\label{eq:flsp_oracle}
\end{equation}
For distinct clients $i\ne j$, assume
\[
\flspE[\flspip{\Pi_i^t\varepsilon_i^{t,k}}{\Pi_j^t\varepsilon_j^{t,k}}
\mid\mathcal H_{t,k}]=0.
\]
Independent client oracle draws conditional on $\mathcal H_{t,k}$ suffice.
Across steps, the martingale-difference property in
\eqref{eq:flsp_oracle} supplies the required orthogonality.
Write $\flspE_t[\cdot]=\flspE[\cdot\mid\mathcal F_t]$.
All objectives are real-valued. The finite second moments needed below
follow by induction from deterministic $W^0$, smoothness, bounded oracle
second moments, finite deterministic stepsizes, and contraction of the
projectors. Smoothness then also ensures integrability of the objectives.

\begin{assumption}[Smoothness and a finite lower bound]
	\label{ass:flsp_smooth}
	Each $f_i$ is differentiable and $L$-smooth, with $L>0$:
	\[
	\flspnf{\nabla f_i(X)-\nabla f_i(Y)}\le L\flspnf{X-Y}.
	\]
	The global objective satisfies $f^\star=\inf_W f(W)>-\infty$.
	Let $W^0$ be fixed and $\Delta_0=f(W^0)-f^\star$.
\end{assumption}

\begin{assumption}[Bounded gradient heterogeneity]
	\label{ass:flsp_hetero}
	For all $W$,
	\[
	\flspavg{\flspnf{\nabla f_i(W)-\nabla f(W)}^2}\le\zeta^2.
	\]
	It suffices that this bound hold almost surely at the round-start iterates.
\end{assumption}

At a round-start point define
\begin{align}
	G_t&=\nabla f(W^t),&
	d_i^t&=\nabla f_i(W^t)-G_t,&
	A_t&=\flspavg{\Pi_i^t},\nonumber\\
	b_t&=\flspavg{(\Pi_i^t-A_t)d_i^t},&
	q_t&=\flspavg{\Pi_i^t\nabla f_i(W^t)},&
	v_t&=\flspno{A_t-A_t^2}.
	\label{eq:flsp_definitions}
\end{align}
The scalar $v_t$ measures projection dispersion. It is independent of the
choice of orthonormal bases for the subspaces. The Frobenius dispersion
used in the motivating experiment satisfies
\[
\frac1N\sum_i\flspnf{\Pi_i^t-A_t}^2
=\operatorname{tr}(A_t-A_t^2).
\]
This trace and $v_t=\|A_t-A_t^2\|_2$ are different quantities;
$v_t\le\operatorname{tr}(A_t-A_t^2)\le m v_t$.

\begin{assumption}[Aggregate global-gradient coverage]
	\label{ass:flsp_coverage}
	There exists $\mu\in(0,1]$ such that almost surely, for every round,
	\begin{equation}
		\flspip{G_t}{A_tG_t}\ge\mu\flspnf{G_t}^2.
		\label{eq:flsp_coverage}
	\end{equation}
\end{assumption}

Condition~\eqref{eq:flsp_coverage} concerns only the current global gradient.
It takes the respective forms
\[
\flspnf{\Pi^tG_t}^2\ge\mu\flspnf{G_t}^2,
\qquad
\frac1N\sum_i\flspnf{\Pi_i^tG_t}^2\ge\mu\flspnf{G_t}^2
\]
for shared and client-specific projections.
Neither statement follows from coverage of each client's own gradient.
The same form of coverage is used throughout, and comparisons below use a
common valid value of $\mu$.

The order of conditioning matters. A projector constructed from current
oracle noise need not preserve its zero mean, and a projector constructed
from the current participating subset may be correlated with that subset.
The results below apply when projectors are chosen before sampling and the
fresh oracle draws, as specified above. Reusing earlier rounds is permitted.
Approximate coverage is treated in Proposition~\ref{prop:flsp_additive}.

\subsection{Projection disagreement and aggregation bias}
\label{sec:flsp_bias}

\begin{theorem}[Bias identity, sharp worst-case bound, and cancellation]
	\label{thm:flsp_bias}
	Fix any orthogonal projectors $\Pi_1,\ldots,\Pi_N$ and set
	$A=N^{-1}\sum_i\Pi_i$. For matrices $d_i$ with $\sum_i d_i=0$, define
	$b=N^{-1}\sum_i(\Pi_i-A)d_i$. Then:
	\begin{enumerate}
		\item For any $G$ and $G_i=G+d_i$,
		\begin{equation}
			\flspavg{\Pi_iG_i}=AG+b.
			\label{eq:flsp_bias_identity}
		\end{equation}
		\item The following operator identity and bounds hold:
		\begin{equation}
			\flspavg{(\Pi_i-A)^2}=A-A^2,\qquad
			0\preceq A-A^2\preceq\tfrac14 I.
			\label{eq:flsp_variance_identity}
		\end{equation}
		Moreover, for every $z\ge0$,
		\begin{equation}
			\sup_{\substack{\sum_i d_i=0\\N^{-1}\sum_i\flspnf{d_i}^2\le z^2}}
			\flspnf{\flspavg{(\Pi_i-A)d_i}}^2
			=z^2\flspno{A-A^2}.
			\label{eq:flsp_sharp_bias}
		\end{equation}
		\item Bias vanishes for every zero-mean collection $(d_i)$ if and only if
		all projectors coincide. Equivalently,
		\begin{equation}
			[b=0\text{ for every }(d_i)]
			\Longleftrightarrow A-A^2=0
			\Longleftrightarrow \Pi_1=\cdots=\Pi_N.
			\label{eq:flsp_iff}
		\end{equation}
	\end{enumerate}
	Consequently, under Assumption~\ref{ass:flsp_hetero},
	$\flspnf{b_t}^2\le v_t\zeta^2$, and shared projections give $b_t=0$ exactly.
\end{theorem}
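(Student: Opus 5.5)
Part 1 is a direct expansion. Write $\Pi_iG_i=\Pi_iG+\Pi_id_i$ and average to get $AG+N^{-1}\sum_i\Pi_id_i$. Because $\sum_id_i=0$, we have $N^{-1}\sum_iA d_i=0$. Subtracting it gives $N^{-1}\sum_i\Pi_id_i=b$, which is \eqref{eq:flsp_bias_identity}.

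For the operator identity in Part 2, expand $(\Pi_i-A)^2=\Pi_i-\Pi_iA-A\Pi_i+A^2$ using $\Pi_i^2=\Pi_i$. Averaging over $i$ gives $A-A^2-A^2+A^2=A-A^2$. Each $(\Pi_i-A)^2$ is positive semidefinite, so $A-A^2\succeq0$. Since $0\preceq A\preceq I$, the spectrum of $A$ lies in $[0,1]$, and $\lambda-\lambda^2\le1/4$ on $[0,1]$ gives $A-A^2\preceq\frac14I$.

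For the sharp bound \eqref{eq:flsp_sharp_bias}, treat $b$ as the image of the stacked vector $d=(d_1,\dots,d_N)$ under the linear map $\mathcal T:d\mapsto N^{-1}\sum_i(\Pi_i-A)d_i$. The constraint set is the ball $\{N^{-1}\sum\|d_i\|_F^2\le z^2\}$ intersected with the subspace $\sum_id_i=0$. For the upper bound, use the normalized inner product $\langle d,e\rangle=N^{-1}\sum_i\langle d_i,e_i\rangle_F$. The adjoint is $\mathcal T^*Y=((\Pi_i-A)Y)_i$, so $\mathcal T\mathcal T^*Y=N^{-1}\sum_i(\Pi_i-A)^2Y=(A-A^2)Y$, where the operators act on columns by left multiplication. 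Hence $\|\mathcal T\|^2=\|A-A^2\|_2$, and the bound holds on the whole ball even without the zero-mean constraint.

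For attainment, take a top unit eigenvector $u$ of $A-A^2$ and any unit $w\in\mathbb R^n$. Set $Y=uw^\top$ and $d_i=c(\Pi_i-A)Y$. The zero-mean constraint holds automatically because $\sum_i(\Pi_i-A)=0$. This is exactly the point to check, since it shows the constraint does not cut into the range of $\mathcal T^*$. Scale $c$ so that $N^{-1}\sum\|d_i\|_F^2=c^2\,u^\top(A-A^2)u=z^2$. Then $b=c(A-A^2)Y=c\lambda_{\max}uw^\top$, so $\|b\|_F^2=c^2\lambda_{\max}^2=z^2\lambda_{\max}$. The degenerate case $\lambda_{\max}=0$ is trivial.

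For Part 3, the chain of equivalences follows from the sharp formula. $b=0$ for all admissible $d$ iff the supremum with $z=1$ vanishes iff $\|A-A^2\|_2=0$. By the operator identity, $A-A^2=0$ iff $\sum_i(\Pi_i-A)^2=0$, and since each term is PSD this holds iff every $\Pi_i=A$, i.e.\ all projectors coincide. The converse direction is immediate.

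Finally, apply these results with $d_i=d_i^t$ and $z^2=\zeta^2$ from Assumption~\ref{ass:flsp_hetero}, using $\sum_id_i^t=0$, to get $\|b_t\|_F^2\le v_t\zeta^2$. The shared case gives $b_t=0$ by Part 3.

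The only delicate step is the attainment argument. I expect it to go through because the extremizer $d=\mathcal T^*Y$ is automatically zero-mean.
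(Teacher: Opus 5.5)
Your proof is correct and follows the paper's argument essentially step for step. Your adjoint computation $\mathcal T\mathcal T^*Y=(A-A^2)Y$ is the operator-language form of the paper's Cauchy--Schwarz bound against a unit test matrix $Z$, your extremizer $d_i=c(\Pi_i-A)uw^\top$ is the paper's attaining construction, and the only cosmetic difference is that you obtain $A-A^2\succeq0$ from the sum of the squares $(\Pi_i-A)^2$ rather than from the eigenvalues $a(1-a)$ of $A$, which is equally valid.
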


\begin{proof}
	Since $\flspavg{d_i}=0$,
	\[
	\flspavg{\Pi_i(G+d_i)}=AG+\flspavg{\Pi_i d_i}
	=AG+\flspavg{(\Pi_i-A)d_i}.
	\]
	
	All matrices $\Pi_i-A$ are symmetric. Expanding their squares gives
	\begin{align*}
		\flspavg{(\Pi_i-A)^2}
		&=\flspavg{\Pi_i^2}-\left(\flspavg{\Pi_i}\right)A
		-A\left(\flspavg{\Pi_i}\right)+A^2\\
		&=A-A^2.
	\end{align*}
	An equivalent pairwise form is
	\begin{equation}
		A-A^2=\frac1{2N^2}\sum_{i,j=1}^N(\Pi_i-\Pi_j)^2.
		\label{eq:flsp_pairwise}
	\end{equation}
	Indeed, the sum of the two squared terms is $2N^2A$, while the sum of
	the two ordered cross terms is $2N^2A^2$. Taking traces gives
	\[
	\operatorname{tr}(A-A^2)
	=\frac1{2N^2}\sum_{i,j=1}^N\flspnf{\Pi_i-\Pi_j}^2.
	\]
	Thus the dispersion has an intrinsic subspace interpretation, independent
	of basis coordinates.
	Since $0\preceq A\preceq I$, its eigenvalues belong to $[0,1]$.
	The eigenvalues of $A-A^2$ are $a(1-a)\in[0,1/4]$.
	For any test matrix $Z$ with $\flspnf{Z}=1$, symmetry and Cauchy--Schwarz yield
	\begin{align*}
		|\flspip{Z}{b}|
		&=\left|\flspavg{\flspip{(\Pi_i-A)Z}{d_i}}\right|\\
		&\le\left(\flspavg{\flspnf{(\Pi_i-A)Z}^2}\right)^{1/2}
		\left(\flspavg{\flspnf{d_i}^2}\right)^{1/2}\\
		&=\flspip{Z}{(A-A^2)Z}^{1/2}
		\left(\flspavg{\flspnf{d_i}^2}\right)^{1/2}\\
		&\le \sqrt{\flspno{A-A^2}}\,z.
	\end{align*}
	Taking the supremum over $Z$ proves the upper bound in
	\eqref{eq:flsp_sharp_bias}.
	Let $v=\flspno{A-A^2}>0$, let $u\in\flspR^m$ be a unit eigenvector with eigenvalue
	$v$, and let $a\in\flspR^n$ be a unit vector. Set $Z=ua^\top$ and
	\[
	d_i=\frac{z}{\sqrt v}(\Pi_i-A)Z.
	\]
	Then $\flspavg{d_i}=0$ and
	\[
	\flspavg{\flspnf{d_i}^2}
	=\frac{z^2}{v}\flspip{Z}{(A-A^2)Z}=z^2.
	\]
	Furthermore,
	\[
	b=\frac{z}{\sqrt v}(A-A^2)Z=z\sqrt v\,Z,
	\]
	which attains $\flspnf{b}^2=z^2v$. The cases $v=0$ or $z=0$ are immediate.
	If $A-A^2=0$, then for every vector $u$,
	\[
	0=u^\top(A-A^2)u=\flspavg{\| (\Pi_i-A)u\|_2^2}.
	\]
	Every summand is nonnegative, so each $\Pi_i-A$ is zero.
	Conversely, identical projectors give $A^2=A$ and $b=0$.
	If some projectors differ, then $v>0$, and the maximizing construction
	above gives a nonzero bias for every $z>0$.
\end{proof}

\begin{remark}[Worst-case interpretation]
	The supremum in \eqref{eq:flsp_sharp_bias} fixes the projectors and varies
	the zero-mean gradient perturbations. Every admissible perturbation is
	realizable by smooth quadratic objectives, for example
	$f_i(W)=\frac12\flspnf W^2+\flspip{G+d_i}{W}$ at $W=0$.
	The identity does not assert attainability under a prescribed rule that
	recomputes the projectors as the gradients vary. An example with exact
	local SVD selection is given in Theorem~\ref{thm:flsp_separation}.
\end{remark}

The norm bound does not retain the orientation of the bias. The next result
characterizes its effect on the descent direction without this relaxation.

\begin{proposition}[Sharp directional alignment]
	\label{prop:flsp_directional}
	Fix the projectors and $G\in\flspR^{m\times n}$. Let $V=A-A^2$ and
	$s_G=\sqrt{\flspip G{VG}}$. For every $z\ge0$,
	\begin{equation}
		\inf_{\substack{\sum_i d_i=0\\N^{-1}\sum_i\flspnf{d_i}^2\le z^2}}
		\flspip G{\frac1N\sum_i\Pi_i(G+d_i)}
		=\flspip G{AG}-zs_G.
		\label{eq:flsp_directional}
	\end{equation}
	Consequently, for $G\ne0$, the negative aggregate is a strict descent
	direction for every admissible perturbation if and only if
	\begin{equation}
		\flspip G{AG}>z\sqrt{\flspip G{(A-A^2)G}}.
		\label{eq:flsp_robust_descent}
	\end{equation}
	For a shared projector the right-hand side is zero.
\end{proposition}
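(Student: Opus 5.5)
By the identity \eqref{eq:flsp_bias_identity} in Theorem~\ref{thm:flsp_bias}, for every admissible collection $(d_i)$,
\[
\flspip G{\frac1N\sum_i\Pi_i(G+d_i)}=\flspip G{AG}+\flspip Gb,
\qquad b=\flspavg{(\Pi_i-A)d_i}.
\]
The first term does not depend on the perturbation, so the infimum in \eqref{eq:flsp_directional} reduces to computing $\inf\flspip Gb$ over the constraint set. This is the directional analogue of the sharp norm bound \eqref{eq:flsp_sharp_bias}, with the test matrix $Z$ now fixed equal to $G$ rather than optimized over the unit sphere.

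\textbf{Lower bound.} Using the symmetry of $\Pi_i-A$, write $\flspip Gb=\flspavg{\flspip{(\Pi_i-A)G}{d_i}}$. Cauchy--Schwarz over the average gives
\[
|\flspip Gb|\le\Big(\flspavg{\flspnf{(\Pi_i-A)G}^2}\Big)^{1/2}z .
\]
The operator identity $\flspavg{(\Pi_i-A)^2}=A-A^2$ from \eqref{eq:flsp_variance_identity} turns the first factor into $\flspip G{VG}^{1/2}=s_G$. Hence $\flspip Gb\ge -zs_G$ for every admissible $(d_i)$.

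\textbf{Attainment.} If $s_G=0$ or $z=0$, the bound $-zs_G=0$ is attained by $d_i=0$. Otherwise, take
\[
d_i=-\frac{z}{s_G}(\Pi_i-A)G .
\]
These perturbations have zero mean because $\flspavg{\Pi_i}=A$. Their energy is $\flspavg{\flspnf{d_i}^2}=\frac{z^2}{s_G^2}\flspip G{VG}=z^2$. The resulting bias is $b=-\frac{z}{s_G}VG$, so $\flspip Gb=-zs_G$. The infimum is therefore attained, and it is a minimum. The one subtle point is to handle $s_G=0$ separately so the construction does not divide by zero.

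\textbf{Equivalence.} The negative aggregate is a strict descent direction exactly when the inner product with $G$ is positive. Since the infimum is attained, requiring positivity for every admissible perturbation is equivalent to requiring the minimum $\flspip G{AG}-zs_G$ to be positive, which is \eqref{eq:flsp_robust_descent}. No strict/non-strict gap arises precisely because the infimum is a minimum.

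\textbf{Shared projector.} If $\Pi_i=\Pi$ for all $i$, then $A=\Pi$ is idempotent, so $V=0$ and $s_G=0$. This agrees with the cancellation statement \eqref{eq:flsp_iff} in Theorem~\ref{thm:flsp_bias}.
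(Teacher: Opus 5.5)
Your proof is correct and follows essentially the same route as the paper: the bias identity plus Cauchy--Schwarz (with the first factor evaluated via $\frac1N\sum_i(\Pi_i-A)^2=A-A^2$), attainment by $d_i=-z(\Pi_i-A)G/s_G$, and the observation that the infimum is attained, which makes the strict-descent equivalence exact. The only cosmetic difference is the case $s_G=0$: you handle it by taking $d_i=0$, whereas the paper notes that every $(\Pi_i-A)G$ then vanishes, so the directional bias is zero for all admissible perturbations.
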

\begin{proof}
	The bias identity and Cauchy--Schwarz imply
	\begin{align*}
		\flspip G{AG+b}
		&=\flspip G{AG}+\frac1N\sum_i\flspip{(\Pi_i-A)G}{d_i}\\
		&\ge\flspip G{AG}
		-\left(\frac1N\sum_i\flspnf{(\Pi_i-A)G}^2\right)^{1/2}
		\left(\frac1N\sum_i\flspnf{d_i}^2\right)^{1/2}\\
		&\ge\flspip G{AG}-zs_G.
	\end{align*}
	If $s_G>0$, equality holds for
	$d_i=-z(\Pi_i-A)G/s_G$: their mean is zero and their mean squared norm
	is $z^2$. If $s_G=0$, every $(\Pi_i-A)G$ is zero, so the directional
	bias vanishes for all admissible $(d_i)$. This proves the equality.
	The directional derivative of $f$ along $-q$ is $-\flspip Gq$.
	Since the infimum is attained, uniform strict descent is equivalent to
	\eqref{eq:flsp_robust_descent}.
\end{proof}

\subsection{Convergence under partial participation}
\label{sec:flsp_convergence}
Write
\begin{equation}
	h=\gamma K,\qquad x=Lh,\qquad
	\omega_S=\frac{N-S}{S(N-1)},
	\label{eq:flsp_constants}
\end{equation}
and define
\begin{equation}
	a_K=\frac56L^2\gamma^2(K-1)(2K-1),\qquad
	e_K=\frac32L^2\gamma^2(K-1).
	\label{eq:flsp_drift_constants}
\end{equation}
Here $a_K\le5x^2/3$, $e_K\le3x^2/(2K)$, and both vanish when $K=1$.
The two coefficients distinguish drift from deterministic gradients and
from accumulated martingale noise.

\begin{theorem}[Convergence with projection bias]
	\label{thm:flsp_convergence}
	Suppose the model in Section~\ref{sec:flsp_setup} and
	Assumptions~\ref{ass:flsp_smooth}--\ref{ass:flsp_coverage} hold, and
	\begin{equation}
		0<x=L\gamma K\le\frac\mu{16}.
		\label{eq:flsp_stepsize}
	\end{equation}
	For every $T\ge1$,
	\begin{align}
		\frac1T\sum_{t=0}^{T-1}\flspE\flspnf{G_t}^2
		&\le\frac{4\Delta_0}{\mu hT}
		+\frac8{\mu^2T}\sum_{t=0}^{T-1}\flspE\flspnf{b_t}^2\nonumber\\
		&\quad+\frac8{\mu^2}(a_K\zeta^2+e_K\sigma^2)
		+\frac{4x}{\mu}\left(\omega_S\zeta^2+\frac{\sigma^2}{SK}\right).
		\label{eq:flsp_main_bound}
	\end{align}
	The projection-bias term is at most
	\begin{equation}
		\frac{8\zeta^2}{\mu^2}\bar v_T,
		\qquad \bar v_T=\frac1T\sum_{t=0}^{T-1}\flspE\flspno{A_t-A_t^2},
		\label{eq:flsp_dispersion_penalty}
	\end{equation}
	and is zero for shared projections.
\end{theorem}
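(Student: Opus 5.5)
We follow the standard one-round descent analysis for local SGD, applied to the projected update \eqref{eq:flsp_server}. Write the round increment as $W^{t+1}-W^t=-\gamma\Delta_t$ with $\Delta_t=S^{-1}\sum_{i\in\mathcal S_t}\sum_k\Pi_i^t g_i^{t,k}$. $L$-smoothness of $f$ gives
\[
f(W^{t+1})\le f(W^t)-\gamma\flspip{G_t}{\Delta_t}+\tfrac{L\gamma^2}{2}\flspnf{\Delta_t}^2 .
\]
We take $\flspE_t$ in two stages, first over the fresh oracle draws and then over the uniform subset. Conditional independence of sampling from the virtual trajectories lets us replace the subset mean by the full client mean. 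The noise terms then vanish in the inner product.

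We decompose the mean direction as
\[
\tfrac1N\sum_i\sum_k\Pi_i^t\nabla f_i(W_i^{t,k}) = K q_t + D_t,
\qquad q_t=A_tG_t+b_t ,
\]
using the bias identity of Theorem~\ref{thm:flsp_bias}. Here $D_t$ collects the drift terms $\Pi_i^t(\nabla f_i(W_i^{t,k})-\nabla f_i(W^t))$, and $\flspnf{D_t}$ is bounded by $L$ times the local displacements since $\flspno{\Pi_i^t}\le1$.

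The inner product $\flspip{G_t}{A_tG_t}$ is at least $\mu\flspnf{G_t}^2$ by Assumption~\ref{ass:flsp_coverage}. The cross terms $\flspip{G_t}{b_t}$ and $\flspip{G_t}{D_t}$ are handled by Young's inequality with weight $\mu/4$. Each then costs $\frac{\mu}{4}\flspnf{G_t}^2$ plus $\frac1\mu\flspnf{b_t}^2$ or $\frac{1}{\mu K^2}\flspnf{D_t}^2$. This is the origin of the $8/\mu^2$ factors once we divide by $\mu h/4$ at the end.

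\textbf{Drift lemma.} Unrolling \eqref{eq:flsp_local} gives
\[
W_i^{t,k}-W^t=-\gamma\sum_{j<k}\Pi_i^t\bigl(\nabla f_i(W_i^{t,j})+\varepsilon_i^{t,j}\bigr).
\]
The martingale noise part is bounded through orthogonality across steps, giving $k\gamma^2\sigma^2$. The deterministic part is bounded by Cauchy--Schwarz and smoothness, and a discrete Gr\"onwall step (valid because $x\le1/16$) controls it. We bound $\flspnf{\nabla f_i(W^t)}^2\le 2\flspnf{G_t}^2+2\flspnf{d_i^t}^2$ in the deterministic part. Summing over $k$ should give constants matching $a_K$, which carries $(K-1)(2K-1)$ from $\sum_k k^2$, and $e_K$, which carries $(K-1)$ from $\sum_k k$. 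The $\flspnf{G_t}^2$ contribution of the drift is absorbed into the left side using $x\le\mu/16$. Obtaining exactly $\frac56$ and $\frac32$ is the fiddly part; I expect it to be the main obstacle, and I would first try a Gr\"onwall factor of $(1+1/(2K))$ per step.

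\textbf{Second-order term.} We split $\flspE_t\flspnf{\Delta_t}^2$ into three pieces:
\begin{itemize}
\item the squared mean, bounded by $2K^2(\flspnf{q_t}^2+\ldots)$ with $\flspnf{q_t}^2\le 2\flspnf{G_t}^2+2\flspnf{b_t}^2$;
\item the sampling variance, where the without-replacement formula gives the factor $\omega_S$ times the client dispersion. That dispersion is at most $K^2\zeta^2$ plus drift, since projectors are contractions;
\item the noise variance, at most $K\sigma^2/S$ using the cross-client uncorrelatedness assumption and orthogonality across steps.
\end{itemize}
Multiplying by $L\gamma^2/2$ and using $L\gamma K=x\le\mu/16$, the $\flspnf{G_t}^2$ parts are absorbed. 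What remains is $\gamma K x(\omega_S\zeta^2+\sigma^2/(SK))$ up to constants.

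\textbf{Conclusion.} After absorption we reach
\[
\tfrac{\mu h}{4}\flspE\flspnf{G_t}^2 \le \flspE[f(W^t)-f(W^{t+1})] + h\,(\text{error terms}) .
\]
We telescope over $t$, use $f\ge f^\star$, and divide by $\mu hT/4$ to obtain \eqref{eq:flsp_main_bound}. The dispersion penalty \eqref{eq:flsp_dispersion_penalty} follows from $\flspnf{b_t}^2\le v_t\zeta^2$ in Theorem~\ref{thm:flsp_bias}, and $b_t=0$ in the shared case by the same theorem.
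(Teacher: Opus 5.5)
Your skeleton is the paper's: one-step smoothness, the bias identity plus coverage with two Young steps at weight $\mu/4$, a drift lemma with the martingale noise split off, finite-population sampling, absorption via $x\le\mu/16$, and telescoping. Your first-order term matches the paper's $-\frac\mu2\|G_t\|_F^2+\frac1\mu\|b_t\|_F^2+\frac1\mu\mathbb E_t\|C_t\|_F^2$. The second-moment step, however, fails as you describe it, in two places.

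First, with client-specific projectors the sampling dispersion is \emph{not} at most $K^2\zeta^2$ ``since projectors are contractions.'' Even if every $d_i^t=0$, the directions $\Pi_i^tG_t$ differ across clients, and their dispersion is $K^2\langle G_t,(A_t-A_t^2)G_t\rangle_F$, which $\zeta$ does not control. You must bound the dispersion by the second moment, $\frac1N\sum_i\|\Pi_i^t\nabla f_i(W^t)\|_F^2\le\|G_t\|_F^2+\zeta^2$, and absorb the extra $\omega_S\|G_t\|_F^2$. This is where the paper's $\mathbb E_t\|q_{\mathcal S_t}\|_F^2\le3\|G_t\|_F^2+2\|b_t\|_F^2+\omega_S\zeta^2$ comes from.

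Second, $\mathbb E_t\|\Delta_t\|_F^2$ does not split additively into squared mean, sampling variance and noise variance, because the drift at step $k$ is a function of the noises at steps $j<k$. The paper writes $U_t=q_{\mathcal S_t}+B_t+C_t$. Since $q_{\mathcal S_t}$ is $\sigma(\mathcal F_t,\mathcal S_t)$-measurable and $\mathbb E[B_t\mid\mathcal F_t,\mathcal S_t]=0$, these two terms are exactly orthogonal. The factor $2$ is paid only against the drift: $\mathbb E_t\|U_t\|_F^2\le2\mathbb E_t\|q_{\mathcal S_t}+B_t\|_F^2+2\mathbb E_t\|C_t\|_F^2$. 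Multiplied by $Lh^2/2$, this leaves coefficient exactly $hx$ on $\omega_S\zeta^2+\sigma^2/(SK)$, which is the source of the $4x/\mu$ term.

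Third, the theorem fixes $a_K$ and $e_K$, and your drift bound $\|\nabla f_i(W^t)\|_F^2\le2\|G_t\|_F^2+2\|d_i^t\|_F^2$ is too lossy to reach them. It doubles the coefficient of $H_t=\|G_t\|_F^2+\zeta^2$ to $8$ before absorption. With the rest of the argument unchanged, $a_K$ would then carry at least $\frac43$ instead of $\frac56$, which is a valid but weaker theorem. The missing step is the zero-mean identity $\frac1N\sum_i\|G_t+d_i^t\|_F^2=\|G_t\|_F^2+\frac1N\sum_i\|d_i^t\|_F^2$, which holds because $\sum_id_i^t=0$. It gives $D_{t,k}\le4\gamma^2L^2k\sum_{j<k}D_{t,j}+4\gamma^2k^2H_t+2\gamma^2k\sigma^2$.

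Instead of a per-step multiplicative Gr\"onwall factor, the paper sets $M_{t,k}=\max_{j\le k}D_{t,j}$. This gives $M_{t,k}\le4x^2M_{t,k}+4\gamma^2k^2H_t+2\gamma^2k\sigma^2$ with $4x^2\le1/16$, hence $D_{t,k}\le5\gamma^2k^2H_t+3\gamma^2k\sigma^2$. Jensen and conditional independence of the subset then give $\mathbb E_t\|C_t\|_F^2\le\frac{L^2}{K}\sum_{k<K}D_{t,k}$. The sums $\sum_{k<K}k^2=K(K-1)(2K-1)/6$ and $\sum_{k<K}k=K(K-1)/2$ produce $a_K$ and $e_K$ exactly. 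With these three repairs your plan closes through the absorption check $3x+\frac2\mu a_K\le\frac{77}{384}\mu<\frac\mu4$.
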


The term $\omega_S\zeta^2$ accounts for client sampling and vanishes under
full participation. The variance $\sigma^2/(SK)$ is the accumulated oracle
noise in the server direction. Its indirect contribution through local drift
is controlled by $e_K\sigma^2=O(x^2\sigma^2/K)$, whereas deterministic
heterogeneity contributes $a_K\zeta^2=O(x^2\zeta^2)$. No independence
between the local drift and the accumulated noise is used.

\subsubsection{Auxiliary estimates}

\begin{lemma}[Finite-population sampling]
	\label{lem:flsp_sampling}
	For any $\mathcal F_t$-measurable matrices $Q_1,\ldots,Q_N$, let
	$q=N^{-1}\sum_i Q_i$ and $q_{\mathcal S}=S^{-1}\sum_{i\in\mathcal S_t}Q_i$.
	Then
	\begin{equation}
		\flspE_t q_{\mathcal S}=q,\qquad
		\flspE_t\flspnf{q_{\mathcal S}-q}^2
		=\omega_S\flspavg{\flspnf{Q_i-q}^2}.
		\label{eq:flsp_sampling_variance}
	\end{equation}
	For $Q_i=\Pi_i^t\nabla f_i(W^t)$, this implies
	\begin{equation}
		\flspE_t\flspnf{q_{\mathcal S}}^2
		\le3\flspnf{G_t}^2+2\flspnf{b_t}^2+\omega_S\zeta^2.
		\label{eq:flsp_qsample}
	\end{equation}
\end{lemma}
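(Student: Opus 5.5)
The plan has two parts: first prove the two identities in \eqref{eq:flsp_sampling_variance} for uniform sampling without replacement, then specialize to $Q_i=\Pi_i^t\nabla f_i(W^t)$ to obtain \eqref{eq:flsp_qsample}.

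\textbf{Sampling identities.} Since $Q_1,\ldots,Q_N$ are $\mathcal F_t$-measurable and $\mathcal S_t$ is drawn after $\mathcal F_t$ is fixed, we can write $q_{\mathcal S}=S^{-1}\sum_i \mathbf 1\{i\in\mathcal S_t\}Q_i$. We have $\Pr(i\in\mathcal S_t)=S/N$, and for $i\ne j$, $\Pr(i,j\in\mathcal S_t)=S(S-1)/(N(N-1))$. Unbiasedness $\flspE_t q_{\mathcal S}=q$ follows immediately from the first probability.

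For the variance, centre $Y_i=Q_i-q$, so that $\sum_i Y_i=0$ and $q_{\mathcal S}-q=S^{-1}\sum_{i\in\mathcal S_t}Y_i$. Expanding the squared Frobenius norm gives
\[
\flspE_t\flspnf{q_{\mathcal S}-q}^2=\frac1{S^2}\Bigl[\frac SN\sum_i\flspnf{Y_i}^2+\frac{S(S-1)}{N(N-1)}\sum_{i\ne j}\flspip{Y_i}{Y_j}\Bigr].
\]
Because $\sum_i Y_i=0$, the cross sum satisfies $\sum_{i\ne j}\flspip{Y_i}{Y_j}=-\sum_i\flspnf{Y_i}^2$. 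Combining the coefficients,
\[
\frac1{SN}\Bigl(1-\frac{S-1}{N-1}\Bigr)=\frac{N-S}{SN(N-1)},
\]
which yields $\omega_S\,N^{-1}\sum_i\flspnf{Y_i}^2$, as claimed.

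\textbf{Specialization.} With $Q_i=\Pi_i^t\nabla f_i(W^t)$, Theorem~\ref{thm:flsp_bias} gives $q=A_tG_t+b_t$, and
\[
\flspE_t\flspnf{q_{\mathcal S}}^2=\flspnf{q}^2+\flspE_t\flspnf{q_{\mathcal S}-q}^2.
\]
The first term is bounded by Young's inequality: $\flspnf{A_tG_t+b_t}^2\le(1+\tfrac12)\flspnf{A_tG_t}^2+(1+2)\flspnf{b_t}^2$. The resulting constants $3/2$ and $3$ do not match the target $(3,2)$, so we use $\le 2\flspnf{A_tG_t}^2+2\flspnf{b_t}^2$ instead. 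Since $\flspno{A_t}\le1$, this gives $\le 2\flspnf{G_t}^2+2\flspnf{b_t}^2$.

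\textbf{Main obstacle: the dispersion term.} It remains to bound the dispersion $N^{-1}\sum_i\flspnf{Q_i-q}^2$ against $\zeta^2$ plus a multiple of $\flspnf{G_t}^2$. This is the step where care is needed. Since $q$ minimizes $c\mapsto N^{-1}\sum_i\flspnf{Q_i-c}^2$, replacing $q$ by $c=A_tG_t$ can only increase the sum. Decompose
\[
Q_i-A_tG_t=(\Pi_i^t-A_t)G_t+\Pi_i^td_i^t.
\]
Applying Young's inequality and then the operator identity \eqref{eq:flsp_variance_identity} together with $A_t-A_t^2\preceq\frac14 I$ would produce factor-$2$ terms and overshoot the coefficient on $\zeta^2$. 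The better route is to write $Q_i-q=\Pi_i^t\nabla f_i(W^t)-q$ and use the variance bound
\[
N^{-1}\sum_i\flspnf{Q_i-q}^2\le N^{-1}\sum_i\flspnf{\Pi_i^t\nabla f_i(W^t)-\Pi_i^tG_t}^2+\ldots
\]
Concretely, bound $N^{-1}\sum_i\flspnf{Q_i-c}^2$ at the centre $c$ chosen as the mean of the $\Pi_i^tG_t$, which equals $A_tG_t$. The $\Pi_i^td_i^t$ part contributes at most $N^{-1}\sum_i\flspnf{d_i^t}^2\le\zeta^2$ by contraction. The $(\Pi_i^t-A_t)G_t$ part contributes $\flspip{G_t}{(A_t-A_t^2)G_t}\le\frac14\flspnf{G_t}^2$. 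The cross term is what makes this delicate. I would absorb it by Cauchy--Schwarz, spending the slack $\omega_S\le1$: since the first part already used only $2\flspnf{G_t}^2$ of the allowed $3\flspnf{G_t}^2$, the spare $\flspnf{G_t}^2$ can absorb the cross and projector-dispersion contributions. The goal is to keep the coefficient on $\omega_S\zeta^2$ equal to $1$. If the exact constant cannot be obtained this way, I would fall back on weighted Young with $\epsilon$ tuned so that the leftover $G_t$ coefficient fits within the spare $\flspnf{G_t}^2$.
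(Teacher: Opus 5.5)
Your sampling identities and your bound $\|q\|_F^2\le 2\|G_t\|_F^2+2\|b_t\|_F^2$ are correct and match the paper. The gap is the dispersion bound, which is the only nontrivial step of the second claim. The route you propose for it cannot give the stated constant.

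The dispersion is multiplied by $\omega_S$, and the target leaves only $\|G_t\|_F^2+\omega_S\zeta^2$ for it. So you need $N^{-1}\sum_i\|Q_i-q\|_F^2\le \|G_t\|_F^2/\omega_S+\zeta^2$, with coefficient exactly $1$ on $\zeta^2$. Centring at $A_tG_t$ and expanding $(\Pi_i^t-A_t)G_t+\Pi_i^td_i^t$ gives three pieces. Cauchy--Schwarz on the cross term gives at best $2\cdot\tfrac12\|G_t\|_F\,\zeta$, so your total is $\tfrac14\|G_t\|_F^2+\|G_t\|_F\zeta+\zeta^2$. For any $S<N$ this violates the requirement once $\zeta$ is large relative to $\|G_t\|_F$; for $\omega_S=1$ it already fails when $\zeta>\tfrac34\|G_t\|_F$. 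Your fallback fails too: any weighted Young split puts coefficient $1+\epsilon^{-1}>1$ on $\zeta^2$, so no choice of $\epsilon$ rescues it. A spare $\|G_t\|_F^2$ cannot absorb a term of order $\|G_t\|_F\zeta$. Also note that centring at $A_tG_t$ costs an extra $\|b_t\|_F^2$ relative to centring at $q$.

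The missing idea is to centre at $0$ and apply contraction before expanding:
\[
\frac1N\sum_i\|Q_i-q\|_F^2\le\frac1N\sum_i\|\Pi_i^t(G_t+d_i^t)\|_F^2\le\frac1N\sum_i\|G_t+d_i^t\|_F^2=\|G_t\|_F^2+\frac1N\sum_i\|d_i^t\|_F^2\le\|G_t\|_F^2+\zeta^2 .
\]
The only cross term is $2\langle G_t,N^{-1}\sum_i d_i^t\rangle_F$, and it vanishes exactly because $\sum_i d_i^t=0$. Then $\omega_S\le1$ gives $\omega_S(\|G_t\|_F^2+\zeta^2)\le\|G_t\|_F^2+\omega_S\zeta^2$. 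Adding your bound on $\|q\|_F^2$ yields the claim; this is the paper's proof.

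Your centring at $A_tG_t$ can be salvaged, but only with a coupling that Cauchy--Schwarz against $\zeta$ throws away. Use $b_t=N^{-1}\sum_i\Pi_i^td_i^t=-N^{-1}\sum_i(I-\Pi_i^t)d_i^t$, which rewrites the cross term as $2\langle (I-A_t)G_t,b_t\rangle_F$. This pairs it with the discarded energy $N^{-1}\sum_i\|(I-\Pi_i^t)d_i^t\|_F^2$ rather than with $\zeta^2$. The result is the bound $\langle G_t,(I-A_t)G_t\rangle_F+N^{-1}\sum_i\|d_i^t\|_F^2$.
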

\begin{proof}
	Let $I_i=\mathbf1\{i\in\mathcal S_t\}$ and $X_i=Q_i-q$.
	Uniform sampling gives $\flspE_t I_i=S/N$ and, for $i\ne j$,
	$\flspE_t I_iI_j=S(S-1)/(N(N-1))$. Since $\sum_i X_i=0$,
	$\sum_{i\ne j}\flspip{X_i}{X_j}=-\sum_i\flspnf{X_i}^2$.
	Expanding the square of $S^{-1}\sum_i I_iX_i$ gives
	\[
	\frac1{S^2}\left(\frac SN-
	\frac{S(S-1)}{N(N-1)}\right)\sum_i\flspnf{X_i}^2
	=\omega_S\flspavg{\flspnf{X_i}^2}.
	\]
	For the stated application, $q=A_tG_t+b_t$, and hence
	$\flspnf q^2\le2\flspnf{G_t}^2+2\flspnf{b_t}^2$ because $\flspno{A_t}\le1$.
	Also,
	\begin{align*}
		\flspavg{\flspnf{Q_i-q}^2}
		&\le\flspavg{\flspnf{Q_i}^2}
		\le\flspavg{\flspnf{G_t+d_i^t}^2}\\
		&=\flspnf{G_t}^2+\flspavg{\flspnf{d_i^t}^2}
		\le\flspnf{G_t}^2+\zeta^2.
	\end{align*}
	Combining these bounds and $\omega_S\le1$ gives
	\eqref{eq:flsp_qsample}.
\end{proof}

Define the average direction, accumulated noise, and drift error by
\begin{align}
	U_t&=\frac1{SK}\sum_{i\in\mathcal S_t}\sum_{k=0}^{K-1}
	\Pi_i^t g_i^{t,k},\nonumber\\
	B_t&=\frac1{SK}\sum_{i\in\mathcal S_t}\sum_{k=0}^{K-1}
	\Pi_i^t\varepsilon_i^{t,k},\nonumber\\
	C_t&=\frac1{SK}\sum_{i\in\mathcal S_t}\sum_{k=0}^{K-1}
	\Pi_i^t[\nabla f_i(W_i^{t,k})-\nabla f_i(W^t)].
	\label{eq:flsp_errors}
\end{align}
Then the exact identities are
\begin{equation}
	U_t=q_{\mathcal S_t}+B_t+C_t,
	\qquad W^{t+1}=W^t-hU_t.
	\label{eq:flsp_direction_decomposition}
\end{equation}
Neither $C_t$ nor $B_t+C_t$ is assumed to be independent of the local
trajectory. In particular, $C_t$ need not have zero mean.

\begin{lemma}[Accumulated oracle noise]
	\label{lem:flsp_noise}
	Under \eqref{eq:flsp_oracle} and the sampling convention,
	\begin{equation}
		\flspE[B_t\mid\mathcal F_t,\mathcal S_t]=0,\qquad\flspE_t\flspnf{B_t}^2\le\frac{\sigma^2}{SK}.
		\label{eq:flsp_noise}
	\end{equation}
\end{lemma}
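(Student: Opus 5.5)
The plan is to write $B_t$ as a sum of martingale differences indexed by the local step $k$ and then exploit orthogonality. Conditional on $(\mathcal F_t,\mathcal S_t)$ the subset is fixed, so
\[
B_t=\frac1{SK}\sum_{k=0}^{K-1}\xi_k,\qquad \xi_k:=\sum_{i\in\mathcal S_t}\Pi_i^t\varepsilon_i^{t,k}.
\]
Each $\Pi_i^t$ is $\mathcal F_t$-measurable, hence $\mathcal H_{t,k}$-measurable. The sampling convention says the virtual trajectories, and therefore the oracle noises, do not depend on $\mathcal S_t$. So $\flspE[\varepsilon_i^{t,k}\mid\mathcal H_{t,k}]=0$ gives $\flspE[\xi_k\mid\mathcal H_{t,k}]=0$. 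The tower property over $\mathcal H_{t,k}\supseteq\sigma(\mathcal F_t,\mathcal S_t)$ then yields the zero conditional mean of $B_t$.

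For the second moment, we expand $\flspnf{\sum_k\xi_k}^2$.

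\textbf{Cross terms across steps.} Take $k<l$. Since $\xi_k$ is $\mathcal H_{t,l}$-measurable (it involves only draws at step $k<l$),
\[
\flspE[\flspip{\xi_k}{\xi_l}\mid\mathcal H_{t,l}]=\flspip{\xi_k}{\flspE[\xi_l\mid\mathcal H_{t,l}]}=0.
\]
Integrability follows from the finite second moments noted in the setup.

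\textbf{Cross terms within a step.} Inside each $\xi_k$ we expand over clients. For $i\ne j$, the assumed conditional uncorrelatedness of projected noises removes the cross terms. Using $\flspnf{\Pi_i^tX}\le\flspnf X$ and the variance bound, we get
\[
\flspE[\flspnf{\xi_k}^2\mid\mathcal H_{t,k}]=\sum_{i\in\mathcal S_t}\flspE[\flspnf{\Pi_i^t\varepsilon_i^{t,k}}^2\mid\mathcal H_{t,k}]\le S\sigma^2.
\]

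\textbf{Conclusion.} Summing over $k$ gives $\flspE\flspnf{B_t}^2\le KS\sigma^2/(SK)^2=\sigma^2/(SK)$, conditionally on $(\mathcal F_t,\mathcal S_t)$. Taking $\flspE_t$ over $\mathcal S_t$ finishes the bound.

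The main obstacle is the conditioning bookkeeping. We must ensure the noise assumptions, stated given $\mathcal H_{t,k}$, which already includes $\mathcal S_t$, remain valid for the selected clients. This is exactly what the virtual-trajectory convention guarantees. We must also ensure that $\xi_k$ is measurable with respect to the later $\sigma$-field $\mathcal H_{t,l}$, which holds by the definition of $\mathcal H_{t,l}$ as containing all draws strictly before step $l$.
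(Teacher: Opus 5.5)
Your proof is correct and follows essentially the same route as the paper. Both condition on $(\mathcal F_t,\mathcal S_t)$ and remove the cross terms at unequal local steps via the martingale-difference property. Both remove the cross-client terms at equal steps via the conditional uncorrelatedness of projected noises, and bound the $SK$ diagonal terms by $\sigma^2$ using projector contraction. Grouping the noise by local step into $\xi_k$ is only an organizational difference from the paper's direct expansion over all $(i,k)$ pairs.
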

\begin{proof}
	Condition first on $\mathcal F_t$ and $\mathcal S_t$.
	At equal local steps, cross-client noise products have zero conditional
	expectation by the oracle assumption. At unequal steps $k<\ell$,
	$\Pi_i^t\varepsilon_i^{t,k}$ is measurable before step $\ell$, whereas
	$\Pi_j^t\varepsilon_j^{t,\ell}$ has zero conditional expectation.
	The tower property therefore eliminates every cross term in the squared
	norm of the sum. Since projectors contract norms, the $SK$ diagonal
	terms are each at most $\sigma^2$ in expectation. Division by $S^2K^2$
	proves the variance bound. The mean-zero assertion follows in the same way.
\end{proof}

\begin{lemma}[Local drift with separate noise dependence]
	\label{lem:flsp_drift}
	Suppose $x\le1/8$ and define
	\[
	D_{t,k}=\frac1N\sum_i\flspE_t\flspnf{W_i^{t,k}-W^t}^2,\qquad
	H_t=\flspnf{G_t}^2+\zeta^2.
	\]
	Then, for $0\le k\le K$,
	\begin{align}
		D_{t,k}&\le5\gamma^2k^2H_t+3\gamma^2k\sigma^2,
		\label{eq:flsp_drift}\\
		\flspE_t\flspnf{C_t}^2&\le a_KH_t+e_K\sigma^2.
		\label{eq:flsp_Cbound}
	\end{align}
\end{lemma}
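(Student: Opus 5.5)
The plan is to unroll the local recursion \eqref{eq:flsp_local}, split the displacement into a martingale-noise part, a ``round-start gradient'' part, and a drift part, and close an induction on $k$ for $D_{t,k}$. Then \eqref{eq:flsp_Cbound} follows by averaging \eqref{eq:flsp_drift} over $k$.

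\textbf{Decomposition.} For $0\le k\le K$, write
\[
W_i^{t,k}-W^t=-\gamma M_i^k-\gamma k\,\Pi_i^t\nabla f_i(W^t)-\gamma E_i^k,
\]
where
\[
M_i^k=\sum_{j<k}\Pi_i^t\varepsilon_i^{t,j},\qquad
E_i^k=\sum_{j<k}\Pi_i^t[\nabla f_i(W_i^{t,j})-\nabla f_i(W^t)].
\]
I will not try to exploit orthogonality between $M_i^k$ and the other two terms, since the drift depends on past noise. Instead I use the weighted inequality $\flspnf{a+b+c}^2\le p^{-1}\flspnf a^2+q^{-1}\flspnf b^2+s^{-1}\flspnf c^2$ with $p+q+s=1$, taking $p=q=2/5$ and $s=1/5$. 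This gives weights $5/2,\,5/2,\,5$.

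The three pieces are bounded as follows.
\begin{itemize}
\item \emph{Noise.} By the martingale-difference property, exactly as in Lemma~\ref{lem:flsp_noise}, and by contraction of projectors, $\flspE_t\flspnf{M_i^k}^2\le k\sigma^2$.
\item \emph{Round-start gradient.} By contraction and $\sum_i d_i^t=0$, $\flspavg{\flspnf{\Pi_i^t\nabla f_i(W^t)}^2}\le\flspnf{G_t}^2+\flspavg{\flspnf{d_i^t}^2}\le H_t$.
\item \emph{Drift.} By Cauchy--Schwarz and $L$-smoothness, $\flspnf{E_i^k}^2\le kL^2\sum_{j<k}\flspnf{W_i^{t,j}-W^t}^2$.
\end{itemize}
Averaging over $i$ yields the recursive inequality
\[
D_{t,k}\le\tfrac52\gamma^2k^2H_t+\tfrac52\gamma^2k\sigma^2+5\gamma^2L^2k\sum_{j<k}D_{t,j}.
\]

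\textbf{Induction.} Assume \eqref{eq:flsp_drift} holds for all $j<k$ (it is trivial at $k=0$). Using $\sum_{j<k}j^2\le k^3/3$ and $\sum_{j<k}j\le k^2/2$, the last term is at most
\[
5L^2\gamma^2k^2\left(\tfrac53\gamma^2k^2H_t+\tfrac32\gamma^2k\sigma^2\right).
\]
Since $L\gamma k\le L\gamma K=x\le1/8$, we have $L^2\gamma^2k^2\le1/64$. The extra contributions are therefore at most $\tfrac{25}{192}\gamma^2k^2H_t$ and $\tfrac{15}{128}\gamma^2k\sigma^2$. This leaves total coefficients $5/2+25/192\le5$ and $5/2+15/128\le3$, which proves \eqref{eq:flsp_drift}.

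The delicate point is the choice of Young weights. With equal weights $3,3,3$ the noise coefficient overshoots $3$, so the weight on the drift term must be set large enough that the self-referential contribution stays within the slack on both the $H_t$ and $\sigma^2$ coefficients.

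\textbf{Bound on $C_t$.} By Jensen over the $SK$ summands, contraction, and smoothness,
\[
\flspnf{C_t}^2\le\frac{L^2}{SK}\sum_{i\in\mathcal S_t}\sum_{k<K}\flspnf{W_i^{t,k}-W^t}^2.
\]
The virtual trajectories depend only on $\mathcal F_t$ and the clients' oracle streams, which are independent of $\mathcal S_t$. Hence $\flspE_t$ of the sampled average equals the population average, giving $\flspE_t\flspnf{C_t}^2\le\frac{L^2}{K}\sum_{k=0}^{K-1}D_{t,k}$.

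Inserting \eqref{eq:flsp_drift} and using $\sum_{k<K}k^2=(K-1)K(2K-1)/6$ and $\sum_{k<K}k=K(K-1)/2$ gives
\[
\flspE_t\flspnf{C_t}^2\le\tfrac56L^2\gamma^2(K-1)(2K-1)H_t+\tfrac32L^2\gamma^2(K-1)\sigma^2.
\]
The right-hand side is exactly $a_KH_t+e_K\sigma^2$, which proves \eqref{eq:flsp_Cbound}.
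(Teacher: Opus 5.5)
Your proof is correct and follows the paper's argument in all essentials. Both unroll the local recursion, remove the noise cross terms via the martingale-difference property, control the drift through Cauchy--Schwarz and smoothness, close the self-referential recursion using $x\le 1/8$, and obtain the $C_t$ bound from Jensen, contraction, and the independence of $\mathcal S_t$ from the virtual trajectories.

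The differences are minor. The paper uses a two-term split followed by $\|\nabla f_i(W_i^{t,j})\|_F^2\le 2L^2\|W_i^{t,j}-W^t\|_F^2+2\|\nabla f_i(W^t)\|_F^2$, and closes by absorbing $\max_{j\le k}D_{t,j}$ with the factor $(1-4x^2)^{-1}\le 16/15$. Your weighted three-term split with strong induction on the target bound yields the stated constants without needing a priori finiteness of the $D_{t,j}$.
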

\begin{proof}
	Separate the gradient and noise sums in the unrolled update:
	\[
	W_i^{t,k}-W^t
	=-\gamma\sum_{j<k}\Pi_i^t\nabla f_i(W_i^{t,j})
	-\gamma\sum_{j<k}\Pi_i^t\varepsilon_i^{t,j}.
	\]
	Although these two sums are generally correlated, the inequality
	$\flspnf{X+Y}^2\le2\flspnf X^2+2\flspnf Y^2$ applies without independence.
	Within the second sum, the martingale-difference property eliminates all
	cross terms, so
	\[
	\flspE_t\flspnf{\sum_{j<k}\Pi_i^t\varepsilon_i^{t,j}}^2\le k\sigma^2.
	\]
	Cauchy--Schwarz on the first sum therefore gives
	\begin{equation}
		D_{t,k}\le2\gamma^2k\sum_{j<k}
		\frac1N\sum_i\flspE_t\flspnf{\nabla f_i(W_i^{t,j})}^2
		+2\gamma^2k\sigma^2.
		\label{eq:flsp_drift_recursion1}
	\end{equation}
	By smoothness and the zero-mean identity for $d_i^t$,
	\begin{align*}
		\frac1N\sum_i\flspE_t\flspnf{\nabla f_i(W_i^{t,j})}^2
		&\le2L^2D_{t,j}+2\frac1N\sum_i\flspnf{G_t+d_i^t}^2\\
		&=2L^2D_{t,j}+2\flspnf{G_t}^2+2\frac1N\sum_i\flspnf{d_i^t}^2\\
		&\le2L^2D_{t,j}+2H_t.
	\end{align*}
	Substitution into \eqref{eq:flsp_drift_recursion1} yields
	\begin{equation}
		D_{t,k}\le4\gamma^2L^2k\sum_{j<k}D_{t,j}
		+4\gamma^2k^2H_t+2\gamma^2k\sigma^2.
		\label{eq:flsp_drift_recursion2}
	\end{equation}
	Fix $k$ and put $M_{t,k}=\max_{0\le j\le k}D_{t,j}$.
	For each $\ell\le k$, apply \eqref{eq:flsp_drift_recursion2} at $\ell$
	and bound $\sum_{j<\ell}D_{t,j}\le\ell M_{t,k}$. Taking the maximum gives
	\[
	M_{t,k}\le4\gamma^2L^2k^2M_{t,k}
	+4\gamma^2k^2H_t+2\gamma^2k\sigma^2.
	\]
	Since $4\gamma^2L^2k^2\le4x^2\le1/16$,
	\[
	D_{t,k}\le M_{t,k}
	\le\frac{4\gamma^2k^2H_t+2\gamma^2k\sigma^2}
	{1-4\gamma^2L^2k^2}
	\le5\gamma^2k^2H_t+3\gamma^2k\sigma^2.
	\]
	For $C_t$, Jensen's inequality, contraction, and smoothness imply
	\begin{align*}
		\flspE_t\flspnf{C_t}^2
		&\le\frac{L^2}{K}\sum_{k=0}^{K-1}
		\flspE_t\left[\frac1S\sum_{i\in\mathcal S_t}
		\flspnf{W_i^{t,k}-W^t}^2\right]\\
		&=\frac{L^2}{K}\sum_{k=0}^{K-1}D_{t,k}.
	\end{align*}
	For the equality, conditional independence of the subset and virtual
	trajectories gives
	\[
	\flspE_t\left[\mathbf1\{i\in\mathcal S_t\}
	\flspnf{W_i^{t,k}-W^t}^2\right]
	=\frac SN\flspE_t\flspnf{W_i^{t,k}-W^t}^2.
	\]
	Using $\sum_{k<K}k^2=K(K-1)(2K-1)/6$ and
	$\sum_{k<K}k=K(K-1)/2$ proves \eqref{eq:flsp_Cbound}.
\end{proof}

\subsubsection{Proof of the convergence theorem}
\begin{proof}[Proof of Theorem~\ref{thm:flsp_convergence}]
	All expectations in the first part are conditional on $\mathcal F_t$.
	The update $W^{t+1}=W^t-hU_t$ and smoothness give
	\begin{equation}
		\flspE_t f(W^{t+1})\le f(W^t)-h\flspip{G_t}{\flspE_tU_t}
		+\frac{Lh^2}{2}\flspE_t\flspnf{U_t}^2.
		\label{eq:flsp_smoothness}
	\end{equation}
	By the sampling and noise lemmas,
	$\flspE_tU_t=A_tG_t+b_t+\flspE_tC_t$.
	Coverage and two applications of
	$ab\le\mu a^2/4+b^2/\mu$ yield
	\begin{align}
		-\flspip{G_t}{\flspE_tU_t}
		&\le-\mu\flspnf{G_t}^2+\flspnf{G_t}\flspnf{b_t}
		+\flspnf{G_t}\flspnf{\flspE_tC_t}\nonumber\\
		&\le-\frac\mu2\flspnf{G_t}^2
		+\frac1\mu\flspnf{b_t}^2+\frac1\mu\flspE_t\flspnf{C_t}^2.
		\label{eq:flsp_first_order}
	\end{align}
	The noise makes no contribution to this first-order term.
	
	For the second moment, $q_{\mathcal S_t}$ is measurable with respect to
	$\sigma(\mathcal F_t,\mathcal S_t)$ and
	$\flspE[B_t\mid\mathcal F_t,\mathcal S_t]=0$. Hence
	\[
	\flspE_t\flspip{q_{\mathcal S_t}}{B_t}=0,\qquad
	\flspE_t\flspnf{q_{\mathcal S_t}+B_t}^2
	=\flspE_t\flspnf{q_{\mathcal S_t}}^2+\flspE_t\flspnf{B_t}^2.
	\]
	Consequently, the two-term norm inequality gives
	\begin{align}
		\flspE_t\flspnf{U_t}^2
		&\le2\flspE_t\flspnf{q_{\mathcal S_t}+B_t}^2+2\flspE_t\flspnf{C_t}^2\nonumber\\
		&\le6\flspnf{G_t}^2+4\flspnf{b_t}^2
		+2\omega_S\zeta^2+\frac{2\sigma^2}{SK}
		+2\flspE_t\flspnf{C_t}^2.
		\label{eq:flsp_second_order}
	\end{align}
	Substituting \eqref{eq:flsp_first_order} and
	\eqref{eq:flsp_second_order} into \eqref{eq:flsp_smoothness}, with $x=Lh$,
	gives
	\begin{align}
		\flspE_t f(W^{t+1})
		&\le f(W^t)-h\left(\frac\mu2-3x\right)\flspnf{G_t}^2\nonumber\\
		&\quad+h\left(\frac1\mu+2x\right)\flspnf{b_t}^2
		+h\left(\frac1\mu+x\right)\flspE_t\flspnf{C_t}^2\nonumber\\
		&\quad+hx\left(\omega_S\zeta^2+\frac{\sigma^2}{SK}\right).
		\label{eq:flsp_preabsorption}
	\end{align}
	Since $x\le\mu/16\le1/16$, Lemma~\ref{lem:flsp_drift} applies. Also,
	\[
	\frac1\mu+2x\le\frac2\mu,\qquad
	\frac1\mu+x\le\frac2\mu,
	\]
	and the coefficient to be absorbed satisfies
	\[
	3x+\left(\frac1\mu+x\right)a_K
	\le3x+\frac{10x^2}{3\mu}
	\le\left(\frac3{16}+\frac5{384}\right)\mu
	=\frac{77}{384}\mu<\frac\mu4.
	\]
	Thus \eqref{eq:flsp_preabsorption} implies
	\begin{align}
		\flspE_t f(W^{t+1})
		&\le f(W^t)-\frac{\mu h}{4}\flspnf{G_t}^2
		+\frac{2h}{\mu}\flspnf{b_t}^2\nonumber\\
		&\quad+\frac{2h}{\mu}(a_K\zeta^2+e_K\sigma^2)
		+hx\left(\omega_S\zeta^2+\frac{\sigma^2}{SK}\right).
		\label{eq:flsp_one_round}
	\end{align}
	Taking total expectations and summing over $t<T$ telescopes the objective:
	\begin{align*}
		\frac{\mu h}{4}\sum_{t<T}\flspE\flspnf{G_t}^2
		&\le f(W^0)-\flspE f(W^T)
		+\frac{2h}{\mu}\sum_{t<T}\flspE\flspnf{b_t}^2\\
		&\quad+\frac{2hT}{\mu}(a_K\zeta^2+e_K\sigma^2)
		+hxT\left(\omega_S\zeta^2+\frac{\sigma^2}{SK}\right).
	\end{align*}
	Use $f(W^T)\ge f^\star$ and divide by $\mu hT/4$.
	This proves \eqref{eq:flsp_main_bound}; Theorem~\ref{thm:flsp_bias} supplies
	\eqref{eq:flsp_dispersion_penalty}.
\end{proof}

\subsubsection{A sharper bound for shared projections}
\label{sec:flsp_shared_sharp}

For shared projections, the entire server direction belongs to the common
subspace. Retaining this fact gives a stronger result than setting $b_t=0$
in Theorem~\ref{thm:flsp_convergence}. In particular, projected stationarity
does not require a lower bound on global-gradient coverage.

\begin{theorem}[Shared-projection stationarity with an explicit residual]
	\label{thm:flsp_shared_residual}
	Assume the sampling and oracle conditions in Section~\ref{sec:flsp_setup}
	and Assumptions~\ref{ass:flsp_smooth} and~\ref{ass:flsp_hetero}.
	All clients use the same $\mathcal F_t$-measurable orthogonal projector
	$\Pi^t$ throughout round $t$. Allow deterministic round-dependent stepsizes
	$\gamma_t>0$, constant within each round, and define
	\begin{align}
		h_t&=K\gamma_t,\quad x_t=Lh_t\le\tfrac18,\quad
		\Lambda_T=\sum_{t<T}h_t,\nonumber\\
		a_{K,t}&=\tfrac56L^2\gamma_t^2(K-1)(2K-1),\quad
		e_{K,t}=\tfrac32L^2\gamma_t^2(K-1),\nonumber\\
		\rho_t^2&=\flspnf{(I-\Pi^t)G_t}^2,\quad
		V_S=\omega_S\zeta^2+\frac{\sigma^2}{SK}.
		\label{eq:flsp_shared_constants}
	\end{align}
	Set
	\begin{equation}
		\mathcal B_T=
		\frac{2\Delta_0}{\Lambda_T}
		+\frac4{\Lambda_T}\sum_{t<T}h_t(a_{K,t}\zeta^2+e_{K,t}\sigma^2)
		+\frac{2V_S}{\Lambda_T}\sum_{t<T}h_tx_t.
		\label{eq:flsp_shared_remainder}
	\end{equation}
	Without Assumption~\ref{ass:flsp_coverage},
	\begin{align}
		\frac1{\Lambda_T}\sum_{t<T}h_t\flspE\flspnf{\Pi^tG_t}^2
		&\le\mathcal B_T,\label{eq:flsp_projected_stationarity}\\
		\frac1{\Lambda_T}\sum_{t<T}h_t\flspE\flspnf{G_t}^2
		&\le\mathcal B_T+
		\frac1{\Lambda_T}\sum_{t<T}h_t\flspE\rho_t^2.
		\label{eq:flsp_residual_stationarity}
	\end{align}
	If, additionally, for a constant $\mu\in(0,1]$ and every $t$,
	\begin{equation}
		\flspE\flspnf{\Pi^tG_t}^2\ge\mu\flspE\flspnf{G_t}^2,
		\label{eq:flsp_expected_coverage}
	\end{equation}
	then the left-hand side of~\eqref{eq:flsp_residual_stationarity} is at most
	$\mathcal B_T/\mu$. Thus expected coverage suffices; pathwise coverage
	in Assumption~\ref{ass:flsp_coverage} is a sufficient, stronger condition.
\end{theorem}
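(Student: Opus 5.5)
The plan is to redo the one-round descent argument of Theorem~\ref{thm:flsp_convergence}, but to measure progress by $\flspnf{\Pi^tG_t}^2$ rather than by $\flspnf{G_t}^2$. The key fact is that with a shared projector the whole server direction lies in the range of $\Pi^t$: $U_t=\Pi^tU_t$. Hence
\[
\flspip{G_t}{U_t}=\flspip{\Pi^tG_t}{U_t}.
\]
We keep the decomposition $U_t=q_{\mathcal S_t}+B_t+C_t$ from \eqref{eq:flsp_direction_decomposition}. Here $\flspE_tq_{\mathcal S_t}=\Pi^tG_t$ exactly, since $b_t=0$ by Theorem~\ref{thm:flsp_bias}, and $\flspE_tB_t=0$ by Lemma~\ref{lem:flsp_noise}.

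\textbf{First-order term.} Smoothness gives
\[
\flspE_tf(W^{t+1})\le f(W^t)-h_t\flspip{\Pi^tG_t}{\flspE_tU_t}+\tfrac{Lh_t^2}{2}\flspE_t\flspnf{U_t}^2 .
\]
The inner product equals $\flspnf{\Pi^tG_t}^2+\flspip{\Pi^tG_t}{\flspE_tC_t}$. Young's inequality then bounds the first-order term by
\[
-\tfrac12\flspnf{\Pi^tG_t}^2+\tfrac12\flspE_t\flspnf{C_t}^2 .
\]
No coverage constant enters this step, so nothing involving $\mu$ is used.

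\textbf{Second-order term.} Orthogonality of $q_{\mathcal S_t}$ and $B_t$ gives
\[
\flspE_t\flspnf{U_t}^2\le 2\flspE_t\flspnf{q_{\mathcal S_t}}^2+2\sigma^2/(SK)+2\flspE_t\flspnf{C_t}^2 .
\]
For the sampled mean I would use a sharper form of Lemma~\ref{lem:flsp_sampling}: the bias--variance split $\flspE_t\flspnf{q_{\mathcal S}}^2=\flspnf{\Pi^tG_t}^2+\omega_S\frac1N\sum_i\flspnf{\Pi^td_i^t}^2$, together with $\flspnf{\Pi^td_i^t}\le\flspnf{d_i^t}$. This yields
\[
\flspE_t\flspnf{q_{\mathcal S}}^2\le\flspnf{\Pi^tG_t}^2+\omega_S\zeta^2,
\]
with no $\flspnf{G_t}^2$ appearing. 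That matters, because an unprojected $\flspnf{G_t}^2$ could not be absorbed without coverage. Altogether the $V_S$ contribution is $h_tx_t V_S$ and the projected gradient carries coefficient $x_t$.

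\textbf{Main obstacle: the drift term.} Lemma~\ref{lem:flsp_drift} bounds $\flspE_t\flspnf{C_t}^2\le a_{K,t}H_t+e_{K,t}\sigma^2$ with $H_t=\flspnf{G_t}^2+\zeta^2$. The full gradient inside $H_t$ cannot be absorbed into the projected term. I would redo the drift recursion using shared projections. The increments are $\Pi^t\nabla f_i(W_i^{t,j})$, and their deterministic part at the round start is $\Pi^tG_t+\Pi^td_i^t$. Writing $\Pi^t\nabla f_i(W_i^{t,j})=\Pi^t\nabla f_i(W^t)+\Pi^t[\nabla f_i(W_i^{t,j})-\nabla f_i(W^t)]$ and using $\frac1N\sum_i\flspnf{\Pi^t(G_t+d_i^t)}^2\le\flspnf{\Pi^tG_t}^2+\zeta^2$, the same max-argument should give Lemma~\ref{lem:flsp_drift} with $H_t$ replaced by $\widetilde H_t=\flspnf{\Pi^tG_t}^2+\zeta^2$.

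\textbf{Absorption and summation.} The coefficient of $\flspnf{\Pi^tG_t}^2$ becomes $\frac12-x_t-(\frac12+x_t)a_{K,t}$. Since $a_{K,t}\le 5x_t^2/3$ and $x_t\le1/8$, this is at least $1/4$, though the constants must be checked against the stated factors 2, 4 and 2 in $\mathcal B_T$. The one-round inequality then reads
\[
\flspE_tf(W^{t+1})\le f(W^t)-\tfrac{h_t}{4}\flspnf{\Pi^tG_t}^2+h_t(a_{K,t}\zeta^2+e_{K,t}\sigma^2)+h_tx_tV_S .
\]
Telescoping over $t<T$, using $f\ge f^\star$ and dividing by $\Lambda_T$ gives \eqref{eq:flsp_projected_stationarity}, up to adjusting the constants to those in $\mathcal B_T$. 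Inequality \eqref{eq:flsp_residual_stationarity} follows from the Pythagorean split $\flspnf{G_t}^2=\flspnf{\Pi^tG_t}^2+\rho_t^2$. Under \eqref{eq:flsp_expected_coverage}, $\flspE\flspnf{G_t}^2\le\mu^{-1}\flspE\flspnf{\Pi^tG_t}^2$ termwise. Since the weights $h_t$ are deterministic, multiplying by $h_t$ and summing gives the bound $\mathcal B_T/\mu$; only expectations are used, so expected coverage suffices.
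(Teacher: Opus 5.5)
Your route is the same as the paper's. You use $U_t=\Pi^tU_t$ and the exact sampling identity $\mathbb E_t\|q_{\mathcal S_t}\|_F^2=\|\Pi^tG_t\|_F^2+\omega_S\frac1N\sum_i\|\Pi^td_i^t\|_F^2$. You redo the drift recursion with $\|\Pi^tG_t\|_F^2+\zeta^2$ in place of $H_t$, and you finish with the Pythagorean split and the $h_t$-weighted sum of expected coverage.

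The one step that falls short of the statement is the first-order Young split. With $ab\le a^2/2+b^2/2$, the coefficient of $h_t\|\Pi^tG_t\|_F^2$ after absorption is $\frac12-x_t-(\frac12+x_t)a_{K,t}$. This is strictly below $\frac12$ whenever $x_t>0$. After telescoping, the factor on $\Delta_0/\Lambda_T$ and on $V_S\Lambda_T^{-1}\sum_t h_tx_t$ is therefore strictly larger than $2$. Your rounding to $\frac14$ gives $4$; exact tracking at $x_t=\frac18$ still gives about $2.79$. So ``adjusting the constants'' is not possible from this split: you prove a bound of the same form, but not $\mathcal B_T$.

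The fix is to weight Young's inequality toward the cross term. Use $ab\le a^2/4+b^2$ together with conditional Jensen to get $-\langle G_t,\mathbb E_tU_t\rangle_F\le-\frac34\|\Pi^tG_t\|_F^2+\mathbb E_t\|C_t\|_F^2$. Combined with your second-order bound and your projected drift estimate, this gives
\[
\mathbb E_tf(W^{t+1})\le f(W^t)-h_t\bigl[\tfrac34-x_t-(1+x_t)a_{K,t}\bigr]\|\Pi^tG_t\|_F^2+h_t(1+x_t)(a_{K,t}\zeta^2+e_{K,t}\sigma^2)+h_tx_tV_S .
\]
Since $x_t+(1+x_t)a_{K,t}\le\frac18+\frac98\cdot\frac5{192}=\frac{79}{512}<\frac14$ and $1+x_t\le2$, the bracket is at least $\frac12$ and the drift factor is at most $2$. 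Telescoping and dividing by $\Lambda_T/2$ then yields exactly $\mathcal B_T$ with factors $2,4,2$. The rest of your argument stands as written.
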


\begin{proof}
	Fix $t$, condition on $\mathcal F_t$, and write $\Pi=\Pi^t$,
	$Q_t=\Pi G_t$, $\gamma=\gamma_t$, $h=h_t$, and $x=x_t$.
	The round-start sampled direction has conditional mean $Q_t$.
	Since $\sum_i d_i^t=0$ and $\Pi$ is shared, the finite-population identity gives
	\begin{align}
		\flspE_t\flspnf{q_{\mathcal S_t}}^2
		&=\flspnf{Q_t}^2+
		\omega_S\frac1N\sum_i\flspnf{\Pi d_i^t}^2\nonumber\\
		&\le\flspnf{Q_t}^2+\omega_S\zeta^2.
		\label{eq:flsp_shared_sampling}
	\end{align}
	The projected version of the local-drift estimate is
	\begin{equation}
		\flspE_t\flspnf{C_t}^2
		\le a_{K,t}(\flspnf{Q_t}^2+\zeta^2)+e_{K,t}\sigma^2.
		\label{eq:flsp_shared_drift}
	\end{equation}
	For completeness, let $D_{t,k}=N^{-1}\sum_i
	\flspE_t\flspnf{W_i^{t,k}-W^t}^2$.
	The unrolled recursion and martingale orthogonality yield
	\[
	D_{t,k}\le2\gamma^2k\sum_{j<k}\frac1N\sum_i
	\flspE_t\flspnf{\Pi\nabla f_i(W_i^{t,j})}^2+2\gamma^2k\sigma^2.
	\]
	Smoothness and the identity
	\[
	\frac1N\sum_i\flspnf{\Pi\nabla f_i(W^t)}^2
	=\flspnf{Q_t}^2+\frac1N\sum_i\flspnf{\Pi d_i^t}^2
	\le\flspnf{Q_t}^2+\zeta^2
	\]
	give
	\[
	D_{t,k}\le4\gamma^2L^2k\sum_{j<k}D_{t,j}
	+4\gamma^2k^2(\flspnf{Q_t}^2+\zeta^2)+2\gamma^2k\sigma^2.
	\]
	Because $L\gamma K\le1/8$, the maximum argument in
	Lemma~\ref{lem:flsp_drift} proves
	$D_{t,k}\le5\gamma^2k^2(\flspnf{Q_t}^2+\zeta^2)
	+3\gamma^2k\sigma^2$.
	Averaging over the uniformly sampled clients and local steps, as in that
	lemma, gives~\eqref{eq:flsp_shared_drift} with the stated $a_{K,t},e_{K,t}$.
	
	Now $\Pi U_t=U_t$ and $\Pi C_t=C_t$ pathwise, and
	$\flspE_tU_t=Q_t+\flspE_tC_t$. Hence
	\begin{align}
		-\flspip{G_t}{\flspE_tU_t}
		&=-\flspnf{Q_t}^2-\flspip{Q_t}{\flspE_tC_t}\nonumber\\
		&\le-\tfrac34\flspnf{Q_t}^2+\flspE_t\flspnf{C_t}^2.
		\label{eq:flsp_shared_first_order}
	\end{align}
	Here we used $ab\le a^2/4+b^2$ and conditional Jensen.
	Lemma~\ref{lem:flsp_noise}, the orthogonality of $q_{\mathcal S_t}$ and
	$B_t$ in conditional expectation, and~\eqref{eq:flsp_shared_sampling} imply
	\[
	\flspE_t\flspnf{U_t}^2
	\le2\flspnf{Q_t}^2+2V_S+2\flspE_t\flspnf{C_t}^2.
	\]
	Applying smoothness to $W^{t+1}=W^t-hU_t$ therefore gives
	\begin{align*}
		\flspE_tf(W^{t+1})
		&\le f(W^t)-h(3/4-x)\flspnf{Q_t}^2
		+h(1+x)\flspE_t\flspnf{C_t}^2+hxV_S\\
		&\le f(W^t)-h[3/4-x-(1+x)a_{K,t}]\flspnf{Q_t}^2\\
		&\quad+h(1+x)(a_{K,t}\zeta^2+e_{K,t}\sigma^2)+hxV_S.
	\end{align*}
	Since $a_{K,t}\le5x^2/3$ and $x\le1/8$,
	\[
	x+(1+x)a_{K,t}\le\tfrac18+\tfrac98\cdot\tfrac5{192}
	=\tfrac{79}{512}<\tfrac14,
	\qquad 1+x\le2.
	\]
	Consequently,
	\begin{equation}
		\flspE_tf(W^{t+1})\le f(W^t)
		-\frac{h_t}{2}\flspnf{\Pi^tG_t}^2
		+2h_t(a_{K,t}\zeta^2+e_{K,t}\sigma^2)+h_tx_tV_S.
		\label{eq:flsp_shared_one_round}
	\end{equation}
	Take total expectations, sum over $t<T$, use $f(W^T)\ge f^\star$, and
	divide by $\Lambda_T/2$ to prove~\eqref{eq:flsp_projected_stationarity}.
	The orthogonal decomposition
	$\flspnf{G_t}^2=\flspnf{\Pi^tG_t}^2+\rho_t^2$
	proves~\eqref{eq:flsp_residual_stationarity}.
	Finally, summing~\eqref{eq:flsp_expected_coverage} with the deterministic
	weights $h_t$ gives the last assertion.
\end{proof}

\paragraph{Meaning of the residual.}
The first bound controls stationarity within the currently used subspace.
The second exposes the missing component of the full gradient exactly.
Neither an arbitrary shared projector nor subspace refresh alone forces
$\rho_t$ to vanish. The theorem applies to the actual predictable shared
selection rule even when a uniform positive coverage constant is unavailable;
in that case it is a residual bound, not an unconditional full-stationarity
guarantee. For nonconstant schedules, the conclusion is weighted by $h_t$;
it does not assert the same unweighted rate for every cosine schedule.

\begin{corollary}[Shared projections under additive expected coverage]
	\label{cor:flsp_shared_additive}
	Under Theorem~\ref{thm:flsp_shared_residual}, suppose
	\[
	\flspE\flspnf{\Pi^tG_t}^2\ge\mu\flspE\flspnf{G_t}^2-\chi_t
	\]
	for deterministic $\chi_t\ge0$ and $\mu\in(0,1]$. Then
	\[
	\frac1{\Lambda_T}\sum_{t<T}h_t\flspE\flspnf{G_t}^2
	\le\frac{\mathcal B_T}{\mu}
	+\frac1{\mu\Lambda_T}\sum_{t<T}h_t\chi_t.
	\]
\end{corollary}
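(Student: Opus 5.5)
The plan is to reduce the corollary to Theorem~\ref{thm:flsp_shared_residual}. Only its projected-stationarity bound~\eqref{eq:flsp_projected_stationarity} is needed, and that bound holds without any coverage assumption. The additive coverage hypothesis then converts projected stationarity into full-gradient stationarity, at the cost of a weighted average of the slack terms $\chi_t$.

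First, rearrange the hypothesis for each round $t$. Since $\mu>0$, it gives
\[
\flspE\flspnf{G_t}^2\le\frac1\mu\left(\flspE\flspnf{\Pi^tG_t}^2+\chi_t\right).
\]
Second, multiply by the deterministic weights $h_t>0$, sum over $t<T$, and divide by $\Lambda_T$. This yields
\[
\frac1{\Lambda_T}\sum_{t<T}h_t\flspE\flspnf{G_t}^2
\le\frac1{\mu\Lambda_T}\sum_{t<T}h_t\flspE\flspnf{\Pi^tG_t}^2
+\frac1{\mu\Lambda_T}\sum_{t<T}h_t\chi_t .
\]
Third, bound the first term on the right by $\mathcal B_T/\mu$ using~\eqref{eq:flsp_projected_stationarity}. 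This gives the claim.

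There is essentially no obstacle. The one point to check is that the hypothesis is stated in expectation with deterministic $\chi_t$. Because the weights $h_t$ are also deterministic, the weighted summation commutes with expectation and linearity is enough; no pathwise argument is required. This is the same step that gives the last assertion of Theorem~\ref{thm:flsp_shared_residual}, and the corollary recovers it when $\chi_t\equiv0$.
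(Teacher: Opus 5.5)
Your proposal is correct and follows the paper's proof: sum the additive coverage hypothesis with the deterministic weights $h_t$, then bound the resulting weighted projected-gradient average by $\mathcal B_T$ using \eqref{eq:flsp_projected_stationarity}. Your remark that the case $\chi_t\equiv0$ recovers the last assertion of Theorem~\ref{thm:flsp_shared_residual} is also accurate.
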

\begin{proof}
	Sum the assumed inequality with weights $h_t$ and apply
	\eqref{eq:flsp_projected_stationarity}.
\end{proof}

\subsubsection{Consequences for shared and client-specific projections}
Define the common remainder
\begin{equation}
	\mathcal R_T=\frac{4\Delta_0}{\mu hT}
	+\frac8{\mu^2}(a_K\zeta^2+e_K\sigma^2)
	+\frac{4x}{\mu}\left(\omega_S\zeta^2+\frac{\sigma^2}{SK}\right).
	\label{eq:flsp_remainder}
\end{equation}

\begin{corollary}[Sharper shared-projection convergence]
	\label{cor:flsp_shared}
	Under Theorem~\ref{thm:flsp_shared_residual} and expected coverage
	\eqref{eq:flsp_expected_coverage}, fix a horizon $T$ and set
	$\gamma_t=c/(LK\sqrt T)$ for all $t<T$, where $0<c\le1/8$.
	Then
	\begin{align}
		\frac1T\sum_{t<T}\flspE\flspnf{G_t}^2
		&\le\frac{2L\Delta_0}{\mu c\sqrt T}
		+\frac{20c^2\zeta^2}{3\mu T}
		+\frac{6c^2\sigma^2}{\mu KT}\nonumber\\
		&\quad+\frac{2c}{\mu\sqrt T}
		\left(\omega_S\zeta^2+\frac{\sigma^2}{SK}\right).
		\label{eq:flsp_shared_rate}
	\end{align}
	For fixed problem parameters, $K,S,c,\mu$, this is $O(T^{-1/2})$.
	For $K=1$, both drift terms vanish in the unrelaxed bound
	$\mathcal B_T/\mu$; for $S=N$, the client-sampling term vanishes.
	Without coverage, the same right-hand side with $\mu=1$ bounds the
	average projected-gradient norm, and adding
	$T^{-1}\sum_{t<T}\flspE\rho_t^2$ bounds the average full-gradient norm.
\end{corollary}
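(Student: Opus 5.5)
The plan is to specialize Theorem~\ref{thm:flsp_shared_residual} to the constant schedule $\gamma_t=\gamma=c/(LK\sqrt T)$ and then invoke expected coverage \eqref{eq:flsp_expected_coverage}. The only hypothesis of that theorem to check is $x_t=L\gamma K=c/\sqrt T\le c\le 1/8$, which holds for every $T\ge1$. With a constant stepsize we have $h_t=h=c/(L\sqrt T)$ and $\Lambda_T=Th$, so the weighted averages in \eqref{eq:flsp_residual_stationarity} become plain averages over $t<T$. The coverage clause of the theorem then yields $\frac1T\sum_{t<T}\flspE\flspnf{G_t}^2\le\mathcal B_T/\mu$.

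Next I will evaluate the three terms of \eqref{eq:flsp_shared_remainder} with $a_K\le 5x^2/3$, $e_K\le 3x^2/(2K)$, and $x=c/\sqrt T$.
\begin{itemize}
\item The initial-gap term is $2\Delta_0/(Th)=2L\Delta_0/(c\sqrt T)$.
\item The drift term is $4(a_K\zeta^2+e_K\sigma^2)\le 4\bigl(\tfrac53\zeta^2+\tfrac{3}{2K}\sigma^2\bigr)\tfrac{c^2}{T}$, which gives $20c^2\zeta^2/(3T)+6c^2\sigma^2/(KT)$.
\item The sampling and noise term is $2V_Sx=2cV_S/\sqrt T$.
\end{itemize}
Dividing each term by $\mu$ reproduces \eqref{eq:flsp_shared_rate} exactly. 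Since every term is $O(T^{-1/2})$ or smaller with $K,S,c,\mu$ fixed, the rate claim follows.

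The remaining remarks are direct checks.
\begin{itemize}
\item For $K=1$, the unrelaxed constants $a_K$ and $e_K$ contain the factor $(K-1)$, so both drift terms vanish before the relaxation is applied.
\item For $S=N$, $\omega_S=0$, so the client-sampling term vanishes.
\item Without coverage, \eqref{eq:flsp_projected_stationarity} gives the same right-hand side with $\mu=1$ for the projected norm. Adding $T^{-1}\sum_{t<T}\flspE\rho_t^2$ through \eqref{eq:flsp_residual_stationarity} gives the full-gradient bound.
\end{itemize}

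There is no real obstacle here. The only care needed is to check that the stepsize condition $x_t\le1/8$ is uniform in $T$ and does not involve $\mu$, and to keep the constant bookkeeping correct: the factor $4$ in the drift part of $\mathcal B_T$ against $5/3$ and $3/2$ gives $20/3$ and $6$.
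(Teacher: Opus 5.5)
Your proposal is correct and follows the paper's proof: substitute $h_t=c/(L\sqrt T)$, $x_t=c/\sqrt T$, $a_{K,t}\le 5c^2/(3T)$ and $e_{K,t}\le 3c^2/(2KT)$ into $\mathcal B_T$, then apply Theorem~\ref{thm:flsp_shared_residual} with expected coverage. Your constant bookkeeping ($20/3$ and $6$), the stepsize check $x_t\le 1/8$, and your verification of the $K=1$, $S=N$ and no-coverage remarks are all accurate.
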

\begin{proof}
	Substitute $h_t=c/(L\sqrt T)$, $x_t=c/\sqrt T$,
	$a_{K,t}\le5c^2/(3T)$, and $e_{K,t}\le3c^2/(2KT)$ into
	\eqref{eq:flsp_shared_remainder}, and use
	Theorem~\ref{thm:flsp_shared_residual}.
\end{proof}

The admissible shared stepsize is independent of $\mu$; the resulting
coverage factor enters as $1/\mu$. This improvement uses the common
subspace of the entire server direction. It does not extend automatically
to client-specific projections.

\begin{corollary}[Comparison under common constants]
	\label{cor:flsp_comparison}
	Consider shared and client-specific runs on the same objective and
	initialization, with the same $\gamma,K,S$ and common valid constants
	$L,\sigma,\zeta,\mu$. Their bounds in Theorem~\ref{thm:flsp_convergence}
	are respectively
	\[
	\mathcal R_T
	\quad\text{and}\quad
	\mathcal R_T+\frac8{\mu^2T}\sum_{t<T}\flspE\flspnf{b_t^{\rm local}}^2.
	\]
	The second bound is strictly larger if the total bias energy is positive.
\end{corollary}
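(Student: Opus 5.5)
The plan is to specialize Theorem~\ref{thm:flsp_convergence} to the two runs. The key observation is that the remainder $\mathcal R_T$ in \eqref{eq:flsp_remainder} depends only on $\Delta_0,\mu,h,x,a_K,e_K,\zeta,\sigma,\omega_S,S,K,T$. Under the hypotheses, these quantities coincide for the two runs. Both start from the same $W^0$ on the same objective, so $\Delta_0$ is common. Both use the same $\gamma,K,S$, so $h=\gamma K$, $x=Lh$, $a_K$, $e_K$ and $\omega_S$ are common. The constants $L,\sigma,\zeta,\mu$ are common by assumption. We must also check that the stepsize condition \eqref{eq:flsp_stepsize}, $x\le\mu/16$, holds for both runs, since it involves only the common $x$ and $\mu$. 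With the common $\mu$ valid for Assumption~\ref{ass:flsp_coverage} in each run, the hypotheses of Theorem~\ref{thm:flsp_convergence} hold for both.

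For the shared run, we apply \eqref{eq:flsp_main_bound}. Theorem~\ref{thm:flsp_bias}, part 3, gives $b_t=0$ pathwise because $\Pi_i^t=\Pi^t$ for all $i$, so the bias sum is zero. The bound reduces exactly to $\mathcal R_T$ after grouping its terms as in \eqref{eq:flsp_remainder}. For the client-specific run, \eqref{eq:flsp_main_bound} is verbatim $\mathcal R_T+\frac{8}{\mu^2T}\sum_{t<T}\mathbb E\|b_t^{\rm local}\|_F^2$. In each case we only need to regroup the four terms of \eqref{eq:flsp_main_bound} and note that the first, third and fourth form $\mathcal R_T$.

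For the strict comparison, the added term is $8/(\mu^2T)$ times the total bias energy $\sum_t\mathbb E\|b_t^{\rm local}\|_F^2$. The prefactor is finite and strictly positive since $\mu\in(0,1]$ and $T\ge1$. Hence positive total energy makes the second bound strictly larger. We also need $\mathcal R_T<\infty$ so the comparison is not between infinities. This holds because every term in $\mathcal R_T$ is finite given $h>0$ and finite constants. Integrability of $b_t$ follows from the finite-second-moment remark in Section~\ref{sec:flsp_setup}.

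The only mild obstacle is bookkeeping: we must confirm that nothing in the derivation of \eqref{eq:flsp_main_bound} depends on the projector choice beyond $b_t$ and $\mu$. The drift lemma (Lemma~\ref{lem:flsp_drift}), the noise lemma (Lemma~\ref{lem:flsp_noise}) and the sampling lemma (Lemma~\ref{lem:flsp_sampling}) use only the contraction $\|\Pi_i^tX\|_F\le\|X\|_F$ and $\|A_t\|_2\le1$, and both hold for either projector family. So the constants in these lemmas are uniform across the two runs and the comparison is legitimate. The statement compares upper bounds, not the actual gradient norms, so no coupling of the two trajectories is needed.
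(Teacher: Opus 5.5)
Your proposal is correct and follows the paper's proof, which applies \eqref{eq:flsp_main_bound} to each run and uses $b_t^{\rm shared}=0$. Your extra checks are sound bookkeeping that the paper leaves implicit: the common $\Delta_0,h,x,a_K,e_K,\omega_S$, the shared stepsize condition $x\le\mu/16$, and finiteness of $\mathcal R_T$ for the strict inequality.
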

\begin{proof}
	Apply \eqref{eq:flsp_main_bound} to each run and use $b_t^{\rm shared}=0$.
\end{proof}

The comparison concerns upper bounds evaluated along different trajectories.
It is not an ordering of the actual optimization errors. In particular,
client-specific projections can converge when their bias is sufficiently
small, even if they are not identical.

\begin{corollary}[Client-specific projections with controlled bias]
	\label{cor:flsp_relative_bias}
	Under Theorem~\ref{thm:flsp_convergence}, suppose for some $\theta\in[0,1)$,
	\begin{equation}
		\flspE\flspnf{b_t}^2\le\frac{\theta\mu^2}{8}\flspE\flspnf{G_t}^2
		\quad\text{for every }t.
		\label{eq:flsp_relative_bias}
	\end{equation}
	Then $T^{-1}\sum_{t<T}\flspE\flspnf{G_t}^2\le\mathcal R_T/(1-\theta)$.
	More generally, with $\gamma=c/(LK\sqrt T)$ and $0<c\le\mu/16$,
	$T^{-1}\sum_{t<T}\flspE\flspnf{b_t}^2=O(T^{-1/2})$ is sufficient for the same
	$O(T^{-1/2})$ stationarity rate.
\end{corollary}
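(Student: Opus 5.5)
The plan is to feed the relative-bias hypothesis \eqref{eq:flsp_relative_bias} directly into the main bound \eqref{eq:flsp_main_bound} of Theorem~\ref{thm:flsp_convergence} and absorb the bias term into the left-hand side. Write $\bar g_T=T^{-1}\sum_{t<T}\flspE\flspnf{G_t}^2$. Theorem~\ref{thm:flsp_convergence} gives
\[
\bar g_T\le\mathcal R_T+\frac8{\mu^2T}\sum_{t<T}\flspE\flspnf{b_t}^2,
\]
where $\mathcal R_T$ is the common remainder \eqref{eq:flsp_remainder}. Summing \eqref{eq:flsp_relative_bias} over $t<T$ shows that the bias term is at most $\theta\,\bar g_T$. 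This yields $\bar g_T\le\mathcal R_T+\theta\bar g_T$.

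Since every term is finite (the paper notes finite second moments follow by induction) and $\theta<1$, we can rearrange to get $\bar g_T\le\mathcal R_T/(1-\theta)$. This is the first claim. The only point to check is finiteness of $\bar g_T$, so that subtracting $\theta\bar g_T$ from both sides is legitimate. That is the one mildly delicate step, and it is covered by the integrability remark in Section~\ref{sec:flsp_setup}.

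For the general statement, we substitute $\gamma=c/(LK\sqrt T)$, so $h=c/(L\sqrt T)$ and $x=c/\sqrt T\le\mu/16$. The stepsize condition \eqref{eq:flsp_stepsize} therefore holds. Using $a_K\le5x^2/3$ and $e_K\le3x^2/(2K)$, each term of $\mathcal R_T$ is of order $T^{-1/2}$ or $T^{-1}$ for fixed $L,\Delta_0,\zeta,\sigma,K,S,c,\mu$; this is exactly the computation giving the displayed form of $\mathcal R_T$ in the main text. The remaining term $\frac8{\mu^2}\cdot T^{-1}\sum_{t<T}\flspE\flspnf{b_t}^2$ is $O(T^{-1/2})$ by hypothesis, with a constant multiplied by the fixed factor $8/\mu^2$. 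Adding the two contributions gives $\bar g_T=O(T^{-1/2})$.

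No step is a genuine obstacle; the whole argument is bookkeeping on top of Theorem~\ref{thm:flsp_convergence}. The care needed is only in verifying that the chosen stepsize satisfies \eqref{eq:flsp_stepsize}, and in ensuring the relative-bias inequality is summed with the same uniform weights as the left-hand side.
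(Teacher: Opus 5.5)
Your proposal is correct and follows the paper's proof. Like the paper, you substitute the relative-bias hypothesis into \eqref{eq:flsp_main_bound}, bound the bias term by $\theta$ times the averaged squared gradient norm, absorb it into the left-hand side, and read off the $O(T^{-1/2})$ rate from $\mathcal R_T$ under $\gamma=c/(LK\sqrt T)$. You also check two details the paper leaves implicit: that the averaged gradient norm is finite before rearranging, and that $x=c/\sqrt T\le\mu/16$.
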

\begin{proof}
	Substitute \eqref{eq:flsp_relative_bias} into \eqref{eq:flsp_main_bound}:
	its bias term is at most $\theta T^{-1}\sum_{t<T}\flspE\flspnf{G_t}^2$.
	Move it to the left. The second statement follows directly from the same bound.
\end{proof}

For example, a pathwise condition
$\|A_t-A_t^2\|_2 N^{-1}\sum_i\flspnf{d_i^t}^2
\le\theta\mu^2\flspnf{G_t}^2/8$ implies
\eqref{eq:flsp_relative_bias} by Theorem~\ref{thm:flsp_bias}.
Thus disagreement alone does not force a residual error; its coupling to
heterogeneity determines the obstruction.

\subsection{A strict separation with exact local SVD projections}
\label{sec:flsp_separation}

\begin{theorem}[Local optimal subspaces can stall away from stationarity]
	\label{thm:flsp_separation}
	There exist four $1$-smooth, strongly convex quadratic client objectives
	on $\flspR^{2\times2}$ with bounded gradient heterogeneity for which the following
	holds under full participation, exact gradients, rank $r=1$, and any
	fixed number of local steps $K\ge1$ (with each projector fixed within a round).
	Starting from the same $W^0=0$:
	\begin{enumerate}
		\item Local projected SGD, using each local gradient's exact leading left
		singular subspace at the start of every round, stays at $W^t=0$ for all $t$. Its squared
		global-gradient norm is $1/2$, and its global objective gap is $1/4$.
		Every local projection captures $4/5$ of its own round-start gradient energy.
		The average local projection also satisfies aggregate global-gradient
		coverage with $\mu=1/2$.
		\item Shared projected SGD, using a leading left singular subspace of the
		round-start global gradient, reaches the global minimizer in two rounds when
		$\gamma=1$. For any constant $0<\gamma\le1$, define
		$\nu=1-(1-\gamma)^K\in(0,1]$. It satisfies
		\begin{equation}
			f(W^t)-f^\star
			\le\left(1-\frac{\nu(2-\nu)}2\right)^t
			[f(W^0)-f^\star].
			\label{eq:flsp_example_rate}
		\end{equation}
		In particular, strict separation also holds for any sufficiently small
		stepsize satisfying Theorem~\ref{thm:flsp_convergence}.
	\end{enumerate}
\end{theorem}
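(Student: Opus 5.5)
The plan is to take an explicit quadratic construction with identity curvature and client-specific linear terms. Everything then reduces to linear algebra at $W=0$ and to linear recursions. Let $E_{ab}$ denote the matrix unit. Define
\[
f_i(W)=\tfrac12\flspnf{W}^2+\flspip{C_i}{W},\qquad i=1,\ldots,4,
\]
with
\[
C_1=2E_{11}+E_{22},\quad C_2=-2E_{11}+E_{22},\quad C_3=E_{11}+2E_{22},\quad C_4=E_{11}-2E_{22}.
\]
Each $f_i$ is $1$-smooth and $1$-strongly convex, and $\nabla f_i(W)=W+C_i$. Heterogeneity is bounded because $\nabla f_i-\nabla f=C_i-G$ is constant. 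The average is $G=\frac14\sum_iC_i=\frac12 I$, so $f$ has minimizer $-G$. Hence $\flspnf{\nabla f(0)}^2=\frac12$ and $f(0)-f^\star=\frac12\flspnf G^2=\frac14$.

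\emph{Part 1 (local SVD stalls).} Each $C_i$ is diagonal with entries of magnitude $2$ and $1$. Its leading left singular vector is therefore $e_1$ for $i=1,2$ and $e_2$ for $i=3,4$. This gives $\Pi_1=\Pi_2=e_1e_1^\top$ and $\Pi_3=\Pi_4=e_2e_2^\top$, each capturing $4/5$ of $\flspnf{C_i}^2=5$.

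Starting a round at $W^t=0$, the exact local recursion $X_{k+1}=X_k-\gamma\Pi_i(X_k+C_i)$ stays in the range of $\Pi_i$. Induction gives $X_K=-\nu\,\Pi_iC_i$ with $\nu=1-(1-\gamma)^K$. Now $\Pi_1C_1=2E_{11}=-\Pi_2C_2$ and $\Pi_3C_3=2E_{22}=-\Pi_4C_4$, so the server average of the endpoints is $0$ and $W^1=0$. The round-start gradients are unchanged, so the projectors are recomputed identically. By induction $W^t=0$ for all $t$, with the stated gradient norm and gap. Finally, $A_t=\frac12 I$ gives $\flspip{G}{A_tG}=\frac12\flspnf G^2$, so coverage holds with $\mu=1/2$. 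This shows the stall is caused purely by the coupling term $b_t=-A_tG\neq0$ from Theorem~\ref{thm:flsp_bias}, not by a lack of coverage.

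\emph{Part 2 (shared projection converges).} With a common $\Pi^t$ the local recursion is affine in $C_i$, so averaging the endpoints is the same as running the recursion with $G$. Writing $E^t=W^t+G=\nabla f(W^t)$, one round gives
\[
E^{t+1}=(I-\nu\Pi^t)E^t,
\]
using $\Pi^2=\Pi$. Hence
\[
\flspnf{E^{t+1}}^2=\flspnf{E^t}^2-\nu(2-\nu)\flspnf{\Pi^tE^t}^2 .
\]
Since $\Pi^t$ spans a leading left singular vector of the $2\times2$ matrix $E^t$, we have $\flspnf{\Pi^tE^t}^2=s_1^2\ge\frac12\flspnf{E^t}^2$. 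Because $f(W)-f^\star=\frac12\flspnf{W+G}^2$, this yields \eqref{eq:flsp_example_rate}.

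For $\gamma=1$ we have $\nu=1$. Then $E^1=(I-\Pi^0)E^0$ has rank at most one, so the next leading subspace contains its whole column space and $E^2=0$: the minimizer is reached in two rounds. A tie in the leading subspace occurs, since $E^0=\frac12 I$, but any selection works.

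The separation for small stepsizes follows because Part 1 holds for every $\gamma$ and $K$, while Part 2 gives linear convergence for every $\gamma\in(0,1]$. The main thing to verify carefully is Part 1's invariance: the projectors recomputed at $W^t=0$ must be exactly the same each round, and the SVD must not be ambiguous. Both hold because the singular values $2$ and $1$ are distinct.
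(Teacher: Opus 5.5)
Your proposal is correct and follows the paper's proof essentially step for step: the same four diagonal quadratics, the same unrolled fixed-projector recursion giving endpoints $W^t-\nu\Pi_i(W^t+C_i)$, the cancellation $\frac14\sum_i\Pi_iC_i=0$ with $A=\frac12 I$ and $b=-AG$, and the exact identity $\|(I-\nu\Pi)E\|_F^2=\|E\|_F^2-\nu(2-\nu)\|\Pi E\|_F^2$ combined with $s_1^2\ge\frac12\|E\|_F^2$. The only minor difference is at $\gamma=1$: the paper fixes the tie-break $\Pi^0=\mathrm{diag}(1,0)$ and computes the two rounds explicitly, whereas your rank-one argument shows that any choice of $\Pi^0$ works.
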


\begin{proof}
	Let
	\[
	C_1=\flspdiag(2,1),\quad C_2=\flspdiag(-2,1),\quad
	C_3=\flspdiag(1,2),\quad C_4=\flspdiag(1,-2),
	\]
	and set
	\[
	f_i(W)=\tfrac12\flspnf{W}^2+\flspip{C_i}{W}.
	\]
	Then $\nabla f_i(W)=W+C_i$ and every Hessian is the identity.
	Their average is $\bar C=\frac12 I_2$, so
	\[
	f(W)=\tfrac12\flspnf{W}^2+\flspip{\tfrac12I_2}{W},\qquad
	W^\star=-\tfrac12 I_2,
	\]
	and
	\begin{equation}
		f(W)-f^\star=\tfrac12\flspnf{W-W^\star}^2,
		\qquad \nabla f(W)=W+\tfrac12 I_2.
		\label{eq:flsp_example_objective}
	\end{equation}
	The heterogeneity is independent of $W$:
	\[
	\frac14\sum_{i=1}^4\flspnf{\nabla f_i(W)-\nabla f(W)}^2
	=\frac14\sum_i\flspnf{C_i-\bar C}^2
	=5-\tfrac12=\tfrac92.
	\]
	Thus $\zeta^2=9/2$ is a global valid bound, and $\sigma=0$.
	At $W=0$, the leading singular values are all $2$, strictly larger than
	the other singular value $1$. Hence the rank-one orthogonal projectors
	are uniquely determined as
	\[
	\Pi_1=\Pi_2=\flspdiag(1,0),\qquad
	\Pi_3=\Pi_4=\flspdiag(0,1).
	\]
	Their projected gradients are
	\[
	\Pi_1 C_1=\flspdiag(2,0),\quad
	\Pi_2 C_2=\flspdiag(-2,0),\quad
	\Pi_3 C_3=\flspdiag(0,2),\quad
	\Pi_4 C_4=\flspdiag(0,-2).
	\]
	The average is zero. To see that this persists for any $K$, unroll the
	quadratic local dynamics with a fixed projector:
	\[
	W_i^{t,k}=W^t-[1-(1-\gamma)^k]\Pi_i^t(W^t+C_i).
	\]
	Indeed, if $Y_i^k=\Pi_i^t(W_i^{t,k}+C_i)$, idempotence gives
	$Y_i^{k+1}=(1-\gamma)Y_i^k$ and
	$Y_i^0=\Pi_i^t(W^t+C_i)$.
	Summing $W_i^{t,k+1}-W_i^{t,k}=-\gamma Y_i^k$ and using
	$\gamma\sum_{j=0}^{k-1}(1-\gamma)^j=1-(1-\gamma)^k$
	gives the displayed expression.
	At $W^t=0$, the averaged endpoint is therefore
	$-\nu\,\frac14\sum_i\Pi_iC_i=0$.
	Thus every server round leaves $W=0$ unchanged,
	regardless of the positive local stepsize. By induction, all subsequent
	local SVDs return the same projectors. Each retained-energy ratio is
	$4/(4+1)=4/5$, yet
	\[
	\flspnf{\nabla f(0)}^2=\flspnf{\tfrac12 I_2}^2=\tfrac12,
	\qquad f(0)-f^\star=\tfrac14.
	\]
	Moreover,
	\[
	A=\tfrac12I_2,\quad A-A^2=\tfrac14I_2,\quad
	AG=\tfrac14I_2,\quad b=-\tfrac14I_2.
	\]
	Thus $\flspip G{AG}=\frac12\flspnf G^2$: global-gradient coverage holds even
	though the bias cancels the covered descent direction exactly.
	Choose any leading left singular direction of the current global gradient
	$G=W+\frac12I_2$, with projector $\Pi$. At the round-start point,
	linearity and full participation give
	\[
	\frac14\sum_i\Pi\nabla f_i(W)=\Pi\nabla f(W)=\Pi G.
	\]
	Since $G$ is a $2\times2$ matrix, a leading left singular direction retains
	at least half its squared Frobenius norm:
	$\flspnf{\Pi G}^2\ge\frac12\flspnf G^2$.
	The same unrolled local dynamics show that after $K$ steps,
	$W^+=W-\nu\Pi G$, where $\nu=1-(1-\gamma)^K$.
	For this quadratic, $G^+=G-\nu\Pi G$. Orthogonality gives the exact
	identity
	\[
	\flspnf{G^+}^2=\flspnf G^2-\nu(2-\nu)\flspnf{\Pi G}^2
	\le\left(1-\frac{\nu(2-\nu)}2\right)\flspnf G^2.
	\]
	Using \eqref{eq:flsp_example_objective} proves
	\eqref{eq:flsp_example_rate}.
	For $\gamma=1$, $\nu=1$ for every $K\ge1$. Choose
	$\Pi^0=\flspdiag(1,0)$ to resolve the initial tie:
	\[
	W^1=\flspdiag(-1/2,0),\quad G_1=\flspdiag(0,1/2),\quad
	\Pi^1=\flspdiag(0,1),\quad W^2=-\tfrac12I_2=W^\star.
	\]
	After reaching the minimizer, the shared aggregate remains zero.
\end{proof}

Both runs use the same rank, participation, local stepsize, and number of
local steps. Their difference is the choice of the round-wise subspace.
The example establishes a failure of exact local SVD selection despite
positive local and aggregate coverage. The shared comparator uses the global
gradient; constructing an adequate approximation with limited communication
is a separate requirement, addressed by the coverage bounds below.

\subsection{Coverage residual of the implemented projector average}
\label{sec:flsp_actual_selection}

The shared basis in FedLore is obtained from candidate projectors, not from
the SVD of an averaged gradient. We analyze this construction directly.
Candidate projectors below are denoted by $Q_i$ to distinguish them from
the optimization projector $\Pi^t$.

\begin{proposition}[Residual certificate for projector averaging]
	\label{prop:flsp_actual_certificate}
	Let $H_i\in\flspR^{m\times n}$ be candidate gradient matrices and let $Q_i$
	be rank-$r$ orthogonal projectors, with $1\le r\le m$.
	Define
	\begin{align}
		\bar H&=\frac1N\sum_i H_i,\qquad
		A^{\rm cand}=\frac1N\sum_i Q_i,\nonumber\\
		\epsilon_{\rm loc}^2&=\frac1N\sum_i\flspnf{(I-Q_i)H_i}^2,\qquad
		M=\max_i\flspno{H_i},\nonumber\\
		d_r&=r-\sum_{j=1}^r\lambda_j(A^{\rm cand}),
		\label{eq:flsp_certificate_terms}
	\end{align}
	where eigenvalues are ordered nonincreasingly.
	Let $\widehat\Pi$ be a rank-$r$ orthogonal projector satisfying
	\begin{equation}
		\operatorname{tr}(\widehat\Pi A^{\rm cand})
		\ge\sum_{j=1}^r\lambda_j(A^{\rm cand})-\varepsilon_{\rm eig},
		\qquad \varepsilon_{\rm eig}\ge0.
		\label{eq:flsp_eigensolver_accuracy}
	\end{equation}
	For any target gradient $G$ and $\delta=\flspnf{\bar H-G}$,
	\begin{align}
		\flspnf{(I-\widehat\Pi)G}
		&\le\epsilon_{\rm loc}+M\sqrt{d_r+\varepsilon_{\rm eig}}+\delta,
		\label{eq:flsp_actual_residual}\\
		\flspnf{\widehat\Pi G}^2
		&\ge\flspnf G^2-
		(\epsilon_{\rm loc}+M\sqrt{d_r+\varepsilon_{\rm eig}}+\delta)^2.
		\label{eq:flsp_actual_coverage}
	\end{align}
	For exact candidate SVDs, $\epsilon_{\rm loc}^2$ is the mean discarded
	squared singular-value energy. For an exact leading eigenspace of
	$A^{\rm cand}$, one can take $\varepsilon_{\rm eig}=0$.
\end{proposition}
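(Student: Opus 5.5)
The plan is to bound $\flspnf{(I-\widehat\Pi)G}$ by a triangle inequality through $\bar H$, and then to control $(I-\widehat\Pi)\bar H$ by splitting each $H_i$ into the part captured by its own candidate projector and the part discarded by it. Since $I-\widehat\Pi$ is an orthogonal projector and hence a contraction,
\[
\flspnf{(I-\widehat\Pi)G}\le\flspnf{(I-\widehat\Pi)\bar H}+\flspnf{(I-\widehat\Pi)(G-\bar H)}\le\flspnf{(I-\widehat\Pi)\bar H}+\delta .
\]
Next I write $\bar H=\frac1N\sum_i(I-Q_i)H_i+\frac1N\sum_i Q_iH_i$. By contraction followed by Jensen (convexity of the squared norm), the first average contributes at most $\bigl(\frac1N\sum_i\flspnf{(I-Q_i)H_i}^2\bigr)^{1/2}=\epsilon_{\rm loc}$.

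The key step is the captured part. I would bound
\[
\flspnf{(I-\widehat\Pi)\tfrac1N\textstyle\sum_iQ_iH_i}\le\tfrac1N\textstyle\sum_i\flspnf{(I-\widehat\Pi)Q_iH_i}.
\]
For each summand, $\flspnf{(I-\widehat\Pi)Q_iH_i}\le\flspnf{(I-\widehat\Pi)Q_i}\,\flspno{H_i}\le M\flspnf{(I-\widehat\Pi)Q_i}$, using $\flspnf{XY}\le\flspnf X\flspno Y$. Jensen then gives
\[
\tfrac1N\textstyle\sum_i\flspnf{(I-\widehat\Pi)Q_i}\le\Bigl(\tfrac1N\sum_i\flspnf{(I-\widehat\Pi)Q_i}^2\Bigr)^{1/2}.
\]
The remaining quantity is computed exactly. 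Because $Q_i$ is a rank-$r$ projector and $(I-\widehat\Pi)$ is idempotent,
\[
\flspnf{(I-\widehat\Pi)Q_i}^2=\operatorname{tr}\bigl(Q_i(I-\widehat\Pi)Q_i\bigr)=r-\operatorname{tr}(\widehat\Pi Q_i).
\]
Averaging over $i$ yields $r-\operatorname{tr}(\widehat\Pi A^{\rm cand})$. By the eigensolver accuracy condition~\eqref{eq:flsp_eigensolver_accuracy}, this is at most $r-\sum_{j\le r}\lambda_j(A^{\rm cand})+\varepsilon_{\rm eig}=d_r+\varepsilon_{\rm eig}$. Combining the three pieces proves~\eqref{eq:flsp_actual_residual}.

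The coverage bound~\eqref{eq:flsp_actual_coverage} then follows from the Pythagorean identity $\flspnf{G}^2=\flspnf{\widehat\Pi G}^2+\flspnf{(I-\widehat\Pi)G}^2$ by substituting the squared residual bound. The closing remarks are also routine. For exact SVD candidates, $\flspnf{(I-Q_i)H_i}^2=\sum_{j>r}s_j(H_i)^2$. For an exact top eigenspace, Ky Fan's maximum principle shows that equality holds in~\eqref{eq:flsp_eigensolver_accuracy}, so $\varepsilon_{\rm eig}=0$.

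The main obstacle is the captured-part estimate. The naive bound $\frac1N\sum_i\flspnf{Q_iH_i}$ would lose everything, so the argument has to route through the operator norm of $H_i$. That is what converts the subspace-misalignment trace identity into the clean factor $M\sqrt{d_r+\varepsilon_{\rm eig}}$.
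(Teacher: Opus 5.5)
Your proposal is correct and follows essentially the same route as the paper. Both arguments use the triangle inequality through $\bar H$, the split $H_i=Q_iH_i+(I-Q_i)H_i$ with contraction and Jensen, the bound $\|XY\|_F\le\|X\|_F\|Y\|_2$, the trace identity $\frac1N\sum_i\|(I-\widehat\Pi)Q_i\|_F^2=r-\operatorname{tr}(\widehat\Pi A^{\rm cand})$ combined with the eigensolver condition, and Pythagoras for the coverage bound; the paper additionally records $d_r\ge0$ via the eigenvalue variational principle, which in your argument is implicit because $d_r+\varepsilon_{\rm eig}$ dominates a nonnegative average.
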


\begin{proof}
	Since every $Q_i$ has rank $r$, $\operatorname{tr}(A^{\rm cand})=r$.
	For every rank-$r$ orthogonal projector $\Pi$,
	\begin{align*}
		\frac1N\sum_i\flspnf{(I-\Pi)Q_i}^2
		&=\frac1N\sum_i\operatorname{tr}(Q_i(I-\Pi)Q_i)\\
		&=r-\operatorname{tr}(\Pi A^{\rm cand}).
	\end{align*}
	The eigenvalue variational principle therefore gives
	\[
	d_r=\min_{\Pi}\frac1N\sum_i\flspnf{(I-\Pi)Q_i}^2\ge0,
	\]
	and~\eqref{eq:flsp_eigensolver_accuracy} bounds this average for
	$\widehat\Pi$ by $d_r+\varepsilon_{\rm eig}$.
	Decompose $H_i=Q_iH_i+(I-Q_i)H_i$.
	The triangle inequality and Jensen give
	\begin{align*}
		\flspnf{(I-\widehat\Pi)\bar H}
		&\le\frac1N\sum_i\flspnf{(I-\widehat\Pi)Q_iH_i}
		+\frac1N\sum_i\flspnf{(I-Q_i)H_i}\\
		&\le M\left(\frac1N\sum_i\flspnf{(I-\widehat\Pi)Q_i}^2\right)^{1/2}
		+\epsilon_{\rm loc}\\
		&\le M\sqrt{d_r+\varepsilon_{\rm eig}}+\epsilon_{\rm loc}.
	\end{align*}
	Here $\flspnf{XY}\le\flspnf X\flspno Y$ controls each product.
	Contraction and $G=\bar H+(G-\bar H)$ prove
	\eqref{eq:flsp_actual_residual}. Pythagoras proves
	\eqref{eq:flsp_actual_coverage}.
	No independence between the $H_i$, their projectors, and $G$ is needed.
\end{proof}

\paragraph{Application to FedLore.}
Under full participation and for $t\ge1$, let
$H_i^t=g_i^{t-1,K-1}$ be the last \emph{positive} local gradient estimate
from the preceding round, and let $Q_i^t=\widehat P_i^{t-1}(\widehat P_i^{t-1})^\top$ be its
candidate projector, where $\widehat P_i^{t-1}$ denotes the uploaded
candidate basis (called $P_i^{t-1}$ in the main algorithm), not the basis
used for local optimization in the preceding round. A negative gradient
produces the same singular subspace.
Then $A_t^{\rm cand}$ and its leading projector $\Pi^t$ are
$\mathcal F_t$-measurable. Apply
Proposition~\ref{prop:flsp_actual_certificate} to these matrices with
$G=G_t$. Denote its three-term right-hand side by
\begin{equation}
	R_t^{\rm sel}=\epsilon_{{\rm loc},t}
	+M_t\sqrt{d_{r,t}+\varepsilon_{{\rm eig},t}}+\delta_t,
	\qquad \delta_t=\flspnf{N^{-1}\sum_iH_i^t-G_t}.
	\label{eq:flsp_selection_certificate}
\end{equation}
The quantities for $t=0$ are unnecessary: its contribution can be bounded
directly by $\flspnf{(I-\Pi^0)G_0}^2$.
The certificate separates local truncation, disagreement of candidate
subspaces (and eigensolver error), and the mismatch between the candidate
mean gradient and the current global gradient. It provides a sufficient
condition, not a necessary one; the bound may be loose under cancellation.

\begin{corollary}[Full-gradient bound for the implemented selection rule]
	\label{cor:flsp_actual_rule}
	Under the shared projected-SGD dynamics of
	Theorem~\ref{thm:flsp_shared_residual}, full participation, and the preceding
	selection rule,
	\begin{align}
		\frac1{\Lambda_T}\sum_{t<T}h_t\flspE\flspnf{G_t}^2
		&\le\mathcal B_T+\frac{h_0}{\Lambda_T}\flspE\rho_0^2
		+\frac1{\Lambda_T}\sum_{t=1}^{T-1}h_t\flspE(R_t^{\rm sel})^2.
		\label{eq:flsp_actual_rule_bound}
	\end{align}
	The initial projector may be deterministic or random. With
	$\gamma_t=c/(LK\sqrt T)$, $0<c\le1/8$, fixed $K,N$, and finite
	$\flspE\rho_0^2$, the right-hand side is $O(T^{-1/2})$ whenever
	$T^{-1}\sum_{t=1}^{T-1}\flspE(R_t^{\rm sel})^2=O(T^{-1/2})$.
	Alternatively, if $\flspE(R_t^{\rm sel})^2\le
	(1-\mu)\flspE\flspnf{G_t}^2$ for $t\ge1$ and the initialization satisfies
	expected coverage with the same $\mu>0$, the bound $\mathcal B_T/\mu$ holds.
\end{corollary}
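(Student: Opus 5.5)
The plan is to combine the residual form~\eqref{eq:flsp_residual_stationarity} of Theorem~\ref{thm:flsp_shared_residual} with the pathwise certificate of Proposition~\ref{prop:flsp_actual_certificate}.

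\textbf{Applicability.} First check that the implemented rule fits Theorem~\ref{thm:flsp_shared_residual}. Under full participation, the candidate matrices $H_i^t=g_i^{t-1,K-1}$ and the uploaded bases $\widehat P_i^{t-1}$ are built from draws of round $t-1$. Hence $A_t^{\rm cand}$ and its (approximate) leading projector $\Pi^t$ are $\mathcal F_t$-measurable and fixed throughout round $t$. The fresh oracle draws of round $t$ are independent of this selection. So the theorem applies verbatim to this predictable shared projector and gives
\[
\frac1{\Lambda_T}\sum_{t<T}h_t\flspE\flspnf{G_t}^2\le\mathcal B_T+\frac1{\Lambda_T}\sum_{t<T}h_t\flspE\rho_t^2 .
\]

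\textbf{Bounding the residual.} Split the residual sum into $t=0$ and $t\ge1$.
\begin{itemize}
\item The $t=0$ term is kept as $h_0\flspE\rho_0^2$; this allows an arbitrary deterministic or random $\Pi^0$.
\item For $t\ge1$, apply \eqref{eq:flsp_actual_residual} pathwise with $H_i=H_i^t$, $Q_i=Q_i^t$, $\widehat\Pi=\Pi^t$ and $G=G_t$. This gives $\rho_t=\flspnf{(I-\Pi^t)G_t}\le R_t^{\rm sel}$. The proposition requires no independence between the $H_i$, the projectors and $G$. Squaring and taking expectations yields $\flspE\rho_t^2\le\flspE(R_t^{\rm sel})^2$.
\end{itemize}
Substituting these into the display above gives \eqref{eq:flsp_actual_rule_bound}.

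\textbf{Rate.} With $\gamma_t=c/(LK\sqrt T)$, all $h_t=c/(L\sqrt T)$ are equal, so $\Lambda_T=Th_0$. As in Corollary~\ref{cor:flsp_shared}, $\mathcal B_T=O(T^{-1/2})$. The initial term becomes $\flspE\rho_0^2/T=O(T^{-1})$. The last term equals $T^{-1}\sum_{t\ge1}\flspE(R_t^{\rm sel})^2$, which is $O(T^{-1/2})$ by hypothesis.

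\textbf{Alternative coverage claim.} If $\flspE(R_t^{\rm sel})^2\le(1-\mu)\flspE\flspnf{G_t}^2$ for $t\ge1$, then Pythagoras gives
\[
\flspE\flspnf{\Pi^tG_t}^2=\flspE\flspnf{G_t}^2-\flspE\rho_t^2\ge\mu\flspE\flspnf{G_t}^2 .
\]
Together with the assumed initial coverage, this is exactly \eqref{eq:flsp_expected_coverage} for every $t$. The final clause of Theorem~\ref{thm:flsp_shared_residual} then gives the bound $\mathcal B_T/\mu$.

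\textbf{Main obstacle.} The only delicate point is measurability. $R_t^{\rm sel}$ involves $G_t$, which depends on the round-$(t-1)$ noise, and $\Pi^t$ is random. I will verify that the residual theorem needs only $\Pi^t\in\mathcal F_t$, not independence from $W^t$. That theorem's proof conditions on $\mathcal F_t$ and uses fresh noise only within the round, so this holds. Integrability of $(R_t^{\rm sel})^2$ is not needed separately: if its expectation is infinite, the bound holds trivially.
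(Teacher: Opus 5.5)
Your proposal is correct and follows the paper's proof. You substitute the pathwise certificate $\rho_t\le R_t^{\rm sel}$ for $t\ge1$ into the residual bound of Theorem~\ref{thm:flsp_shared_residual}, keep the $t=0$ term, use $\Lambda_T=Th_0$ for the rate, and get the $\mathcal B_T/\mu$ claim from the orthogonal decomposition plus expected coverage; your remarks on $\mathcal F_t$-measurability of $\Pi^t$ and on integrability make explicit what the paper's short proof leaves implicit.
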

\begin{proof}
	The certificate gives $\rho_t^2\le(R_t^{\rm sel})^2$ for $t\ge1$.
	Substitute into~\eqref{eq:flsp_residual_stationarity}.
	The rate follows from~\eqref{eq:flsp_shared_remainder}; the initialization
	term is $O(T^{-1})$ for constant-within-run stepsizes.
	The last assertion follows from the orthogonal decomposition and
	\eqref{eq:flsp_expected_coverage}.
\end{proof}

\begin{proposition}[Mismatch of reused last-step gradients]
	\label{prop:flsp_reused_gradient}
	Under full participation and the oracle conditions, set
	$X_i^t=W_i^{t-1,K-1}$ and $H_i^t=g_i^{t-1,K-1}$ for $t\ge1$.
	Assume additionally that the candidate noises are conditionally uncorrelated
	in the ambient space at that local step: for $i\ne j$,
	$\flspE[\flspip{\varepsilon_i^{t-1,K-1}}{\varepsilon_j^{t-1,K-1}}
	\mid\mathcal H_{t-1,K-1}]=0$.
	Then the mismatch in~\eqref{eq:flsp_selection_certificate} satisfies
	\begin{equation}
		\flspE\delta_t^2
		\le\frac{2\sigma^2}{N}
		+\frac{2L^2}{N}\sum_i\flspE\flspnf{X_i^t-W^t}^2.
		\label{eq:flsp_reused_gradient_mismatch}
	\end{equation}
\end{proposition}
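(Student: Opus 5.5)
We decompose the mismatch into a pure noise part and a drift part, then bound each in expectation. Under full participation, $G_t=\nabla f(W^t)=N^{-1}\sum_i\nabla f_i(W^t)$. Writing $H_i^t=\nabla f_i(X_i^t)+\varepsilon_i^{t-1,K-1}$, we have
\[
N^{-1}\sum_iH_i^t-G_t
=\underbrace{\frac1N\sum_i\varepsilon_i^{t-1,K-1}}_{E_t}
+\underbrace{\frac1N\sum_i\bigl[\nabla f_i(X_i^t)-\nabla f_i(W^t)\bigr]}_{D_t}.
\]
Applying $\flspnf{X+Y}^2\le2\flspnf X^2+2\flspnf Y^2$ gives $\delta_t^2\le2\flspnf{E_t}^2+2\flspnf{D_t}^2$. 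This split needs no independence between the drift and the noise, which matters because $W^t$ depends on the same step-$(K-1)$ draws.

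For the noise term, we expand $\flspnf{E_t}^2=N^{-2}\sum_{i,j}\flspip{\varepsilon_i}{\varepsilon_j}$ and condition on $\mathcal H_{t-1,K-1}$. The added ambient uncorrelatedness assumption kills every cross term with $i\ne j$. The diagonal terms are each at most $\sigma^2$ by \eqref{eq:flsp_oracle}, so $\flspE\flspnf{E_t}^2\le\sigma^2/N$ by the tower property. This is exactly why the projected uncorrelatedness of Assumption~\ref{ass:final_oracle_rev_corr} is not enough here and the statement adds the ambient version.

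For the drift term, Jensen's inequality gives $\flspnf{D_t}^2\le N^{-1}\sum_i\flspnf{\nabla f_i(X_i^t)-\nabla f_i(W^t)}^2$, and $L$-smoothness (Assumption~\ref{ass:flsp_smooth}) bounds this by $L^2N^{-1}\sum_i\flspnf{X_i^t-W^t}^2$. Taking expectations, the two bounds combine to $2\sigma^2/N+2L^2N^{-1}\sum_i\flspE\flspnf{X_i^t-W^t}^2$, which is the stated \eqref{eq:flsp_reused_gradient_mismatch}.

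The only delicate step is the measurability bookkeeping in the noise term. We must check that $X_i^t=W_i^{t-1,K-1}$ is $\mathcal H_{t-1,K-1}$-measurable, so that $\nabla f_i(X_i^t)$ is predictable and $\varepsilon_i^{t-1,K-1}$ really is the centered residual at that step. Since $W^t$ is not predictable, it must not enter the noise term at all; the decomposition above keeps it only inside $D_t$, which is bounded pathwise before any expectation is taken. Integrability of all terms follows from the finite-second-moment remark in Section~\ref{sec:flsp_setup}.
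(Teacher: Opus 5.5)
Your proposal is correct and matches the paper's proof. You use the same split of $N^{-1}\sum_i H_i^t-G_t$ into the averaged last-step noise and the averaged gradient drift, and the same two-term squared-norm inequality with no cross-term cancellation (needed because $W^t$ may depend on those noises). The ambient uncorrelatedness assumption gives the $\sigma^2/N$ noise bound, and Jensen plus $L$-smoothness handle the drift, exactly as in the paper.
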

\begin{proof}
	Write $\bar\varepsilon=N^{-1}\sum_i
	[H_i^t-\nabla f_i(X_i^t)]$. The last-step oracle conditions imply
	$\flspE\flspnf{\bar\varepsilon}^2\le\sigma^2/N$ by the additional
	ambient-space uncorrelatedness assumption.
	Since
	\[
	\frac1N\sum_iH_i^t-G_t
	=\bar\varepsilon+\frac1N\sum_i
	[\nabla f_i(X_i^t)-\nabla f_i(W^t)],
	\]
	the two-term squared-norm inequality, Jensen, and smoothness give the bound.
	The candidate noise can be correlated with $W^t$; no cross-term cancellation
	between these two summands is used.
\end{proof}

\paragraph{Scope of this construction bound.}
For Proposition~\ref{prop:flsp_reused_gradient}, ambient-space conditional
uncorrelatedness of the candidate noises is an additional hypothesis.
With only the original projected-noise condition, Jensen gives the valid
replacement $2\sigma^2$ for $2\sigma^2/N$.
Fixed candidate minibatch noise need not vanish with $T$, and the candidate
truncation and disagreement terms need not vanish either. Thus
\eqref{eq:flsp_actual_rule_bound} does not establish unconditional
$O(T^{-1/2})$ full stationarity of the implemented rule. It states explicitly
which residuals must be controlled. For partial participation, constructing
candidates from only the previous active subset requires an additional
sampling-error analysis; the full-participation certificate is not silently
used as a population-gradient estimator bound in that setting.

\begin{proposition}[Agreement alone does not imply coverage]
	\label{prop:flsp_no_automatic_coverage}
	There exist two $1$-smooth strongly convex quadratic objectives with bounded
	heterogeneity for which exact local leading singular projectors agree
	uniquely, but their average selects a shared projector with zero coverage
	at a nonstationary point.
\end{proposition}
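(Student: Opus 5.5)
The plan is to build an explicit two-client counterexample on $\flspR^{2\times2}$ of the same form as in Theorem~\ref{thm:flsp_separation}:
\[
f_i(W)=\tfrac12\flspnf{W}^2+\flspip{C_i}{W},
\]
so that $\nabla f_i(W)=W+C_i$. Every Hessian is the identity, which gives $1$-smoothness and strong convexity. Heterogeneity is then the constant $\frac12\sum_i\flspnf{C_i-\bar C}^2$, so it is bounded trivially.

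The idea is to make both local gradients share the same dominant left singular direction, say $e_1$, with large magnitude and opposite signs, so that this direction cancels in the global gradient. A concrete candidate is
\[
C_1=\flspdiag(2,\tfrac12),\qquad C_2=\flspdiag(-2,\tfrac12).
\]
At the test point $W=0$ both local gradients have singular values $2>\tfrac12$. Each leading left singular subspace is therefore uniquely $\mathrm{span}(e_1)$, and the two candidate projectors agree: $Q_1=Q_2=\flspdiag(1,0)$.

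The key steps, in order, are the following.
\begin{enumerate}
\item Verify the uniqueness of the local leading singular projectors through the strict singular-value gap.
\item Compute $A^{\rm cand}=\flspdiag(1,0)$. Its leading rank-one eigenspace is unique (eigenvalues $1>0$), so projector averaging selects $\widehat\Pi=\flspdiag(1,0)$ with no tie.
\item Compute the global gradient at $0$, namely $G=\bar C=\flspdiag(0,\tfrac12)\neq0$. The point is therefore nonstationary, with $f(0)-f^\star=\tfrac12\flspnf{G}^2=\tfrac18$.
\item Check that $\widehat\Pi G=0$, so the coverage $\flspnf{\widehat\Pi G}^2/\flspnf{G}^2$ is zero.
\end{enumerate}
Optionally, I will note a consequence: since the local projected gradients are $\flspdiag(\pm2,0)$ and average to zero, the unrolled dynamics from the proof of Theorem~\ref{thm:flsp_separation} keep $W=0$ fixed. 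The shared method then stalls, consistent with the residual $\rho_t^2=\flspnf{G}^2$ in Theorem~\ref{thm:flsp_shared_residual}.

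There is no real obstacle beyond making sure the choices are non-degenerate. Each local leading singular value must strictly dominate, so that the projectors are uniquely defined, and the eigenvalue gap of $A^{\rm cand}$ must be strict, so that no tie-breaking rule could rescue coverage. I would also remark that the certificate of Proposition~\ref{prop:flsp_actual_certificate} is consistent here: $d_r=0$, $\delta=0$, and the residual $\epsilon_{\rm loc}=\tfrac12$ correctly upper bounds $\flspnf{(I-\widehat\Pi)G}=\tfrac12$, which it attains with equality.
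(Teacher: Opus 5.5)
Your proposal is correct and is essentially the paper's own proof. The paper uses the same quadratic family with $C_1=\operatorname{diag}(2,1)$ and $C_2=\operatorname{diag}(-2,1)$ at $W=0$, obtaining $Q_1=Q_2=\widehat\Pi=\operatorname{diag}(1,0)$, $G=\operatorname{diag}(0,1)$, heterogeneity $4$, and equality in the certificate with $d_r=\delta=0$, so replacing the second entry by $\tfrac12$ only rescales constants.

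For your optional stall remark, note that the paper fixes $K=1$. Under the implemented last-step candidate rule with $K>1$, the candidate gradients become $\operatorname{diag}(\pm2(1-\gamma)^{K-1},\tfrac12)$, which keep the leading direction $e_1$ only when $|1-\gamma|^{K-1}>\tfrac14$. With round-start selection, as in Theorem~\ref{thm:flsp_separation}, your remark holds for every $K$.
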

\begin{proof}
	Take $f_i(W)=\tfrac12\flspnf W^2+\flspip{C_i}{W}$, where
	$C_1=\operatorname{diag}(2,1)$ and $C_2=\operatorname{diag}(-2,1)$.
	At $W=0$, both unique leading rank-one left projectors are
	$Q_1=Q_2=\operatorname{diag}(1,0)$, whereas
	$G=\operatorname{diag}(0,1)$. Consequently $\widehat\Pi=Q_1$,
	$b=0$, and $\widehat\Pi G=0$ although $\flspnf G^2=1$.
	The heterogeneity is $N^{-1}\sum_i\flspnf{C_i-G}^2=4$ at every $W$.
	With $K=1$, full participation, exact gradients, and $\Pi^0=Q_1$,
	the two local projected updates cancel. The server stays at $W=0$,
	and the last-step candidate rule selects the same projector in every round.
	Here $d_r=0$, $\delta=0$, and $\epsilon_{\rm loc}=1$, so
	\eqref{eq:flsp_actual_residual} is attained with equality.
\end{proof}

\subsection{Coverage of an approximate shared subspace}
\label{sec:flsp_selection}

This subsection treats a different construction: the leading singular
subspace of a global-gradient estimate. Its results are benchmarks and
sufficient conditions for that construction, not guarantees for the
candidate-projector average. The latter is covered separately by
Proposition~\ref{prop:flsp_actual_certificate}.

\begin{proposition}[Shared subspace selection]
	\label{prop:flsp_global_svd}
	Fix a point $W$ with global gradient $G$. Under full participation,
	exact gradients, and $K=1$, any shared rank-$r$ orthogonal projector yields
	\begin{equation}
		f(W-\gamma\Pi G)
		\le f(W)-\gamma(1-L\gamma/2)\flspnf{\Pi G}^2.
		\label{eq:flsp_shared_descent}
	\end{equation}
	For $0<\gamma<2/L$, a projector onto the leading $r$ left singular vectors
	of $G$ maximizes the certified decrease on the right-hand side.
	For $G\ne0$, its retained fraction is
	\begin{equation}
		\mu_\star(G)=\frac{\sum_{j=1}^r\sigma_j(G)^2}{\flspnf G^2}
		\ge\frac{\min\{r,q\}}q,\qquad q=\operatorname{rank}(G),
		\label{eq:flsp_optimal_coverage}
	\end{equation}
	where singular values are padded by zeros to length $m$. In particular,
	$\mu_\star(G)\ge r/m$ for $1\le r\le m$.
\end{proposition}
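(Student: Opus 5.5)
The proof has three steps: the descent inequality, the optimality of the leading left singular subspace, and the lower bound on the retained fraction.

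\emph{Descent inequality.} With full participation, exact gradients, $K=1$ and a shared projector $\Pi$, the server update is
\[
W^+=W-\frac{\gamma}{N}\sum_i\Pi\nabla f_i(W)=W-\gamma\Pi G .
\]
Here we use linearity of $\Pi$; this is also the case $b=0$ of Theorem~\ref{thm:flsp_bias}. Since $f$ is an average of $L$-smooth functions, it is $L$-smooth. The descent lemma then gives
\[
f(W^+)\le f(W)-\gamma\flspip{G}{\Pi G}+\tfrac{L\gamma^2}{2}\flspnf{\Pi G}^2 .
\]
Idempotence and symmetry give $\flspip{G}{\Pi G}=\flspnf{\Pi G}^2$, which yields \eqref{eq:flsp_shared_descent}.

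\emph{Optimal projector.} For $0<\gamma<2/L$ the factor $\gamma(1-L\gamma/2)$ is positive. Maximizing the certified decrease is therefore the same as maximizing
\[
\flspnf{\Pi G}^2=\operatorname{tr}(\Pi GG^\top)
\]
over rank-$r$ orthogonal projectors. By Ky Fan's maximum principle, as already used in the proof of Proposition~\ref{prop:flsp_actual_certificate}, this trace is at most $\sum_{j\le r}\lambda_j(GG^\top)=\sum_{j\le r}\sigma_j(G)^2$. Equality holds for the projector onto the leading $r$ eigenvectors of $GG^\top$, which are the leading left singular vectors of $G$.

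\emph{Retained fraction.} Dividing by $\flspnf G^2=\sum_{j=1}^{q}\sigma_j^2$, with the padded singular values vanishing beyond $q$, gives the expression for $\mu_\star(G)$. The lower bound is an averaging fact. The top $\min\{r,q\}$ values of the nonincreasing sequence $\sigma_1^2\ge\dots\ge\sigma_q^2$ have mean at least the overall mean $\flspnf G^2/q$. For $r\ge q$ the ratio is simply $1$. Finally, $q\le\min\{m,n\}\le m$, so
\[
\frac{\min\{r,q\}}{q}\ge\frac{r}{m}.
\]
When $r\ge q$ both sides are compared with $1\ge r/m$; when $r<q\le m$ we have $r/q\ge r/m$.

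No step is a serious obstacle. The only care needed is the padding convention, which lets the bound hold uniformly in $r\le m$ even when $r$ exceeds $\operatorname{rank}(G)$ or $n$.
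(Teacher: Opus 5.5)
Your proof is correct and follows essentially the same route as the paper: the descent lemma combined with the shared aggregation identity and $\flspip{G}{\Pi G}=\flspnf{\Pi G}^2$, then maximization of $\operatorname{tr}(\Pi GG^\top)$, then the averaging argument over the $q$ positive eigenvalues. The only differences are that you cite Ky Fan's principle where the paper proves the bound inline using the weights $a_j=u_j^\top\Pi u_j\in[0,1]$ with $\sum_j a_j=r$, and that you spell out the $r/m$ consequence, which the paper leaves implicit.
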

\begin{proof}
	Smoothness, $\flspip G{\Pi G}=\flspnf{\Pi G}^2$, and the shared aggregation
	identity prove \eqref{eq:flsp_shared_descent}.
	To maximize captured energy, write $GG^\top=\sum_{j=1}^m\lambda_j u_ju_j^\top$
	with $\lambda_1\ge\cdots\ge\lambda_m\ge0$. For an orthogonal rank-$r$
	projector, $a_j=u_j^\top\Pi u_j$ satisfies $0\le a_j\le1$ and
	$\sum_j a_j=r$. Therefore,
	\[
	\flspnf{\Pi G}^2=\flsptr(\Pi GG^\top)=\sum_j\lambda_j a_j
	\le\sum_{j=1}^r\lambda_j.
	\]
	Equality is attained by the leading eigenspace. Exactly $q$ eigenvalues
	are positive. If $r<q$, the mean of the largest $r$ is at least the mean
	of those $q$ values; if $r\ge q$, all energy is retained. This proves
	\eqref{eq:flsp_optimal_coverage}.
	The optimum is for the smoothness bound; it need not minimize the actual
	post-update loss of a nonquadratic objective.
\end{proof}

\begin{proposition}[Coverage from an approximate global-gradient estimate]
	\label{prop:flsp_approximate}
	Let $G\ne0$, let $H$ satisfy $\flspnf{H-G}\le\epsilon\flspnf G$, and let
	$\widehat\Pi$ be a leading rank-$r$ left singular projector of $H$.
	If $\Pi_\star$ is a leading rank-$r$ projector of $G$, then
	\begin{equation}
		\frac{\flspnf{\widehat\Pi G}^2}{\flspnf G^2}
		\ge\left[\max\{\sqrt{\mu_\star(G)}-2\epsilon,0\}\right]^2.
		\label{eq:flsp_approximate_coverage}
	\end{equation}
	In particular, uniform bounds $\mu_\star(G_t)\ge\underline\mu_\star>0$
	and $\epsilon_t\le\bar\epsilon<\sqrt{\underline\mu_\star}/2$ imply
	Assumption~\ref{ass:flsp_coverage} for the shared method with
	$\mu=(\sqrt{\underline\mu_\star}-2\bar\epsilon)^2$.
\end{proposition}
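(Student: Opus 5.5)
The argument will be a perturbation bound in Frobenius norm, comparing $\widehat\Pi G$ with $\Pi_\star G$ through the intermediate quantities $\widehat\Pi H$ and $\Pi_\star H$. Write $g=\flspnf G$ and note $\flspnf{\Pi_\star G}=\sqrt{\mu_\star(G)}\,g$ by Proposition~\ref{prop:flsp_global_svd}. I will establish the chain
\[
\flspnf{\widehat\Pi G}\ \ge\ \flspnf{\widehat\Pi H}-\flspnf{\widehat\Pi(H-G)}\ \ge\ \flspnf{\Pi_\star H}-\epsilon g\ \ge\ \flspnf{\Pi_\star G}-2\epsilon g .
\]
The first and last inequalities are the reverse triangle inequality combined with contraction of orthogonal projectors, $\flspnf{\Pi X}\le\flspnf X$, together with $\flspnf{H-G}\le\epsilon g$.

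The middle inequality is the key step. It uses that $\widehat\Pi$ is a leading rank-$r$ left singular projector of $H$. By the variational argument in the proof of Proposition~\ref{prop:flsp_global_svd}, applied to $H$ instead of $G$, $\widehat\Pi$ maximizes $\flspnf{\Pi H}^2=\operatorname{tr}(\Pi HH^\top)$ over all rank-$r$ orthogonal projectors. Hence $\flspnf{\widehat\Pi H}\ge\flspnf{\Pi_\star H}$, since $\Pi_\star$ is one such projector. This comparison works at the level of projected energy. It therefore avoids any Davis--Kahan-type eigengap condition, and ties or nonuniqueness of leading subspaces cause no difficulty.

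Combining the chain gives $\flspnf{\widehat\Pi G}\ge(\sqrt{\mu_\star(G)}-2\epsilon)g$. Since the left side is nonnegative, we also have $\flspnf{\widehat\Pi G}\ge\max\{\sqrt{\mu_\star(G)}-2\epsilon,0\}\,g$. Squaring both sides and dividing by $g^2>0$ yields \eqref{eq:flsp_approximate_coverage}.

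For the uniform consequence, apply this bound at each round-start point with $G=G_t$ and the corresponding estimate $H_t$. At rounds where $G_t=0$, the coverage inequality \eqref{eq:flsp_coverage} holds trivially. At the remaining rounds, monotonicity of $s\mapsto(\max\{s,0\})^2$ gives $\sqrt{\mu_\star(G_t)}-2\epsilon_t\ge\sqrt{\underline\mu_\star}-2\bar\epsilon>0$. It follows that $\flspnf{\widehat\Pi G_t}^2\ge(\sqrt{\underline\mu_\star}-2\bar\epsilon)^2\flspnf{G_t}^2$. For a shared projector we have $A_t=\widehat\Pi$, so this is exactly Assumption~\ref{ass:flsp_coverage} with the stated $\mu$. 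The assumption requires this almost surely, so the uniform bounds are to be read pathwise.

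I expect no serious obstacle. The one point requiring care is to avoid subspace-angle perturbation theory: the optimality comparison $\flspnf{\widehat\Pi H}\ge\flspnf{\Pi_\star H}$ is all that is needed.
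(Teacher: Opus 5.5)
Your proposal is correct and matches the paper's proof essentially step for step. The paper uses the same chain $\|\widehat\Pi G\|_F\ge\|\widehat\Pi H\|_F-\|H-G\|_F\ge\|\Pi_\star H\|_F-\|H-G\|_F\ge\|\Pi_\star G\|_F-2\|H-G\|_F$, based on contraction and the optimality of the leading singular projector of $H$; it then takes the positive part before squaring and treats $G=0$ as trivial.
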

\begin{proof}
	Triangle inequalities, contraction, and optimality on $H$ give
	\begin{align*}
		\flspnf{\widehat\Pi G}
		&\ge\flspnf{\widehat\Pi H}-\flspnf{H-G}\\
		&\ge\flspnf{\Pi_\star H}-\flspnf{H-G}\\
		&\ge\flspnf{\Pi_\star G}-2\flspnf{H-G}\\
		&\ge(\sqrt{\mu_\star(G)}-2\epsilon)\flspnf G.
	\end{align*}
	The norm is nonnegative, so take the positive part before squaring.
	At $G=0$, the coverage inequality is automatically true.
\end{proof}

\begin{proposition}[Additive coverage error]
	\label{prop:flsp_additive}
	Replace Assumption~\ref{ass:flsp_coverage} by
	\begin{equation}
		\flspip{G_t}{A_tG_t}\ge\mu\flspnf{G_t}^2-\chi_t,
		\qquad \chi_t\ge0,
		\label{eq:flsp_additive_coverage}
	\end{equation}
	where $\chi_t$ is $\mathcal F_t$-measurable and integrable. All other
	assumptions and the stepsize restriction remain unchanged. Then
	\begin{equation}
		\frac1T\sum_{t<T}\flspE\flspnf{G_t}^2
		\le\mathcal R_T+\frac8{\mu^2T}\sum_{t<T}\flspE\flspnf{b_t}^2
		+\frac4{\mu T}\sum_{t<T}\flspE\chi_t.
		\label{eq:flsp_additive_bound}
	\end{equation}
	In particular, let $H_t$ be $\mathcal F_t$-measurable with
	$\flspE\flspnf{H_t-G_t}^2<\infty$, and let the shared
	projector $\widehat\Pi_t$ maximize $\flspnf{\Pi H_t}^2$ over rank-$r$
	orthogonal projectors. If
	$\mu_\star(G_t)\ge\underline\mu_\star>0$ whenever $G_t\ne0$, then
	\eqref{eq:flsp_additive_bound} holds with
	\begin{equation}
		\mu=\frac{\underline\mu_\star}{2},\qquad
		\chi_t=4\flspnf{H_t-G_t}^2,\qquad b_t=0.
		\label{eq:flsp_additive_svd}
	\end{equation}
\end{proposition}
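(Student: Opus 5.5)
The first claim reruns the proof of Theorem~\ref{thm:flsp_convergence} and changes only the first-order step. The second claim then comes from an elementary comparison between the captured energy of $H_t$ and that of $G_t$.

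\emph{First claim.} I keep the one-round smoothness inequality \eqref{eq:flsp_smoothness}, the second-moment bound \eqref{eq:flsp_second_order}, and the drift bound of Lemma~\ref{lem:flsp_drift}; none of these uses coverage. In the first-order term I replace $\flspip{G_t}{A_tG_t}\ge\mu\flspnf{G_t}^2$ by \eqref{eq:flsp_additive_coverage}. This gives
\[
-\flspip{G_t}{\flspE_tU_t}\le-\tfrac\mu2\flspnf{G_t}^2+\tfrac1\mu\flspnf{b_t}^2+\tfrac1\mu\flspE_t\flspnf{C_t}^2+\chi_t ,
\]
and the extra $\chi_t$ enters because it is $\mathcal F_t$-measurable. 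The absorption argument is unchanged, since the stepsize condition $x\le\mu/16$ is the same. The one-round inequality \eqref{eq:flsp_one_round} therefore acquires the additional term $h\chi_t$. Taking total expectations, telescoping, and dividing by $\mu hT/4$ turns this term into $\frac{4}{\mu T}\sum_t\flspE\chi_t$, which proves \eqref{eq:flsp_additive_bound}. Integrability of $\chi_t$ makes the expectations finite.

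\emph{Second claim.} For shared projectors we have $A_t=\widehat\Pi_t$, so $b_t=0$ by Theorem~\ref{thm:flsp_bias}. It remains to show $\flspnf{\widehat\Pi_tG_t}^2\ge\frac{\underline\mu_\star}{2}\flspnf{G_t}^2-4\flspnf{H_t-G_t}^2$ pathwise. Write $e=\flspnf{H_t-G_t}$ and let $\Pi_\star$ be a leading projector of $G_t$. I follow the chain in Proposition~\ref{prop:flsp_approximate}, using contraction and the optimality of $\widehat\Pi_t$ for $H_t$:
\[
\flspnf{\widehat\Pi_tG_t}\ge\flspnf{\Pi_\star G_t}-2e\ge\sqrt{\underline\mu_\star}\flspnf{G_t}-2e .
\]
This holds when $G_t\ne0$; when $G_t=0$ the target inequality is trivial.

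Squaring needs the positive part. I will use the elementary inequality $(\max\{a-c,0\})^2\ge a^2/2-c^2$ for $a,c\ge0$. If $a\ge c$, it follows from $(a-c)^2\ge a^2/2-c^2$, which is equivalent to $(a/\sqrt2-\sqrt2c)^2\ge0$. If $a<c$, the right-hand side is negative. Applying it with $a=\sqrt{\underline\mu_\star}\flspnf{G_t}$ and $c=2e$ gives exactly $\mu=\underline\mu_\star/2$ and $\chi_t=4e^2$. The integrability of $\chi_t$ follows from the assumption $\flspE\flspnf{H_t-G_t}^2<\infty$, and $\mathcal F_t$-measurability follows from that of $H_t$ and $W^t$.

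The main point to check is that the first claim reuses the earlier proof verbatim. In particular, $\chi_t$ must appear only linearly in the first-order term and must not interact with the absorption constants.
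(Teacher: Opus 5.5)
Your proposal is correct and follows the paper's proof. For the first claim, you add the linear term $\chi_t$ to the first-order bound \eqref{eq:flsp_first_order}, carry $h\chi_t$ through \eqref{eq:flsp_one_round}, and divide by $\mu hT/4$. For the second claim, you use contraction and the optimality of $\widehat\Pi_t$ on $H_t$. The only cosmetic difference is in the squaring step: the paper squares $\flspnf{\Pi_{\star,t}G_t}\le\flspnf{\widehat\Pi_tG_t}+2\flspnf{H_t-G_t}$ with $(a+b)^2\le2a^2+2b^2$, which avoids the positive part that you handle via the equivalent inequality $(\max\{a-c,0\})^2\ge a^2/2-c^2$.
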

\begin{proof}
	Under \eqref{eq:flsp_additive_coverage}, the right-hand side of
	\eqref{eq:flsp_first_order} acquires only the additive term $\chi_t$.
	Consequently, the one-round inequality \eqref{eq:flsp_one_round} acquires
	$h\chi_t$. Summing and dividing by $\mu hT/4$ proves
	\eqref{eq:flsp_additive_bound}.
	
	Let $\Pi_{\star,t}$ be a leading rank-$r$ projector of $G_t$. The triangle
	inequality and optimality on $H_t$ imply
	\[
	\flspnf{\Pi_{\star,t}G_t}
	\le\flspnf{\Pi_{\star,t}H_t}+\flspnf{H_t-G_t}
	\le\flspnf{\widehat\Pi_tG_t}+2\flspnf{H_t-G_t}.
	\]
	Squaring and using $(a+b)^2\le2a^2+2b^2$ gives
	\[
	\flspnf{\widehat\Pi_tG_t}^2
	\ge\tfrac12\flspnf{\Pi_{\star,t}G_t}^2-4\flspnf{H_t-G_t}^2
	\ge\tfrac12\underline\mu_\star\flspnf{G_t}^2-4\flspnf{H_t-G_t}^2.
	\]
	This is \eqref{eq:flsp_additive_coverage} with the stated constants.
	At $G_t=0$ the inequality is automatic.
\end{proof}

Unlike a uniform relative-error condition, the additive formulation does
not require the estimator error to shrink in proportion to a vanishing
true gradient. With $\gamma=c/(LK\sqrt T)$ and $0<c\le\mu/16$,
an average squared estimation error of order $T^{-1/2}$ suffices to retain
the same stationarity rate. A fixed error level instead contributes its
explicit average penalty in \eqref{eq:flsp_additive_bound}.

\begin{corollary}[Stale global-gradient estimates]
	\label{cor:flsp_stale}
	Under Proposition~\ref{prop:flsp_additive}, suppose
	\[
	H_t=\nabla f(W^{t-\tau_t})+E_t^{\rm est},\qquad 0\le\tau_t\le t.
	\]
	The delay and both terms are $\mathcal F_t$-measurable. Provided the following
	right-hand side is integrable, one may take
	\begin{equation}
		\chi_t=8\flspnf{E_t^{\rm est}}^2
		+8L^2\tau_t\sum_{j=t-\tau_t}^{t-1}\flspnf{W^{j+1}-W^j}^2.
		\label{eq:flsp_stale}
	\end{equation}
	The sum is zero when $\tau_t=0$.
\end{corollary}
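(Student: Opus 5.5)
The plan is to reduce everything to Proposition~\ref{prop:flsp_additive}, which already allows an additive coverage error $\chi_t$. What remains is to show that the stale estimate $H_t$ still yields a valid additive coverage inequality with the displayed $\chi_t$. I will follow the second half of that proposition's proof. Let $\Pi_{\star,t}$ be a leading rank-$r$ projector of $G_t$, and let $\widehat\Pi_t$ maximize $\flspnf{\Pi H_t}^2$. The triangle inequality, contraction, and optimality on $H_t$ give
\[
\flspnf{\widehat\Pi_tG_t}^2\ge\tfrac12\flspnf{\Pi_{\star,t}G_t}^2-4\flspnf{H_t-G_t}^2 .
\]
So it suffices to prove
\[
4\flspnf{H_t-G_t}^2\le \chi_t
\]
with $\chi_t$ as in \eqref{eq:flsp_stale}. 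Measurability of $\chi_t$ with respect to $\mathcal F_t$ is immediate: the delay, $E_t^{\rm est}$, and the iterates $W^{t-\tau_t},\ldots,W^t$ all belong to the round-start history. Integrability is assumed in the statement.

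The key estimate is a two-term split:
\[
H_t-G_t=E_t^{\rm est}+\bigl(\nabla f(W^{t-\tau_t})-\nabla f(W^t)\bigr).
\]
Applying $\flspnf{X+Y}^2\le2\flspnf X^2+2\flspnf Y^2$ gives
\[
4\flspnf{H_t-G_t}^2\le 8\flspnf{E_t^{\rm est}}^2+8\flspnf{\nabla f(W^{t-\tau_t})-\nabla f(W^t)}^2 .
\]
The global objective $f$ is $L$-smooth, since it is an average of $L$-smooth functions. Hence the second term is at most $8L^2\flspnf{W^t-W^{t-\tau_t}}^2$. I will write the displacement as a telescoping sum $\sum_{j=t-\tau_t}^{t-1}(W^{j+1}-W^j)$ of $\tau_t$ terms. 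Cauchy--Schwarz then bounds its squared norm by $\tau_t\sum_j\flspnf{W^{j+1}-W^j}^2$, which produces exactly \eqref{eq:flsp_stale}. When $\tau_t=0$ the sum is empty and the bound reduces to $8\flspnf{E_t^{\rm est}}^2$, matching the stated convention.

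There is no serious obstacle here. The only point needing care is the bookkeeping of constants: the factor $4$ from Proposition~\ref{prop:flsp_additive} combines with the factor $2$ from the split to give $8$. The case $G_t=0$ needs a separate remark, where the coverage inequality holds trivially because $\chi_t\ge0$. Finally, the conclusion \eqref{eq:flsp_additive_bound} then follows verbatim, with $\mu=\underline\mu_\star/2$ and $b_t=0$ for the shared projector.
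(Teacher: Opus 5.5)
Your proof is correct and is essentially the paper's argument: you split $H_t-G_t$ into $E_t^{\rm est}$ plus the stale-gradient gap, then apply $(a+b)^2\le 2a^2+2b^2$, the $L$-smoothness of $f$, and Cauchy--Schwarz on the telescoped displacement $W^t-W^{t-\tau_t}$, and finally multiply by four to match the choice $\chi_t=4\|H_t-G_t\|_F^2$ of Proposition~\ref{prop:flsp_additive}. Since your $\chi_t$ dominates $4\|H_t-G_t\|_F^2$, its assumed integrability also supplies the finite second moment of $H_t-G_t$ that Proposition~\ref{prop:flsp_additive} requires.
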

\begin{proof}
	By smoothness of $f$,
	\begin{align*}
		\flspnf{H_t-G_t}^2
		&\le2\flspnf{E_t^{\rm est}}^2
		+2L^2\flspnf{W^{t-\tau_t}-W^t}^2\\
		&\le2\flspnf{E_t^{\rm est}}^2
		+2L^2\tau_t\sum_{j=t-\tau_t}^{t-1}\flspnf{W^{j+1}-W^j}^2.
	\end{align*}
	Multiply by four and use \eqref{eq:flsp_additive_svd}.
\end{proof}

\subsection{Exact synchronization and optimizer scope}
\label{sec:flsp_sync}

\begin{proposition}[Exact coordinate aggregation with common decay]
	\label{prop:flsp_exact_sync}
	Fix a shared basis $P$ during a round. Suppose all clients start at the same
	$W$ and use updates
	\[
	W_{i,k+1}=a_kW_{i,k}+PD_{i,k},\qquad W_{i,0}=W,
	\]
	where $a_k$ are common known scalars and $D_{i,k}\in\flspR^{r\times n}$
	may be arbitrary, including coordinatewise adaptive updates. Define
	\[
	a=\prod_{k=0}^{K-1}a_k,\qquad
	Z_i=\sum_{k=0}^{K-1}\left(\prod_{\ell=k+1}^{K-1}a_\ell\right)D_{i,k}.
	\]
	An empty product is one. For any participating set $\mathcal S$,
	\begin{equation}
		W_{i,K}=aW+PZ_i,\qquad
		\frac1S\sum_{i\in\mathcal S}W_{i,K}
		=aW+P\left(\frac1S\sum_{i\in\mathcal S}Z_i\right).
		\label{eq:flsp_exact_sync_decay}
	\end{equation}
\end{proposition}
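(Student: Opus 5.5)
We argue by induction on the local step $k$, using the claim
\[
W_{i,k}=\Bigl(\prod_{\ell=0}^{k-1}a_\ell\Bigr)W+P Z_{i,k},\qquad
Z_{i,k}=\sum_{j=0}^{k-1}\Bigl(\prod_{\ell=j+1}^{k-1}a_\ell\Bigr)D_{i,j},
\]
with empty products equal to one and $Z_{i,0}=0$. The base case $k=0$ is just $W_{i,0}=W$.

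For the inductive step we substitute the hypothesis into $W_{i,k+1}=a_kW_{i,k}+PD_{i,k}$. Because $a_k$ is a scalar, it commutes with $P$, which gives
\[
a_kW_{i,k}=\Bigl(\prod_{\ell=0}^{k}a_\ell\Bigr)W+P\,(a_kZ_{i,k}).
\]
The recursion $Z_{i,k+1}=a_kZ_{i,k}+D_{i,k}$ reproduces the stated sum. Here each existing product $\prod_{\ell=j+1}^{k-1}a_\ell$ acquires the factor $a_k$, and the new term $D_{i,k}$ carries the empty product. Setting $k=K$ yields the first identity in \eqref{eq:flsp_exact_sync_decay}, with $a$ and $Z_i=Z_{i,K}$ as defined.

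For the aggregation identity we average the first identity over $i\in\mathcal S$. Two facts do the work. First, the term $aW$ is common to all clients, since every client starts from the same $W$ and the decay scalars are shared. Second, left multiplication by the fixed $P$ is linear. Together these give $\frac1S\sum_{i\in\mathcal S}PZ_i=P\bigl(\frac1S\sum_{i\in\mathcal S} Z_i\bigr)$.

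No property of the $D_{i,k}$ is used. They may depend arbitrarily on the trajectory, including through Adam's coordinatewise normalization, so the identity holds pathwise. This covers the adaptive case in Algorithm~\ref{alg:fedlore}, where $a_k=1$ and $D_{i,k}=\eta\alpha N_i^{t,k+1}$.

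The main point requiring care is that the scalars $a_k$ are common and known. If decay were applied to $W$ in a client-dependent or matrix-valued way, the $aW$ term would not factor out of the average. It also would not commute past $P$, and the aggregate would leave the range of $P$. Aside from this, the argument is routine bookkeeping of the product indices.
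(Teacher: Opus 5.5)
Your proof is correct and follows the paper's argument. The paper also uses induction on the local step $k$, unrolling the common scalar recursion to obtain $W_{i,K}=aW+PZ_i$, and then gets the aggregation identity from linearity of the mean over $\mathcal S$. Your explicit inductive hypothesis, with the recursion $Z_{i,k+1}=a_kZ_{i,k}+D_{i,k}$, writes out the index bookkeeping that the paper leaves implicit.
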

\begin{proof}
	Induction in $k$ unrolls the common scalar recursion and gives the endpoint
	factorization. Linearity of the mean gives the second identity.
\end{proof}

For zero weight decay, $a_k=1$. With stepsize $\eta_k$ and scale $\alpha$,
the Adam implementation uses $D_{i,k}=\eta_k\alpha N_{i,k+1}$ and must
accumulate these \emph{weighted} coordinates if the learning rate varies
within a round. Constant learning rate reduces to the factorization in the
main algorithm. With common decoupled weight decay $\lambda$,
$a_k=1-\eta_k\lambda$. The increment then contains the generally full-rank
term $(a-1)W$, but a client holding the synchronized cache $W$ reconstructs
it from $a$ without receiving a dense matrix.

Under full participation and synchronized caches, each projected matrix
requires $rn$ coordinate scalars and $mr$ candidate-basis scalars uploaded
per client, and $rn$ aggregated-coordinate scalars and $mr$ next-basis
scalars downloaded per client per round, excluding initialization and
unprojected parameters. This is $r(m+n)$ scalars in each direction.
The two coordinate moment tensors occupy $2rn$ scalars; this is not a
claim about total peak training memory. Under intermittent participation,
a returning client additionally needs the missed increments and their
bases, or a fresh model. The convergence sampling model does not itself
guarantee this full-participation communication cost.

\paragraph{Adaptive updates.}
Proposition~\ref{prop:flsp_exact_sync} holds for the Adam implementation
because it uses only a fixed basis and linear aggregation of endpoint
coordinates. The bias identity concerns linear projected gradients, and
the stationarity theorems concern projected SGD without weight decay.
In general, averaging independently normalized Adam directions does not
equal normalizing an averaged gradient. Sharing therefore does not imply
that all adaptive-optimizer bias or local drift vanishes. Projector
averaging is invariant to the representation of each candidate subspace;
coordinatewise Adam need not be invariant under an arbitrary rotation of
the shared basis. These are distinct properties.

\paragraph{Diagnostics.}
At selected checkpoints, measure the retained-energy ratio
$\flspnf{\Pi^tG_t}^2/\flspnf{G_t}^2$ when $G_t\ne0$, the residual
$\rho_t^2$, and, for client-specific projections, $\flspnf{b_t}^2$ and
$\langle G_t,A_tG_t+b_t\rangle_F$. The direction of the bias matters, so
projector disagreement alone is insufficient evidence of harmful coupling.
For the implemented selection rule, also report local discarded energy,
$d_{r,t}$, and candidate-gradient staleness where feasible. Diagnostic
global-gradient estimation has its own cost and need not be part of the
training protocol. The geometric certificate is valid for candidate
gradients from either implementation. Its stationarity consequence requires
the projected-SGD updates specified in Corollary~\ref{cor:flsp_actual_rule}.

\subsection{C4 Partitioning Protocol}
\label{app:c4_partition}

\paragraph{Domain-based grouping.}
Since C4 does not provide classification labels, we construct
heterogeneous client datasets using the source information
associated with each document.
For every training document, we extract the hostname from
its URL, convert it to lowercase, and remove a leading
\texttt{www.} prefix.
Documents with the same normalized hostname form a source
group.
Different subdomains remain separate groups, and documents
with missing or invalid hostnames are assigned to a
dedicated unknown-source group.
These groups serve only as partitioning metadata;
they are not prediction targets and are not added to
the model inputs.
Source identity provides a proxy for differences in
document content and writing style, rather than an
explicit semantic topic label.

\paragraph{C4 Dirichlet allocation.}
Let $\mathcal{D}_{\mathrm{train}}$ denote the selected
C4 training corpus before client allocation, and let
$\{\mathcal{D}_c\}_{c=1}^{C}$ be its source groups.
We partition these documents among $N=20$ clients.
For each group $c$, we independently sample
\begin{equation}
	\boldsymbol{p}_c
	\sim
	\operatorname{Dirichlet}
	\left(
	\alpha_{\mathrm{C4}}\boldsymbol{1}_N
	\right),
	\qquad
	\alpha_{\mathrm{C4}}=0.5,
	\label{eq:c4_dirichlet}
\end{equation}
where $p_{c,i}$ specifies the target fraction of group
$c$ assigned to client $i$.
Smaller concentration parameters produce more uneven
allocations of each source group across clients.

To convert these proportions into integer document counts,
let $M_c=|\mathcal{D}_c|$ and initialize
\begin{equation}
	n_{c,i}=\left\lfloor M_c p_{c,i}\right\rfloor.
\end{equation}
The remaining
$M_c-\sum_{i=1}^{N}n_{c,i}$ documents are assigned,
one per client, to those with the largest fractional
remainders $M_cp_{c,i}-n_{c,i}$.
Ties are resolved by client index.
We randomly shuffle the documents within each group and
allocate consecutive blocks of sizes
$n_{c,1},\ldots,n_{c,N}$ to the clients.
The resulting client datasets satisfy
\begin{equation}
	\mathcal{D}_{\mathrm{train}}
	=
	\bigcup_{i=1}^{N}\mathcal{D}_i,
	\qquad
	\mathcal{D}_i\cap\mathcal{D}_j=\varnothing
	\quad \text{for } i\neq j.
\end{equation}
Thus, every training document belongs to exactly one client.

\paragraph{Client sizes and local sampling.}
This procedure allows both source composition and dataset
size to vary across clients.
We do not rebalance client datasets after allocation,
because doing so would alter the sampled distribution.
If any client receives no documents, we redraw the allocation
using the next values from the same seeded random-number
generator.
All clients participate in each communication round and
perform $K=50$ local steps with batch size $16$.
Each client samples from its own shuffled data stream,
reshuffling and restarting the stream when it is exhausted.
Consequently, clients perform the same number of local
updates even when their dataset sizes differ.

\paragraph{Tokenization and sequence construction.}
Client assignment is performed at the document level,
before tokenization or sequence packing.
Within each client, documents are tokenized using the same
tokenizer, separated by an end-of-sequence token, and packed
into sequences of length $256$.
A final incomplete block is padded, with padding positions
excluded from the language-modeling loss.
Packing is performed separately for each client, so no
training sequence contains documents from different clients.
The source metadata used for partitioning are excluded
from the training text.

\paragraph{Evaluation and reproducibility.}
The validation corpus is kept separate from the training
partition and is identical across methods.
We evaluate all global models on this common validation
corpus, computing perplexity as the exponential of the
mean negative log-likelihood over non-padding target tokens.
For each seed in $\{42,43,44,45,46\}$, we generate one client
partition and reuse it for every compared method.
The selected training documents, tokenizer, document
assignments, and validation corpus are therefore matched
across methods within each run.
We use separate random-number streams for partitioning
and model training so that differences in training code
do not change the client allocation.

\clearpage
\section{Baseline Algorithm}
\label{app:fedlore_baseline}

\begin{algorithm}[H]
	\small
	\caption{Local GaLore in Federated Training}
	\label{alg:local_galore}
	\begin{algorithmic}[1]
		\REQUIRE Global weights $W^0$, learning rate $\eta$, scaling factor $\alpha$, rank $r$,
		decay rates $\beta_1,\beta_2$, numerical constant $\epsilon$,
		communication rounds $T$, local steps $K$, number of clients $N$, weight decay $\lambda=0$
		\ENSURE Trained global model $W^T$
		
		\FOR{$t=0$ to $T-1$}
		\STATE Server broadcasts current global model $W^t$ to all clients
		
		\FOR{each client $i \in \{1,\dots,N\}$ in parallel}
		
		\STATE Compute a round-start gradient using a separate construction mini-batch
		(independent of the subsequent local optimization draws):
		$G_i^t \gets -\nabla_W f_i(W^t;\xi_i^t)
		\in \mathbb{R}^{m\times n}$
		
		\STATE Compute truncated SVD:
		$U,S,V \gets \mathrm{SVD}(G_i^t)$
		
		\STATE Construct local projector:
		$P_i^t \gets U[:,1\!:\!r]
		\in \mathbb{R}^{m\times r}$
		
		\STATE Initialize
		$M_i^{t,0}\gets 0$,
		$V_i^{t,0}\gets 0$,
		$W_i^{t,0}\gets W^t$
		
		\FOR{$k=0$ to $K-1$}
		
		\STATE Sample a fresh mini-batch $\xi_i^{t,k}$ and compute gradient:
		$G_i^{t,k}\gets-\nabla_W f_i(W_i^{t,k};\xi_i^{t,k})
		\in\mathbb{R}^{m\times n}$
		
		\STATE Project gradient to the local low-rank subspace:
		$R_i^{t,k}\gets(P_i^t)^\top G_i^{t,k}
		\in\mathbb{R}^{r\times n}$
		
		\STATE \textit{Adam update in the low-rank space:}
		
		\STATE $M_i^{t,k+1}
		\gets
		\beta_1M_i^{t,k}
		+
		(1-\beta_1)R_i^{t,k}$
		
		\STATE $V_i^{t,k+1}
		\gets
		\beta_2V_i^{t,k}
		+
		(1-\beta_2)(R_i^{t,k}\odot R_i^{t,k})$
		
		\STATE $\widehat{M}_i^{t,k+1}
		\gets
		M_i^{t,k+1}/(1-\beta_1^{k+1})$
		
		\STATE $\widehat{V}_i^{t,k+1}
		\gets
		V_i^{t,k+1}/(1-\beta_2^{k+1})$
		
		\STATE $N_i^{t,k+1}
		\gets
		\widehat{M}_i^{t,k+1}/
		(\sqrt{\widehat{V}_i^{t,k+1}}+\epsilon)$
		
		\STATE Reconstruct the ambient-space low-rank direction:
		$\widehat{G}_i^{t,k}
		\gets
		\alpha P_i^tN_i^{t,k+1}
		\in\mathbb{R}^{m\times n}$
		
		\STATE Update local weights:
		$W_i^{t,k+1}
		\gets
		W_i^{t,k}+\eta\widehat{G}_i^{t,k}-\lambda \eta W_i^{t,k}$
		
		\ENDFOR
		
		\STATE Accumulate  updates:
		$\Delta W_i^t
		\gets
		W_i^{t,K}-
		W_i^{t,0}$
		
		\STATE Client $i$ uploads $(\Delta W_i^t)$ to the server
		
		\ENDFOR
		
		\STATE Aggregate updates in the ambient space:
		
		\STATE
		$W^{t+1}
		\gets
		W^t+
		\frac{1}{N}
		\sum_{i=1}^{N}
		\Delta W_i^t$
		
		\ENDFOR
		
	\end{algorithmic}
\end{algorithm}

\begin{algorithm}[t]
	\small
	\caption{\texttt{FedLore} Training Procedure}
	\label{alg:fedlore2}
	\begin{algorithmic}[1]
		\REQUIRE Global weights $W^0$; learning rate $\eta$;
		scale $\alpha$; rank $r$; Adam coefficients
		$\beta_1,\beta_2$; $\epsilon>0$;
		communication rounds $T$; local steps $K$;
		client set $\mathcal{N}$ with $|\mathcal{N}|=N$;
		weight decay $\lambda=0$.
		\ENSURE Trained global model $W^T$.
		
		\STATE Initialize a shared orthonormal basis
		$P^0\in\mathbb{R}^{m\times r}$
		\STATE Server broadcasts $(W^0,P^0)$ to all clients
		\STATE Each client initializes 
		global-model cache: $\widetilde W_i\gets W^0$
		
		\FOR{$t=0$ to $T-1$}
		\FOR{each client $i\in\mathcal{N}$
			\textbf{in parallel}}
		
		\STATE \texttt{//Low-Rank Model Synchronization}
		\IF{$t>0$}
		\STATE Receive $(P^t,\Delta^{t-1})$
		from the server
		\STATE $\widetilde W_i
		\gets \widetilde W_i
		+\eta\alpha P^{t-1}\Delta^{t-1}$
		\ENDIF
		\STATE $W_i^{t,0}\gets\widetilde W_i$, $M_i^{t,0}\gets 0$,
		$V_i^{t,0}\gets 0$, $\Delta Z_i^t\gets 0$
		
		\STATE \texttt{//Local Projected Adam Updates}
		\FOR{$k=0$ to $K-1$}
		%\STATE Sample a fresh mini-batch
		%	$\xi_i^{t,k}$
		\STATE $G_i^{t,k}\gets
		-\nabla_W f_i(W_i^{t,k};\xi_i^{t,k})
		\in\mathbb{R}^{m\times n}$;\quad $R_i^{t,k}\gets
		(P^t)^\top G_i^{t,k}
		\in\mathbb{R}^{r\times n}$
		
		\STATE $M_i^{t,k+1}\gets
		\beta_1M_i^{t,k}
		+(1-\beta_1)R_i^{t,k}$;\quad $V_i^{t,k+1}\gets
		\beta_2V_i^{t,k}
		+(1-\beta_2)
		(R_i^{t,k}\odot R_i^{t,k})$
		
		\STATE $\widehat M_i^{t,k+1}\gets
		M_i^{t,k+1}/(1-\beta_1^{k+1})$;\quad $\widehat V_i^{t,k+1}\gets
		V_i^{t,k+1}/(1-\beta_2^{k+1})$
		\STATE $N_i^{t,k+1}\gets
		\widehat M_i^{t,k+1}/
		(\sqrt{\widehat V_i^{t,k+1}}+\epsilon)$;\quad $W_i^{t,k+1}\gets
		W_i^{t,k}+\eta\alpha P^tN_i^{t,k+1}$
		\STATE $\Delta Z_i^t\gets
		\Delta Z_i^t+N_i^{t,k+1}$
		\ENDFOR
		
		\STATE \texttt{//Local Candidate Subspace}
		\STATE Reuse the last local gradient
		$G_i^{t,K-1}$;\quad $P_i^t\gets
		\operatorname{TopLeftSVD}_r(G_i^{t,K-1})
		\in\mathbb{R}^{m\times r}$
		\STATE Upload $(\Delta Z_i^t,P_i^t)$
		to the server;\quad Retain $P^t$ for next-round
		model synchronization
		\ENDFOR
		
		\STATE \texttt{//Server-Side Update Aggregation}
		\STATE $\Delta^t\gets
		\frac{1}{N}\sum_{i\in\mathcal{N}}\Delta Z_i^t$;\quad $W^{t+1}\gets
		W^t+\eta\alpha P^t\Delta^t$
		
		\STATE \texttt{//Shared-Subspace Construction}
		\STATE $B^t\gets
		\frac{1}{\sqrt N}
		[P_1^t,\ldots,P_N^t]
		\in\mathbb{R}^{m\times Nr}$;\quad $P^{t+1}\gets
		\operatorname{TopLeftSVD}_r(B^t)
		\in\mathbb{R}^{m\times r}$
		\ENDFOR
	\end{algorithmic}
\end{algorithm}

% SharedBasis(s,m,r):
% Generate A \in R^{m x r} using a fixed pseudorandom generator
% with seed s and i.i.d. standard Gaussian entries.
% Compute a deterministic thin QR factorization A = QR,
% with the diagonal entries of R made positive, and return Q.
% All participants use the same generator and QR convention.

\begin{algorithm}[t]
	\small
	\caption{\texttt{FedLore-Random} Training Procedure}
	\label{alg:fedlore_random}
	\begin{algorithmic}[1]
		\REQUIRE Global weights $W^0\in\mathbb{R}^{m\times n}$;
		learning rate $\eta$; scale $\alpha$;
		rank $1\le r\le m$; Adam coefficients
		$\beta_1,\beta_2$; $\epsilon>0$;
		communication rounds $T$; local steps $K$;
		client set $\mathcal{N}$ with $|\mathcal{N}|=N$;
		weight decay $\lambda=0$.
		\ENSURE Trained global model $W^T$.
		
		\STATE Server samples a random seed $s^0$ and sets
		$P^0\gets\operatorname{SharedBasis}(s^0,m,r)$
		\STATE Server broadcasts $(W^0,s^0)$ to all clients
		\STATE Each client initializes
		$\widetilde W_i\gets W^0$ and reconstructs
		$P^0\gets\operatorname{SharedBasis}(s^0,m,r)$
		
		\FOR{$t=0$ to $T-1$}
		\IF{$t>0$}
		\STATE Server samples a fresh random seed $s^t$,
		independently of the training randomness
		\STATE Server sets
		$P^t\gets\operatorname{SharedBasis}(s^t,m,r)$
		and broadcasts $(s^t,\Delta^{t-1})$
		\ENDIF
		
		\FOR{each client $i\in\mathcal{N}$
			\textbf{in parallel}}
		
		\STATE \texttt{// Low-Rank Model Synchronization}
		\IF{$t>0$}
		\STATE $\widetilde W_i
		\gets\widetilde W_i
		+\eta\alpha P^{t-1}\Delta^{t-1}$
		\STATE Reconstruct the shared random basis:
		$P^t\gets
		\operatorname{SharedBasis}(s^t,m,r)$
		\ENDIF
		\STATE $W_i^{t,0}\gets\widetilde W_i$,
		$M_i^{t,0}\gets0$,
		$V_i^{t,0}\gets0$,
		$\Delta Z_i^t\gets0$
		
		\STATE \texttt{// Local Projected Adam Updates}
		\FOR{$k=0$ to $K-1$}
		\STATE Sample a fresh mini-batch $\xi_i^{t,k}$
		\STATE $G_i^{t,k}\gets
		-\nabla_W f_i(W_i^{t,k};\xi_i^{t,k})$;
		\quad
		$R_i^{t,k}\gets(P^t)^\top G_i^{t,k}$
		
		\STATE $M_i^{t,k+1}\gets
		\beta_1M_i^{t,k}
		+(1-\beta_1)R_i^{t,k}$;
		\quad
		$V_i^{t,k+1}\gets
		\beta_2V_i^{t,k}
		+(1-\beta_2)
		(R_i^{t,k}\odot R_i^{t,k})$
		
		\STATE $\widehat M_i^{t,k+1}\gets
		M_i^{t,k+1}/(1-\beta_1^{k+1})$;
		\quad
		$\widehat V_i^{t,k+1}\gets
		V_i^{t,k+1}/(1-\beta_2^{k+1})$
		
		\STATE $N_i^{t,k+1}\gets
		\widehat M_i^{t,k+1}/
		(\sqrt{\widehat V_i^{t,k+1}}+\epsilon)$;
		\quad
		$W_i^{t,k+1}\gets
		W_i^{t,k}+\eta\alpha P^tN_i^{t,k+1}$
		
		\STATE $\Delta Z_i^t\gets
		\Delta Z_i^t+N_i^{t,k+1}$
		\ENDFOR
		
		\STATE Upload $\Delta Z_i^t$ to the server;
		retain $P^t$ for next-round model synchronization
		\ENDFOR
		
		\STATE \texttt{// Server-Side Update Aggregation}
		\STATE $\Delta^t\gets
		\frac{1}{N}\sum_{i\in\mathcal{N}}\Delta Z_i^t$;
		\quad
		$W^{t+1}\gets W^t+\eta\alpha P^t\Delta^t$
		\ENDFOR
	\end{algorithmic}
\end{algorithm}

\section{Related Work}
\label{sec:related_work}

\paragraph{Communication-efficient federated optimization.}
Communication-efficient federated learning commonly allows clients to perform
multiple local updates between aggregation rounds. FedBCGD reduces the amount
of information transmitted in each round by updating and communicating
parameter blocks, and further incorporates drift control and variance
reduction in its accelerated variant~\citep{liu2024fedbcgd}. This approach
addresses the size of each message. In contrast, \texttt{CTP-FL} communicates
one model-sized vector per participating client and changes where local
gradients are evaluated: clients query the same server-defined predictive
trajectory, so their updates can be averaged without first combining models
that have followed different local trajectories.

\paragraph{Alignment under heterogeneous data.}
Several recent methods study different forms of local--global misalignment.
FedSWA and FedMoSWA use stochastic weight averaging and momentum-based control
to improve generalization under highly heterogeneous data~\citep{liu2025fedswa}.
FedNSAM examines the mismatch between local and global flatness and uses a
global Nesterov direction to improve their consistency~\citep{liu2025fednsam}.
These methods primarily target the properties of the resulting solution,
including flatness and generalization. Our focus is the geometry of gradient
evaluation during a communication round: when all clients evaluate at common
points, averaging their gradients estimates the gradient of the global
objective at those points, irrespective of how different the individual
client gradients are.

\paragraph{Federated adaptive and structured optimizers.}
FedAdamW combines local correction, decoupled weight decay, and aggregation
of second-moment estimates for federated large-model
training~\citep{liu2026fedadamw}. FedMuon exploits matrix orthogonalization
and local--global alignment to improve federated optimization of
matrix-structured parameters~\citep{liu2025fedmuon}. FedPAC identifies
preconditioner drift as a source of instability when local second-order
optimizers induce incompatible client geometries, and proposes
preconditioner alignment and update correction~\citep{liu2026fedpac}.
Unlike these optimizer-specific mechanisms, \texttt{CTP-FL} applies to
stochastic-gradient evaluations without transmitting moments or
preconditioners. Its common trajectory aligns the \emph{locations} of
gradient evaluation rather than optimizer states.

\paragraph{Global flatness and privacy.}
DP-FedPGN encourages globally flat solutions in client-level differentially
private federated learning through a global gradient-norm
penalty~\citep{liu2025dpfedpgn}. Its objective and privacy accounting are
different from ours. We cite it because it likewise illustrates that a
quantity defined by the global objective need not be faithfully represented
by independently optimized local objectives.

\end{document}